\journal{Journal of Multivariate Analysis}
\renewenvironment{proof}{{\textbf{Proof.} }}{\hfill $\Box$ \\}
\theoremstyle{plain}
\newtheorem{theorem}{Theorem}
\newtheorem{lemma}{Lemma}
\newtheorem{corollary}{Corollary}
\theoremstyle{definition}
\newtheorem{definition}{Definition}
\newtheorem{remark}{Remark}
\newtheorem*{conjecture}{Conjecture}
\newtheorem{problem}{Problem}
\newcommand{\conjectureRef}{\hyperlink{hopkinsConjecture}{Low Degree Conjecture} }
\renewcommand\appendixautorefname[1]{}
\newcommand{\X}{\mathbf{X}}
\newcommand{\bA}{\mathbf{A}}
\newcommand{\bE}{\mathbf{E}}
\newcommand{\bI}{I}
\newcommand{\bX}{\mathbf{X}}
\newcommand{\bY}{\mathbf{Y}}
\newcommand{\bu}{\mathbf{u}}
\newcommand{\bv}{\mathbf{v}}
\newcommand{\bg}{\mathbf{g}}
\newcommand{\be}{\mathbf{e}}
\newcommand{\bx}{\mathbf{x}}
\newcommand{\bnu}{\mathbf{\nu}}
\newcommand{\cQ}{\mathcal{H}_0}
\newcommand{\cP}{\mathcal{H}_1}
\newcommand{\cU}{\mathcal{U}}
\newcommand{\cO}{\mathcal{O}}
\newcommand{\cN}{\mathcal{N}}
\newcommand{\cB}{\mathcal{B}}
\newcommand{\BR}{\mathcal{BR}}
\newcommand{\cD}{\mathcal{D}}
\newcommand{\cF}{\mathcal{F}}
\newcommand{\bbE}{\mathbb{E}}
\newcommand{\bbP}{\mathbb{P}}
\newcommand{\bbI}{I}
\newcommand{\Natural}{\mathbb{N}}
\newcommand{\Real}{\mathbb{R}}
\newcommand{\bbS}{\mathbb{S}}
\newcommand{\ainner}{\langle \bu, \bu^* \rangle}
\newcommand{\ainnerdash}{\langle \bu', \bu^* \rangle}
\newcommand{\ainnerhat}{\langle \hat{\bu}, \bu^* \rangle}
\newcommand{\Data}{\mathcal{D}}
\newcommand{\Normal}{\mathcal{N}}
\newcommand{\expectation}{\mathbb{E}}
\newcommand{\probability}{\mathbb{P}}
\newcommand{\projectustar}[1]{\left\langle #1, \bu^* \right\rangle}
\newcommand{\sigmainner}{\sqrt{1 - \ainner^2}}
\newcommand{\inner}[1]{\left\langle #1 \right\rangle}
\newcommand{\average}[1]{\sumin \frac{#1}{n}}
\DeclareMathOperator*{\argmax}{arg\,max}
\newcommand{\correction}{(\bI_d - \bu^* {\bu^*}^\top - \be_2 \be_2^\top)}
\newcommand{\correctiontwo}{(\bI_d - \bu^* {\bu^*}^\top)}
\newcommand{\mynorm}[1]{\left\|#1\right\|_2}
\newcommand{\myonenorm}[1]{\left\|#1\right\|_1}
\newcommand{\opnorm}[1]{\left\|#1\right\|_{op}}
\newcommand{\myabs}[1]{\left|#1\right|}
\newcommand{\oneOrclizNorm}[1]{\left\|#1\right\|_{\psi_1}}
\newcommand{\twoOrclizNorm}[1]{\left\|#1\right\|_{\psi_2}}
\newcommand{\epsilonNet}{\mathcal{N}_{\epsilon, d}}
\newcommand{\sumin}{\sum_{i = 1}^n}
\newcommand{\indicator}{\mathds{1}}
\newcommand{\bigO}[1]{\mathcal{O}\left( #1 \right)}
\newcommand{\smallO}[1]{o\left( #1 \right)}
\newcommand{\bigOs}[1]{\widetilde{\mathcal{O}}\left( #1 \right)}
\newcommand{\bigTheta}[1]{\Theta\left( #1 \right)}
\newcommand{\bigThetas}[1]{\widetilde{\Theta}\left( #1 \right)}
\newcommand{\bigOmega}[1]{\Omega\left( #1 \right)}
\newcommand{\bigOmegas}[1]{\widetilde{\Omega}\left( #1 \right)}
\newcommand{\relutwo}[1]{\max\left\{0, #1\right\}^2}
\newcommand{\ldlr}{\mynorm{L_{d}^{\leq D}}}
\newcommand{\ubar}{\Bar{\bu}}
\newcommand{\uhat}{\hat{\bu}}
\newcommand{\ustar}{\bu^*}
\begin{document}

\begin{frontmatter}

\title{
Recovering Imbalanced Clusters via Gradient-Based Projection Pursuit
}

\author[1]{Martin Eppert \corref{mycorrespondingauthor}}
\author[2]{Satyaki Mukherjee}
\author[1]{Debarghya Ghoshdastidar}

\address[1]{
Technical University of Munich
School of Computation, Information and Technology - I7
Boltzmannstr. 3 
85748 Garching b. München 
Germany}
\address[2]{
National University of Singapore
Level 4, Block S17
10 Lower Kent Ridge Road
Singapore 119076 
}
\cortext[mycorrespondingauthor]{Corresponding author. Email address:\url{martin.eppert@tum.de}}
\begin{abstract}
    Projection Pursuit is a classic exploratory technique for finding interesting projections of a dataset.
    We propose a method for recovering projections containing either Imbalanced Clusters or a Bernoulli-Rademacher distribution using a gradient-based technique to optimize the projection index.
    As sample complexity is a major limiting factor in Projection Pursuit, we analyze our algorithm's sample complexity within a Planted Vector setting where we can observe that Imbalanced Clusters can be recovered more easily than balanced ones.
    Additionally, we give a generalized result that works for a variety of data distributions and projection indices.
    We compare these results to computational lower bounds in the Low-Degree-Polynomial Framework.
    Finally, we experimentally evaluate our method's applicability to real-world data using FashionMNIST and the Human Activity Recognition Dataset, where our algorithm outperforms others when only a few samples are available.
\end{abstract}
\begin{keyword}
Gradient-Based Methods \sep
Projection Pursuit \sep
Optimization \sep
Statistical Computational Gap
\MSC[2020] Primary 62H12 \sep
Secondary 62F12
\end{keyword}
\end{frontmatter}
\section{Introduction}
Projection Pursuit was introduced in \citet{PPFriedmanTukey} as a method for finding maximally interesting projections, as determined by the histogram of the projected data. 
A function known as the projection index is used to assess the interestingness of the data projected onto a subspace.
A projection is then obtained that optimizes the projection index to reveal the structure of the data.
The methodology of projection pursuit has proven itself useful as a sub-procedure in many statistical analyses.
One area of application for projection pursuit is in clustering algorithms, where finding projections that minimize kurtosis or maximize the $\ell_1$-norm can reveal cluster structures in many settings~\cite{Pena2000, davis2021clustering, Pena2001a}.
Projection pursuit is also used for outlier detection~\cite{Pena2001b, Pena2007, nicolaKurtosis}.
Here, finding projections that maximize kurtosis can reveal small clusters that may correspond to outliers.
In other applications, projection pursuit is used to discover non-Gaussian independent components from a linear mixture~\cite{fastICA, fastICAgb}.
This is done by finding non-Gaussian projections of the data, which can be done by maximizing excess kurtosis or using other suitable measures of non-Gaussianity.
Another related application of projection pursuit is the recovery of sparsely used dictionaries.
Here, it is generally assumed that an orthonormal basis exists, and the data will be sparse if represented in this basis.
Finding projections that minimize $\ell_1$-norm~\cite{bai2019subgradient} or maximize kurtosis~\cite{DictionaryLearningKurtosis} can be used to reveal individual vectors in the basis.

There exists a plethora of approaches to optimizing projection indices.
Iterative methods such as gradient descent are commonly used in projection pursuit~\cite{bai2019subgradient, DictionaryLearningKurtosis, QuSW14}.
Originating from Independent Component Analysis, the iterative FastICA algorithm is commonly used to optimize projection indices~\cite{fastICA,fastICAgb}.
The initialization of iterative methods is also essential to optimize their performance.
Methods for initializing projection pursuit algorithms include random directions~\cite{ica_random_init}, normalized samples form the data~\cite{spielman2012exact}, eigenvalues of kurtosis matrices~\cite{KurtosisLoperfido18, Pena2010} and skewness matrices~\cite{nicolaSkewness}.
\citet{Arevalillo2021} also discusses these initializations and their sampling properties.
A different approach to optimizing projection indices, compared to iterative methods, is using properties of higher-order moment matrices to optimize projection indices.
This is typically done by utilizing the eigenvector corresponding to the largest or smallest eigenvalue of the moment matrix. 
There are many different formulations using eigen-/singular-values of third~\cite{nicolaSkewness, LOPERFIDO2015202, Kollo2008} and fourth~\cite{Pena2010, nicolaKurtosis, mao2022optimal} order moment matrices.
Computational complexity is a key consideration from an implementation perspective.
Frequently, Gradient-based methods are favored due to their low computational cost.
In contrast, approaches that rely on the eigenvalues or singular values of moment matrices require the construction of large matrices.
The use of methods like quasi-Newton needs the construction of the Hessian~\cite{Friedman01031987}.
The materialization of these matrices becomes prohibitively expensive in large-dimensional settings.
Another factor increasing the computational complexity of iterative methods is the number of iterations to convergence~\cite{QuSW14}.

Although finding an optimum of the projection index is in many cases theoretically possible, even with a small number of samples (e.g., via brute-force enumeration of projections), it is generally computationally intractable~\cite{mao2022optimal, davis2021clustering}.
This is commonly called a statistical-to-computational gap when a statistical problem is solvable, but all algorithms are computationally intractable.
Assessing the optimal sample complexity that efficient algorithms can achieve in a specific setting is crucial in guiding algorithm development.
Although there is currently no way to show lower bounds for all efficient algorithms, there exist lower bounds for specific classes, such as Low-Degree Polynomials, of algorithms that are conjectured optimal for specific classes of problems~\cite{hopkins2018statistical, kunisky2019notes}.

The planted vector setting~\cite{mao2022optimal, dudeja2024statisticalcomputational, hopkins2016fast} is a well-suited setting for studying the sampling properties of projection pursuit.
The planted vector problem is a specific instance within the broader class of planted problems concerned with recovering a hidden (planted) signal embedded in random noise~\cite{alon1998, hopkins2016fast}.
In this setting, the objective is to identify the planted vector from noisy observations, leveraging the structural properties of the signal and the noise.
In the planted vector setting, projecting the data onto a specific direction —referred to as the signal direction— reveals the hidden signal, while projections onto orthogonal directions yield data that follows a Gaussian distribution.
This setting is widely studied because it models a range of classical statistical problems, such as clustering in Gaussian mixture models, which can be framed as instances of planted vector problems.
Statistical procedures often start by whitening the data to eliminate spurious correlations, a step naturally captured in the planted vector framework, where the data is typically assumed to have unit covariance.

Due to its restrictive nature, the planted vector setting lends itself well to studying lower bounds on the sample complexity of algorithms~\cite{davis2021clustering, QuSW14, mao2022optimal, hopkins2016fast}.
\citet{mao2022optimal} show that recovering a $p$-sparse planted vector in a $d$-dimensional space requires at least $\bigOmegas{d^2 p^2}$ samples for Low-Degree Polynomials to succeed.
\citet{dudeja2024statisticalcomputational} considers the more general case of recovering a planted vector where the first $(k-1)$-moments match a Gaussian distribution.
In this setting, at least $\bigOmegas{d^{k/2}}$ samples are required for Low-Degree Polynomials to succeed.
\citet{diakonikolas2024sumofsquareslowerboundsnongaussian} extend these results to the Sum-Of-Squares framework.
\subsection{Our Contributions}
Although gradient-based algorithms are commonly used in practice, their sampling properties are still difficult to analyze compared to the analysis of spectral methods.
For example, \citet{QuSW14} note in their paper on recovery of planted vectors with gradient-based methods that the sample complexity of their method is significantly lower than their presented bounds.
This motivates us to design a gradient-based projection pursuit algorithm that is easy to analyze in the planted vector setting.
We provide lower bounds on the required sample complexity for this algorithm, which closely match our simulation results and are also competitive with known results in the planted vector setting.
The computational complexity of projection pursuit algorithms is also a major roadblock to their widespread adoption.
The computational complexity of our algorithm scales linearly with $\bigO{n d}$, which is significantly lower than other methods.
This is also significant since we do not need to run more than one gradient step on each sample, avoiding the computational overhead of running many iterations.

We further demonstrate the experimental efficacy of this algorithm in two different settings:
First, we demonstrate the efficacy of the presented algorithm in recovering a $d$-dimensional planted vector containing two Imbalanced Clusters with $p$ denoting the cluster probability.
In this setting, use the projection index ${\phi(x) = \max\{0,x\}^2}$ (ReLU2) for which we prove that $\bigOs{d^2 p^2}$ samples are sufficient for gradient ascent to recover the signal direction.
As cluster imbalance increases, the smaller cluster moves further from the origin while the larger cluster approaches the origin.
We also apply the algorithm to recover a Bernoulli-Rademacher planted vector using kurtosis as a projection index for which $n = \bigOmegas{d^3 p^4}$ are sufficient.

We employ multiple techniques to improve our algorithm's sample complexity and allow a tight analysis of its sample complexity.
First, we propose using fresh mini-batches in each iteration of gradient ascent~\cite{bertsekas1996incremental}.
Suppose we reuse the same dataset for each step of gradient ascent.
In that case, more samples are necessary to ensure that the optimization problem is smooth enough so that gradient ascent does not get stuck in local maxima.
Using mini-batches mitigates this problem but comes at a significant cost, as we require new samples in each iteration.
However, we prove that only a few steps are necessary to converge, and thus, the impact of resampling on the sample complexity is small.

Additionally, we suggest initializing gradient ascent with normalized samples from the dataset, similar to a technique used in \citet{spielman2012exact}. 
Many planted vectors, such as Imbalanced Clusters, have longer tails than the standard Gaussian distribution.
Exploiting this, we demonstrate that with high probability, it is possible to find an initialization closer to the signal direction than what could be found using a random initialization.
As estimating the gradient for a direction closer to the signal directions generally becomes easier, we require fewer samples to estimate the gradient accurately.

To further study the setting of planted Imbalanced Clusters, we study lower bounds on the sample complexity of recovering the planted vector using any polynomial time algorithm.
Our study of the setting where the planted vector contains two clusters, with one being significantly larger than the other, differs from commonly studied planted vector settings in which the planted vector is symmetric, such as the Bernoulli-Rademacher in \citet{mao2022optimal} and \citet{QuSW14}.
We prove a computational lower bound close to the sample complexity required for the gradient-based algorithm.
For this lower bound in the setting of Imbalanced Clusters, we extend \citet{mao2022optimal}, which uses the framework of low-degree-polynomials~\cite{kunisky2019notes, hopkins2018statistical}.
Here, we obtain a lower bound on the sample complexity of $n = \bigOmegas{d^{1.5} p}$.
\citet{dudeja2024statisticalcomputational} prove a similar result, which is not applicable in the setting where $p$ decreases with respect to the problem's dimensionality. 
In contrast, our result captures the setting where $p$ decreases proportionally to the dimension.
While the number of samples required by our method is not optimal, it is sufficiently close.

Finally, to motivate the pursuit of imbalanced projections, we show that by applying our algorithm to the FashionMNIST dataset~\cite{xiao2017fashionmnist} and the Human Activity Recognition dataset~\cite{human_activity_recognition_using_smartphones_240}, we obtain directions that reveal clusters in the data that correspond to their labels.
The utility of these projections is measured by how much information they provide on the class labels.
Especially with the projection index $\phi(x) = \max\{0,x\}^2$, we observe that even with a small number of samples, it is possible to find projections that reveal the class structure of the dataset by separating one class from the others.
\subsection{Document Structure}
In \autoref{setup}, we define the planted vector setting.
\autoref{gbAlgo} describes the proposed algorithm.
\autoref{scBounds} describes a general analysis of the gradient-based algorithm.
\autoref{Relu2Section} and \autoref{KurtosisSection} demonstrate results on imbalanced clusters and Bernoulli-Rademacher planted vectors.
\autoref{ldplb} discusses low-degree lower bounds for imbalanced clusters planted vectors.
\autoref{experiments} contains simulations and experiments on real data. 
Further experiments on synthetic data can be found in \autoref{supplement}.
\section{Main Results}
We use standard asymptotic notation $o(\cdot)$, $\mathcal{O}(\cdot)$, $\Theta(\cdot)$, $\Omega(\cdot)$ and $\widetilde{\mathcal{O}}(\cdot)$, $\widetilde{\Omega}(\cdot)$ which hides logarithmic factors.
$\bbS_{d-1}$ denotes the $d$-dimensional unit sphere.
$\cU(\cdot)$ denotes the uniform distribution and $\cN(\mu, \sigma^2)$ denotes the standard normal distribution with mean $\mu$ and standard deviation $\sigma$.
\subsection{Setup}
\label{setup}
We follow the literature on recovery of planted vectors \cite{mao2022optimal, dudeja2024statisticalcomputational, hopkins2016fast}, which gives a simplified formulation of the data assumption in Projection Pursuit.
Throughout the paper, we use $n$ to indicate the number of samples present and $d$ to indicate the dimensionality of the data.
In the Planted Vector Setting, the model is constructed as follows:
\begin{definition}[Planted Vector Setting]
    \label{planted_vector_setting}
    We say $\bx\sim\cD_\cF$ if, 
    $\bx \sim \mathcal{N}(\bnu \, \ustar, \bI_d - \ustar{\ustar}^\top)$.
    Given the random variable $\nu \sim \cF$ for some distribution $\cF$
    and a fixed but unknown direction $\ustar$.
\end{definition}
The distribution of $\nu$ is generally defined to follow a non-Gaussian distribution with unit variance.
Later, we will consider the setting where $\nu$ follows a distribution containing either two Imbalanced Clusters or a sparse distribution.
If $d$ is large, then if the data is projected in a random direction, it will be approximately Gaussian, but if projected in the direction $\bu^*$, the structure of $\nu$ can be observed.
\subsection{Gradient-Based Algorithm}
\label{gbAlgo}
\vspace{-0.1cm}
This section describes \autoref{twoStepAlgorithm}, a gradient-based algorithm for optimizing differentiable projection indices.
Here, we assume that we are given access to a dataset containing a planted vector as in \autoref{planted_vector_setting}.
The recovery of the signal direction ($\bu^*$) is done by finding an (approximate) solution to the following optimization problem where $\psi$ is the projection index.
\vspace{-0.2cm}
$$\hat{\bu} = \max_{\bu \in \bbS_{d-1}} \sumin \frac{\psi(\inner{\bX_i, \bu})}{n}$$
\vspace{-0.2cm}\\
This will be done by performing gradient ascent using a different projection index $\phi$ using multiple initializations.
Then $\psi$ is used to pick the best direction $\hat{\bu}$.

Two key ideas are used in the algorithm design.
The first idea is to use multiple initializations from the dataset by using $\bu_i = \frac{\bX_i}{\mynorm{\bX}}$ as initialization inspired by \citet{spielman2012exact, QuSW14}
Intuitively, initializing closer to the planted vector allows for a more accurate estimation of the gradient, which allows a decrease in sample complexity.
If the planted vector's distribution has heavier tails than the normal distribution, using normalized samples from the distribution can provide initializations closer to the signal direction than uniformly random samples.
For example if $\bbP\left[\nu = \sqrt{1/p}\right] = p$ then with probability $p$ we have an initialization for which $\ainner \approx \sqrt{1/ (d p)}$.
Instead of choosing $\bu$ uniformly at random, we only have $\ainner \approx \sqrt{1/d}$, which can be much worse in the case when $p$ is sufficiently small.
This method requires using many initializations simultaneously without knowing which initializations are close to the signal direction.
Successfully converged projections are then detected and returned.
This initialization scheme is shown in \autoref{twoStepAlgorithm}.

The second idea is to use minibatches to avoid being stuck in local optima.
As previously noted, this comes at the cost of needing new samples for each gradient ascent step.
Thus, speeding up convergence is necessary to decrease the algorithm's sample complexity.
We do this by using the Riemannian gradient.
$
    \left( \bI_d - \bu \bu^{\top}\right)
    \frac{\partial }{\partial \bu}
    \left(
        \sum_{i = 1}^{n} \frac{\phi\left(\inner{\bX_i, \bu}\right)}{n}
    \right)
$
instead of the gradient itself, which allows us to decrease the number of steps needed to converge.

Using the Riemannian gradient itself also has a downside.
If $\ainner$ becomes sufficiently large, we cannot guarantee that a gradient step does not degrade the current estimate of the signal direction.
Thus, we use a schedule for the learning rate $\eta$, decreasing $\eta$ once we are close to convergence.
The algorithm is presented in \autoref{twoStepAlgorithm}, which calls a subroutine described in \autoref{the_alg}.
\autoref{the_alg} runs Riemannian gradient ascent with a large learning rate $\eta_1$ and then extracts the solution with the largest value for the projection index, ensuring that a good solution is found.
A second run of \autoref{the_alg} with a lower learning rate $\eta_2$ is used to fine-tune the projection to find a close estimate of the signal direction $\bu^*$.
As we use multiple initializations, we end up with multiple estimates of the signal direction. 
Additionally, we are unsure if an estimate may have diverged during gradient ascent.
Thus, we use the second projection index $\psi(\cdot)$ to pick the best estimate of the signal direction.
\vspace{-0.5cm}
\\
\noindent
\begin{minipage}[t]{.51\textwidth}
\begin{algorithm}[H]
    \caption{Two-Step Gradient Ascent Algorithm}
    \label{twoStepAlgorithm}
    \begin{algorithmic}[1]\Function{two\_step\_gradient\_ascent}{$\bX, n, n_{init}, s, \eta_1, \eta_2$} 
        \For{$j = 1 ... n_{init}$}
            \State $\bu_{j} \gets \frac{ \bX_{j} }{ \mynorm{\bX_{j}} }$
        \EndFor
        \State $\hat{\bu} \gets 
        \Call{gradient\_ascent}{
            \{\bX_i\}_{i = n_{init} + 1}^{n_{init} + n s}, 
            \bu, n, 
            \eta_1, 
            s
        }$
        \vspace{0.1cm}
        \State $\hat{\bu} \gets 
        \Call{gradient\_ascent}{
            \{\bX_i\}_{i = n_{init} + n s + 1}^{n_{init} + 2 n s}, 
            \hat{\bu}, n, 
            \eta_2, 
            s
        }$
        \vspace{0.1cm}
        \State \Return $\argmax_{\hat{\bu} \in \{\hat{\bu}_{j} | j \in [n_{init}]\}} \sum_{k = 1}^{n} \frac{\psi(\inner{\bX_k, \hat{\bu}})}{n}$
    \EndFunction
    \end{algorithmic}
\end{algorithm}
\end{minipage}
\begin{minipage}[t]{.49\textwidth}
\begin{algorithm}[H]
    \caption{Gradient Ascent}
    \label{the_alg}
    \begin{algorithmic}[1]
        \Function{gradient\_ascent}{$\bX, \bu, n, \eta, s$} 
            \label{subroutine}
            \For{$i = 0 ... (s-1)$}
                \State Choose $\bar{\bX} \gets \{\bX_k\}_{k = n i}^{n (i + 1)}$
                \For{$j = 1 ... n_{init}$}
                    \State Calculate 
                    \State \hspace{0.25cm} $\bg \gets
                    \left( \bI_d - \bu_{i, j} \bu_{i, j}^{\top}\right)
                    \frac{\partial }{\partial \bu_{i, j}}
                    \left(
                        \sum_{k = 1}^{n} \frac{\phi\left(\inner{\bar{\bX}_k, \bu_{i, j}}\right)}{n}
                    \right)
                    $
                    \State Update $\ubar_{i, j} \gets \bu_{i, j} + \eta \bg$
                    \State Renormalize $\bu_{i+1, j}\gets \frac{ \ubar_{i+1, j} }{ \|\ubar_{i+1, j}\|_2 }$
                \EndFor
            \EndFor
            \For{$j = 1 ... n_{init}$}
                \State $\hat{i} \gets 
                    \argmax_{i \in [s]}
                    \sum_{k = 1}^{n} \frac{\psi(\inner{\bX_k, \bu_{i, j}})}{n}
                $
                \State $\hat{\bu}_j \gets \bu_{\hat{i}, j}$
            \EndFor
            \State \Return $\hat{\bu}$
        \EndFunction
    \end{algorithmic}
\end{algorithm}
\end{minipage} 
\subsection{Sample Complexity Bounds}
\label{scBounds}
Next, we will highlight a method of studying the sample complexity of the Gradient Ascent Subroutine (\autoref{the_alg}) when specified towards a planted vector distribution and a projection index.
We state three assumptions that must be fulfilled by the setting and the projection index to demonstrate convergence.
This analysis can then be applied to both uses of \autoref{the_alg} to complete the analysis of \autoref{twoStepAlgorithm}.
Later, in \autoref{Relu2Section} and \autoref{KurtosisSection}, we show that \autoref{twoStepAlgorithm} can recover planted vectors with close to optimal sample complexity.

\autoref{mainTheorem} gives a convergence result for arbitrary $\phi(\cdot), \psi(\cdot)$ and a planted vector distribution.
To apply \autoref{mainTheorem}, we have to demonstrate the following preconditions hold.

\autoref{assInitialization} of \autoref{mainTheorem} guarantees that at least one initialization is close enough to the signal direction, providing a good starting point for our algorithm.
\autoref{assGradient} ensures that the gradient estimates are sufficiently accurate for each step.
This also ensures that renormalization does not decrease $\ainner$.
Finally, \autoref{assTestability} ensures that the projection index $\psi(\cdot)$ can be used to (sample-)efficiently test if an initialization has converged.
In most cases, choosing $\psi = \phi$ is entirely sufficient, but selecting a convenient $\psi$ can oftentimes drastically ease the analysis.
This is necessary in the last step of the algorithm to select a converged estimate.
In the following we will use $g_{\bu}(\bx) := (\bI - \bu \bu^\top) \frac{\partial \phi(\inner{\bx, \bu})}{\partial \bu}$ for simplicity.
\begin{lemma}
\label{mainTheorem}
Let $\bX \sim \cD^n$.
For $s = \bigOmega{\log(d)}$ steps, $\delta \geq 0$, $1 > b > b-\delta > a > 0$, $\delta > 0$ and $n > 0$, if
\begin{enumerate}
    \item \label{assInitialization}
    Given $\bu_{0, i}$ for $i \in [n_{init}]$ 
    then with probability at least $1 - \smallO{1}$
    $$
        \max_i \inner{\bu_i, \bu^*} \geq a$$
    
    \item \label{assGradient}
    For an arbitrary constant $c_0 > 0$, if $\ainner \in (a, b)$ then
    $$
    \frac{
        \ainner + \eta \inner{\frac{\sumin g_\bu(\bX_i)}{n}, \bu^*}
    }{
        \sqrt{1 + \eta^2 \frac{\sumin g_\bu(\bX_i)}{n}}
    }
    \geq
    (1 + c_0)
    \ainner
    $$
    with probability at least $1 - \bigO{\frac{1}{s}}$.

    \item \label{assTestability}
    There exists a threshold $t$ where for all $\bu \in \bbS_{d-1}$
    if $\inner{\bu^*, \bu} \geq b$ then $\sumin \frac{\psi(\inner{\bX_i, \bu})}{n} \geq t$ 
    and
    if $\ainner \leq b - \delta$ then $\sumin \frac{\psi(\inner{\bX_i, \bu})}{n} \leq t$, with probability at least $1 - \smallO{1}$.
\end{enumerate}

Then, \autoref{the_alg} returns $\hat{\bu}$ such that $\max_{i \in [n_{init}]} \inner{\hat{\bu}_i, \bu^*} \geq b - \delta$ with a total of $\widetilde{\mathcal{O}}(n)$ samples steps with probability at least $1 - \smallO{1}$.
\end{lemma}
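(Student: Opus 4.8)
The plan is to decompose the behaviour of \autoref{the_alg} into three phases matching the three preconditions, and then union-bound the failure probabilities. First, by \autoref{assInitialization}, with probability $1 - \smallO{1}$ there is at least one index $i^\star \in [n_{init}]$ with $\inner{\bu_{0,i^\star}, \bu^*} \geq a$; I will track only this ``good'' initialization and ignore the rest until the final selection step. The core of the argument is to show that this good initialization, once $\inner{\bu_{0,i^\star},\bu^*} \in (a,b)$, makes geometric progress towards $\bu^*$. By \autoref{assGradient}, each gradient step followed by renormalization multiplies $\inner{\bu,\bu^*}$ by a factor of at least $(1+c_0)$ (as long as the inner product stays in $(a,b)$), with failure probability $\bigO{1/s}$ per step. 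Starting from $\inner{\bu_{0,i^\star},\bu^*} \geq a$, a geometric growth by $(1+c_0)$ reaches any fixed target $b < 1$ within $O(\log_{1+c_0}(b/a)) = O(\log d)$ steps, which is why the hypothesis $s = \bigOmega{\log d}$ suffices; here I would absorb the dependence of $a,b$ on $d$ into the polylog factors hidden by $\widetilde{\mathcal{O}}$. The subtlety is that once $\inner{\bu,\bu^*}$ crosses $b$ we no longer control the step via \autoref{assGradient}, so I must argue that the iterate cannot ``escape back'' below $b-\delta$; I will handle this by noting that the final line of \autoref{the_alg} selects, for each $j$, the iterate $\bu_{\hat i, j}$ maximizing the empirical $\psi$-index over all $s$ visited iterates, so it is enough that at least one visited iterate satisfies $\inner{\bu,\bu^*}\geq b$ and that \autoref{assTestability} lets $\psi$ distinguish it from any iterate with $\inner{\bu,\bu^*}\leq b-\delta$.

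Concretely, I would carry out the steps in this order. (1) Condition on the event in \autoref{assInitialization} and fix the good index $i^\star$. (2) Show by induction on $i$ that, on the intersection of the per-step events from \autoref{assGradient}, either some iterate $\bu_{i,i^\star}$ with $i \leq s$ already has $\inner{\bu_{i,i^\star},\bu^*} \geq b$, or else $\inner{\bu_{i,i^\star},\bu^*} \geq \min\{b, (1+c_0)^i a\}$ for all $i \leq s$; since $s = \bigOmega{\log d}$ and $(1+c_0)^{s} a \geq b$ for a suitable constant in the $\Omega$, the first alternative must occur. The per-step failure events number $s$ and each has probability $\bigO{1/s}$, so their union has probability $\smallO{1}$ provided the constant in $s=\bigOmega{\log d}$ is chosen large enough — this is the standard trick of taking $s = C\log d$ with $C$ large so that $s \cdot \bigO{1/s}$ is a small constant, and then noting the statement allows any $s = \bigOmega{\log d}$. (3) Condition on the event in \autoref{assTestability}. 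On this event, the empirical $\psi$-index at the iterate with $\inner{\cdot,\bu^*}\geq b$ is $\geq t$, while every iterate with $\inner{\cdot,\bu^*}\leq b-\delta$ has empirical $\psi$-index $\leq t$; hence the $\argmax$ over $i\in[s]$ in \autoref{the_alg} cannot return an iterate with $\inner{\bu_{\hat i,i^\star},\bu^*}\leq b-\delta$, so $\inner{\hat\bu_{i^\star},\bu^*} \geq b-\delta$. Therefore $\max_{i\in[n_{init}]} \inner{\hat\bu_i,\bu^*} \geq b-\delta$.

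For the sample-complexity bookkeeping: \autoref{the_alg} uses a fresh batch of $n$ samples at each of the $s$ steps, plus $n$ samples for the final $\psi$-based selection, for a total of $(s+1)n = \bigOmegas{n}$, i.e. $\widetilde{\mathcal{O}}(n)$ since $s = \bigOmega{\log d}$ and we may take $s = \widetilde{\mathcal{O}}(1)$. Finally, a union bound over the three failure events — $\smallO{1}$ from \autoref{assInitialization}, $\smallO{1}$ from the $s$-fold union of \autoref{assGradient} events (after fixing the constant in $s$), and $\smallO{1}$ from \autoref{assTestability} — gives overall success probability $1 - \smallO{1}$. I expect the main obstacle to be step (2): making the induction airtight requires carefully handling the boundary where $\inner{\bu,\bu^*}$ first exceeds $b$ — in particular, \autoref{assGradient} only gives the multiplicative bound while $\inner{\bu,\bu^*}\in(a,b)$, so I must phrase the invariant as ``the running maximum of $\inner{\bu_{i,i^\star},\bu^*}$ over $i'\leq i$ is at least $(1+c_0)^{i}a$ until it reaches $b$,'' and then invoke the fact that the algorithm returns the $\psi$-best visited iterate rather than the last one, so we never need to control what happens to iterates after they cross $b$. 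The fresh-samples-per-step design is exactly what makes the per-step events in \autoref{assGradient} independent enough to union-bound cleanly, and I would flag that as the reason minibatching is essential to this proof.
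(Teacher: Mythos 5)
Your proposal is correct and follows essentially the same route as the paper's own proof: fix the good initialization guaranteed by \autoref{assInitialization}, iterate the geometric growth from \autoref{assGradient} for $O(\log_{1+c_0}(b/a)) = O(\log d)$ steps under a union bound, and invoke \autoref{assTestability} to argue the $\psi$-selected iterate satisfies $\inner{\hat\bu,\bu^*}\geq b-\delta$. Your extra care about the induction invariant at the $b$-crossing boundary and your explicit note that $s\cdot\bigO{1/s}$ must be controlled by the constant in $s=\bigOmega{\log d}$ are valid clarifications of points the paper's proof states somewhat loosely.
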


The proof of \autoref{mainTheorem} can be found in \autoref{proofMainTheorem}.
Thus, to analyze the performance of \autoref{twoStepAlgorithm}, we apply \autoref{mainTheorem} once for each execution of \autoref{the_alg}. \subsection{Application of \autoref{mainTheorem} to Imbalanced Clusters}
\label{Relu2Section}
Here, we will focus on a $\cB(p)$ containing two imbalanced clusters with an imbalance parameter $p$.

\begin{definition}[Imbalanced Clusters]
    \label{icDefinition}
    We say $\nu \sim \cB(p)$, with $p \in (0,1)$, if 
    $
        \nu = 
        \begin{cases}
          \sqrt{(1-p) / p}, & \text{with probability } p, \\
          - \sqrt{p / (1-p)}, & \text{with probability } (1-p).
        \end{cases}
    $
\end{definition}
The cluster centers are chosen so that the mean is zero and the variance is one.
Due to the data having unit variance, methods such as PCA cannot recover the planted vector and need Projection Pursuit methods.
Note that for smaller $p$, the first cluster moves further away from the origin, and the second cluster shifts closer to the origin.
This behaves similarly to the Bernoulli-Rademacher distribution.
Here, we will be interested in the parameter $p \in (\frac{1}{\sqrt{d}}, \frac{1}{2})$.
For larger $p > \frac{1}{2}$, the same results follow by symmetry with the notable exception of $p = \frac{1}{2}$ where the clusters are perfectly balanced.
We choose to use the projection index $\phi(x) = \max\{0, x\}^2$.
In \autoref{relu2_sc} we demonstrate bounds on the sample complexity of \autoref{twoStepAlgorithm} using $\phi(x) = \psi(x) = \max\{0, x\}^2$.

\begin{theorem}
    \label{relu2_sc}
    For arbitrary $\beta > 0$ and $p \in \left( \frac{1}{2}, \frac{1}{\sqrt{d}}\right)$ there exist 
$\eta_1 = \bigOmega{\sqrt{d} p}$, $\eta_2 = \bigTheta{1}$, $s = \Theta(\log(d))$, $n_{init} = \Omega\left( 1 / p \right)$ and $n = \bigThetas{d^2 p^2}$ such that for sufficiently large $d$ and sufficiently small $p$, Projection Pursuit using \autoref{twoStepAlgorithm} 
    with $\bX \sim \cD_{\cB(p)}^n$ and a Projection Index $\phi(x) = \max\{0, x\}^2$
    will output $\hat{\bu}$ such that $\inner{\hat{\bu}, \bu^*} \geq 1 - \beta$ 
    with probability at least $1 - \smallO{1}$ utilizing a total of $\bigThetas{d^2 p^2}$ samples.
\end{theorem}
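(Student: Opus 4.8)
The plan is to instantiate \autoref{mainTheorem} twice, once for each call to \autoref{the_alg}, with the choices $\phi = \psi = \max\{0,x\}^2$ and $\nu \sim \cB(p)$. The bulk of the work is verifying the three preconditions with the claimed parameter regimes $\eta_1 = \bigOmega{\sqrt d\, p}$, $\eta_2 = \bigTheta{1}$, $n_{init} = \bigOmega{1/p}$, $n = \bigThetas{d^2 p^2}$ and $s = \Theta(\log d)$. I would set up two thresholds $a < b_1$ for the coarse (large $\eta_1$) phase and then feed $b_1$ as the new ``$a$'' for the fine (small $\eta_2$) phase, with final target $b_2 = 1-\beta$; the value $\delta$ must be chosen so that $b-\delta > a$ in each phase.

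\emph{Precondition \ref{assInitialization} (good initialization).} Here I would exploit that $\cB(p)$ has a heavy upper atom at $\sqrt{(1-p)/p}$. With probability $p$ a given sample $\bX_j$ has $\nu_j = \sqrt{(1-p)/p}$; conditioned on this, the signal component of $\bX_j/\mynorm{\bX_j}$ along $\bu^*$ is roughly $\sqrt{(1-p)/p}/\sqrt{d}$, since $\mynorm{\bX_j}^2 \approx d$ by concentration of the Gaussian bulk. So $\inner{\bu_j,\bu^*} \gtrsim 1/\sqrt{dp}$ for that $j$. Taking $n_{init} = \Omega(1/p)$ independent initializations, at least one lands in the heavy atom with probability $1 - \smallO 1$, giving $a = \bigTheta{1/\sqrt{dp}}$ (up to logs). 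This is exactly the $\sqrt{1/(dp)}$ improvement over the random $\sqrt{1/d}$ mentioned in the discussion of \autoref{twoStepAlgorithm}.

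\emph{Precondition \ref{assGradient} (gradient progress).} This is the crux. For $\phi(x) = \max\{0,x\}^2$ one has $\phi'(x) = 2\max\{0,x\}$, so $g_\bu(\bx) = 2\,\max\{0,\inner{\bx,\bu}\}\,(\bI_d - \bu\bu^\top)\bx$. Writing $\alpha = \ainner$ and decomposing $\bx = \nu\alpha\,\bu^* + (\text{Gaussian part})$, I would compute $\bbE[\inner{g_\bu(\bX),\bu^*}]$ and $\bbE[\mynorm{g_\bu(\bX)}^2]$ as functions of $\alpha$, $p$, and then show the one-step update map $\alpha \mapsto (\alpha + \eta\, \widehat m_1)/\sqrt{1+\eta^2 \widehat m_2}$ multiplies $\alpha$ by at least $(1+c_0)$ for $\alpha\in(a,b)$. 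The signal term in the expectation behaves like $c\cdot\alpha$ for a constant $c$ depending on $p$ (the ReLU kernel picks up the asymmetry of $\cB(p)$, so unlike kurtosis this gives a \emph{first-order} drift), while the norm term is $\bigTheta{1}$ in the orthogonal directions; balancing $\eta$ against these is what forces $\eta_1 = \bigOmega{\sqrt d\, p}$ in the coarse phase and $\eta_2 = \bigTheta{1}$ in the fine phase. The probabilistic part — replacing expectations by empirical averages with failure probability $\bigO{1/s}$ — needs concentration of $\sumin g_\bu(\bX_i)/n$ uniformly is not required (the direction $\bu$ is fixed at each step by fresh-sample independence), so a Bernstein/sub-exponential bound on the two scalar statistics $\inner{\sumin g_\bu(\bX_i)/n,\bu^*}$ and $\mynorm{\sumin g_\bu(\bX_i)/n}$ suffices; tracking the variance of $\max\{0,\inner{\bX,\bu}\}\inner{\bX,\bu^*}$ as $\alpha$ ranges over $(a,b)$ is where the $n = \bigThetas{d^2p^2}$ scaling enters — the hardest case is the smallest $\alpha = a = \bigTheta{1/\sqrt{dp}}$, where the signal-to-noise ratio is weakest and the relative error in estimating the tiny drift $c\alpha$ must still be $o(1)$.

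\emph{Precondition \ref{assTestability} (testability via $\psi$).} With $\psi = \phi$, I would show $\bbE[\max\{0,\inner{\bX,\bu}\}^2]$ is strictly monotone in $\alpha = \ainner$ (it equals a fixed Gaussian baseline plus a positive increasing contribution from the planted atom), so any two levels $b$ and $b-\delta$ are separated by a constant gap depending on $p$; then Bernstein again gives a threshold $t$ that correctly classifies with probability $1-\smallO 1$ for $n = \bigThetas{d^2p^2}$. Finally, chaining the two invocations of \autoref{mainTheorem} — output of the first has $\inner{\hat\bu,\bu^*}\ge b_1$, which is a valid ``$a$'' for the second, whose output has $\inner{\hat\bu,\bu^*}\ge b_2 = 1-\beta$ — and summing the sample budgets ($n_{init}$ initialization samples plus $2ns = \bigThetas{d^2p^2}$ for the two gradient phases) yields the total $\bigThetas{d^2 p^2}$, completing the proof. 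The main obstacle I anticipate is Precondition \ref{assGradient}: getting the expectation computation of the ReLU-weighted gradient right as a function of $(\alpha,p)$, and then showing the empirical version concentrates well enough at the \emph{smallest} admissible $\alpha$, which is what pins down both $\eta_1$ and the $d^2p^2$ sample count.
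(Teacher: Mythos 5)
Your high-level plan — instantiate \autoref{mainTheorem} twice, choose $\phi=\psi=\max\{0,x\}^2$, and verify the three preconditions in each phase — is exactly the route the paper takes, and your initialization and testability arguments match the paper's Lemmas~\ref{relu2Init}, \ref{relu2ScoreConcentration}, and \ref{relu2Testing}.

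The place where your sketch diverges in a consequential way is the drift estimate in Precondition~\ref{assGradient}. You claim the signal component of the expected gradient ``behaves like $c\cdot\alpha$ for a constant $c$ depending on $p$'' (a first-order, linear-in-$\alpha$ drift). The paper's Lemma~\ref{relu2Expectation} instead shows the drift is \emph{piecewise}: in the small-overlap regime $\alpha(\mu_1-\mu_2) \lesssim 1$ (equivalently $\alpha \lesssim \sqrt{p}$, which is where the initialization $\alpha \approx 1/\sqrt{dp}$ lands since $p>1/\sqrt{d}$), one has $\bbE[\inner{g_\bu(\bX),\bu^*}] = \Theta(\alpha^2/\sqrt{p})$ — \emph{quadratic} in $\alpha$ — and only once $\alpha \gtrsim \sqrt{p}$ does it become $\Theta(\alpha)$. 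This quadratic-then-linear structure is not a cosmetic detail: it is precisely what forces the two learning-rate phases. If the drift really were $c\alpha$ with a fixed $c$, a constant $\eta$ would already give multiplicative progress from $\alpha = 1/\sqrt{dp}$, and there would be no reason for $\eta_1 = \Omega(\sqrt{d}p)$. Relatedly, your claim that the orthogonal norm term is ``$\Theta(1)$'' is off by the scaling that actually drives the sample count: the paper bounds the empirical gradient's orthogonal component by $\Theta(\sqrt{d/n})$ (Lemmas~\ref{gradientNormRewrite}, \ref{orthogonalGradientNorm}). It is the balance $\eta_1 \cdot (\text{drift } \alpha^2/\sqrt{p}) \gtrsim \eta_1^2 \cdot d/n$ at the smallest admissible $\alpha$ that simultaneously pins $\eta_1 = \Omega(\sqrt{d}p)$ and $n = \widetilde{\Theta}(d^2p^2)$. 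To close your argument you would need to actually carry out the ReLU-weighted Gaussian integral in Lemma~\ref{relu2Expectation} (the paper's Case~1/Case~2 split) and replace the two heuristic orders of magnitude above with these bounds; as written, the claimed linear drift and $\Theta(1)$ norm do not produce the right parameter dependence.
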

The proof of \autoref{relu2_sc} can be found in \autoref{relu2_sc_proof}.
\subsection{Application of \autoref{mainTheorem} to Bernoulli-Rademacher Planted Vectors}
\label{KurtosisSection}
Other commonly studied settings are the Bernoulli-Rademacher and Bernoulli-Gaussian settings~\cite{mao2022optimal, hopkins2016fast, DictionaryLearningKurtosis}.
These are both sparse distributions, i.e., are $0$ with probability $1-p$, thus are of particular interest in compressed sensing~\cite{spielman2012exact, DictionaryLearningKurtosis}.
We will prove that a Bernoulli-Rademacher planted vector can be recovered using gradient-based techniques.

\begin{definition}[Bernoulli-Rademacher \cite{spielman2012exact}]
    \label{brDefinition}
    We say $\nu \sim \BR(p)$, with $p \in (0,1)$, if
    \begin{equation*}
        \nu = 
        \begin{cases}
          \sqrt{1 / p},&\text{with probability } p / 2, \\
          - \sqrt{1 / p},&\text{with probability } p / 2, \\
          0,&\text{with probability }(1\hspace{-0.1cm}-\hspace{-0.1cm}p).
        \end{cases}
    \end{equation*}
\end{definition}

We demonstrate, that \autoref{twoStepAlgorithm} using the projection index $\phi(x) = x^4$ can recover the planted vector using $n = \widetilde{\mathcal{O}}(d^3 p^4)$ samples.

\begin{theorem}
    \label{kurtosis_sc}
    For arbitrary $\beta > 0$ there exist 
$\eta_1 = \bigOmega{d p^2}, \eta_2 = \bigTheta{1}, s = \bigOmega{\log(d)}, n_{init} = \Theta\left( 1 / p \right)$ and ${n = \bigThetas{d^3 p^4}}$ such that for sufficiently large $d > 0$ and sufficiently small $\frac{1}{3} > p > 0$, Projection Pursuit using \autoref{twoStepAlgorithm} 
    with $\bX \sim \cD_{\BR(p)}^n$, $\phi(x) = x^4$ and $\psi(x) = -|x|$
    will output $\hat{\bu}$ such that $\inner{\hat{\bu}, \bu^*} \geq 1 - \beta$ 
    with probability at least $1 - \smallO{1}$ utilizing a total of $\bigThetas{d^3 p^4}$ samples.
\end{theorem}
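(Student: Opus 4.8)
The plan is to run the two-stage argument already used for \autoref{relu2_sc}: verify the three preconditions of \autoref{mainTheorem} and apply it once to each of the two invocations of \autoref{the_alg} inside \autoref{twoStepAlgorithm}, now with $\nu\sim\BR(p)$, $\phi(x)=x^4$ and $\psi(x)=-|x|$. Because $\phi$ and $\psi$ are even, the iteration only identifies the line $\pm\ustar$; replacing $\ainner$ by $|\ainner|$ throughout is legitimate and makes \autoref{mainTheorem} applicable verbatim, so the conclusion is read as $|\inner{\hat\bu,\ustar}|\ge 1-\beta$. The computational core is the population Riemannian gradient. Writing $\bx=\nu\ustar+\bz$ with $\bz\sim\cN(0,\bI_d-\ustar{\ustar}^\top)$ and, for $\alpha:=\ainner\ge 0$, setting $w:=\inner{\bz,\bu}\sim\cN(0,1-\alpha^2)$ so that $\inner{\bx,\bu}=\nu\alpha+w$, and using $\phi'(x)=4x^3$ together with $\bbE\nu=\bbE\nu^3=0$, $\bbE\nu^2=1$, $\bbE\nu^4=1/p$, $\bbE\nu^6=1/p^2$, a direct moment expansion gives
\[
\bbE\!\left[g_{\bu}(\bx)\right]=4\,\alpha^{3}\!\left(\tfrac1p-3\right)\sqrt{1-\alpha^{2}}\;\widetilde{\bu},
\qquad
\inner{\bbE[g_{\bu}(\bx)],\ustar}=4\,\alpha^{3}\!\left(\tfrac1p-3\right)\!\left(1-\alpha^{2}\right),
\]
where $\widetilde{\bu}$ is the unit vector along $(\bI_d-\bu\bu^\top)\ustar$. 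For $p<\tfrac13$ the excess kurtosis $\tfrac1p-3$ is positive and plays exactly the role the positive mean of $\max\{0,x\}^2$ played in \autoref{relu2_sc}: the drift towards $\ustar$ is strictly positive except at $\alpha\in\{0,1\}$ and is stronger the smaller $p$ is.

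\textbf{\autoref{assInitialization} and \autoref{assTestability}.} Each initialization is $\bu_j=\bX_j/\mynorm{\bX_j}$; since $\inner{\bX_j,\ustar}=\nu_j$ exactly and $\mynorm{\bX_j}=\sqrt d\,(1+\smallO1)$ with high probability, and since $\nu_j=+\sqrt{1/p}$ with probability $p/2$, among $n_{init}=\Theta(1/p)$ initializations at least one has $\inner{\bu_j,\ustar}\ge a$ with $a=\bigThetas{1/\sqrt{pd}}$ with probability $1-\smallO1$; this heavy-tailed gain over the $1/\sqrt d$ of a uniformly random start is precisely what keeps the sample complexity at $\bigThetas{d^{3}p^{4}}$ rather than a larger power. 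For \autoref{assTestability}, $\bbE|\inner{\bx,\bu}|=(1-p)\,\bbE|w|+p\,\bbE\bigl|\tfrac{\alpha}{\sqrt p}+w\bigr|$ equals $\sqrt{2/\pi}$ at $\alpha=0$, is of order $\sqrt p$ as $\alpha\to1$, and is strictly decreasing in $\alpha$ on any fixed compact subinterval of $(0,1)$ once $p$ is small; since $\inner{\bx,\bu}$ is sub-Gaussian with variance proxy $\bigO{1/p}$, truncating the rare spike and running an $\epsilon$-net over $\bbS_{d-1}$ shows $\tfrac1n\sum_i|\inner{\bX_i,\bu}|$ concentrates uniformly, so any threshold $t$ strictly between the values at $b$ and at $b-\delta$ works and the final $\argmax$ discards diverged runs.

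\textbf{\autoref{assGradient}.} This is the technically most involved precondition. With $\overline g:=\tfrac1n\sum_i g_{\bu}(\bX_i)=\bbE[g_{\bu}(\bx)]+\Delta$ (so $\overline g\perp\bu$), the renormalized step satisfies
\[
\inner{\bu',\ustar}=\frac{\alpha+\eta\bigl(4\alpha^{3}(\tfrac1p-3)(1-\alpha^{2})+\inner{\Delta,\ustar}\bigr)}{\sqrt{1+\eta^{2}\mynorm{\overline g}^{2}}},
\]
and optimizing over $\eta$ the best attainable value is $\sqrt{\alpha^{2}+(\inner{\overline g,\ustar}/\mynorm{\overline g})^{2}}$; the fixed rates of the statement are chosen close enough to this optimum that $(1+c_0)$ progress holds on the relevant ranges provided $\mynorm{\Delta}\lesssim\alpha^{2}/p$ (the fluctuation of $\overline g$ along $\ustar$ is automatically smaller). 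Since $\mynorm{\Delta}\asymp\sqrt{d/n}$, imposing $\sqrt{d/n}\lesssim\alpha^{2}/p\asymp 1/(p^{2}d)$ at the smallest scale $\alpha\asymp 1/\sqrt{pd}$ is exactly $n=\bigOmegas{d^{3}p^{4}}$. The two stages match the two regimes of this balance: $\eta_1=\bigOmega{dp^{2}}$ is the optimal step when the deviation dominates the drift (small $\alpha$) and carries $|\ainner|$ past this noise barrier, while the smaller $\eta_2=\bigTheta1$ handles the signal-dominated regime near $\ustar$, where the drift $\propto\sqrt{1-\alpha^{2}}$ shrinks and a coarser step would overshoot. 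Finally we apply \autoref{mainTheorem} to the first call with $(a,b)=(\bigThetas{1/\sqrt{pd}},b_1)$ and to the second with $(a,b)=(b_1-\delta_1,\,1-\beta+\delta_2)$, choosing $c_0,\delta_1,\delta_2,b_1$ in terms of $\beta$ and taking $p$ small enough that the residual $\bigO{p}$ error of the final stage is below $\beta$; a union bound over the two $1-\smallO1$ events and the $s=\Theta(\log d)$ per-step events of \autoref{assGradient} gives $|\inner{\hat\bu,\ustar}|\ge1-\beta$ with probability $1-\smallO1$ and $\bigThetas{d^{3}p^{4}}$ samples in total.

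\textbf{Main obstacle.} The one genuinely delicate ingredient is the uniform fourth-moment concentration $\sup_{\bu}\mynorm{\tfrac1n\sum_i g_{\bu}(\bX_i)-\bbE[g_{\bu}(\bx)]}=\bigOs{\sqrt{d/n}}$: the summands $4\inner{\bX_i,\bu}^{3}(\bI_d-\bu\bu^\top)\bX_i$ are products of a cubic and a linear form in a Gaussian plus the rare ($\bbP=p$) but large ($|\nu|=1/\sqrt p$) spike, so the argument must truncate the spike, control the resulting sub-Weibull tails, and absorb the $\epsilon$-net over the sphere into only logarithmic factors — the degree-$4$ analogue of the estimate in \autoref{relu2_sc}, which is precisely why the sample complexity worsens from $\bigThetas{d^{2}p^{2}}$ to $\bigThetas{d^{3}p^{4}}$.
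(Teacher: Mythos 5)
Your proposal is correct and follows essentially the same route as the paper: you compute the same population Riemannian gradient $\bbE[\inner{g_\bu(\bx),\ustar}]=4\alpha^3(1-\alpha^2)(1/p-3)$ (the paper's \autoref{kurtosisExpectation}), obtain the same initialization scale $a=\widetilde\Theta(1/\sqrt{pd})$ from the heavy-tailed spike, use the even projection index $\psi(x)=-|x|$ with an $\epsilon$-net for testability exactly as in \autoref{absScoreConcentration}/\autoref{absTesting}, and apply \autoref{mainTheorem} twice with the two learning-rate regimes $\eta_1=\Omega(dp^2)$, $\eta_2=\Theta(1)$. Your derivation of the optimal step and the balance $\sqrt{d/n}\lesssim\alpha^2/p$ at $\alpha\asymp1/\sqrt{pd}$, giving $n=\widetilde\Omega(d^3p^4)$, matches the paper's combination of \autoref{kurtosisConcentration}, \autoref{KurtosisGN} and \autoref{renormalization_helper1}/\autoref{renormalization_helper2}.

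One small inaccuracy in the "main obstacle" paragraph: you frame the hard step as a \emph{uniform} bound $\sup_{\bu}\mynorm{\tfrac1n\sum_i g_{\bu}(\bX_i)-\bbE g_{\bu}(\bx)}=\widetilde\cO(\sqrt{d/n})$ over the sphere. Because the algorithm draws a fresh mini-batch at every step, the paper only needs a \emph{pointwise} concentration bound at the current $\bu$ (union-bounded over $s=\Theta(\log d)$ steps) — the $\epsilon$-net covering is used only for the testability statement with $\psi$, not for the gradient. Moreover, the deviation of the orthogonal component is not plain $\sqrt{d/n}$ but carries an $\alpha$-dependent factor $\sqrt{1+\alpha^6/p^2}$ (paper's \autoref{KurtosisGN} and line \eqref{ApplylemGradientBound}); at the binding scale $\alpha\asymp1/\sqrt{pd}$ this factor is $O(1)$, so your final scaling $n=\widetilde\Theta(d^3p^4)$ is correct, but the statement as written would not hold uniformly for larger $\alpha$. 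Neither point changes the conclusion; they are refinements of the argument's bookkeeping rather than genuine gaps.
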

The proof can be found in \autoref{kurtosisProofs}. 
\section{Statistical Computational Lower Bounds of the Planted Vector Setting}
\label{ldplb}
Here, we study whether gradient-based methods are optimal in the sense of matching computational lower bounds.
For this, we compare the sample complexity of gradient-based methods to computational lower bounds, which assess the minimum sample complexity required for any computationally efficient algorithm (i.e., computable in polynomial time) to recover the planted vector, as defined in \autoref{planted_vector_setting}.
As this is not tractable for such a general class of algorithm, there have been rigorous results in more limited settings such as lower bounds for the statistical query model~\cite{kearnsStatisticalQuery1993}, sum of squares hierarchies~\cite{sumofsqares} and the Low Degree Polynomial Framework~\cite{kunisky2019notes, hopkins2018statistical}.
Here, we will focus on the framework of Low Degree Polynomials.

Generally, the Low-Degree Polynomial Framework uses Low-Degree Polynomials as a surrogate for efficiently computable algorithms to determine whether an efficient algorithm exists for deciding a hypothesis testing problem.
We will utilize this to obtain lower bounds on the sample complexity of efficiently computable tests.
Bounds can be obtained using the optimality of the likelihood ratio test~\cite{kunisky2019notes}.
Let $L := \frac{d\cP}{d\cQ}$ be the likelihood ratio.
\citet{Neyman1992} shows that thresholding the likelihood ratio $L$ is an optimal test, thus allowing reasoning about computational lower bounds.
The Low Degree Polynomial Framework focuses on the degree-$D$ likelihood ratio $L_d^{\leq D}$ which is defined as the likelihood ratio projected onto the subspace of polynomials of degree at most $D$, where $D$ is low, i.e., logarithmic in the size of the problem.
By demonstrating that a likelihood ratio test using $L_d^{\leq D}$ fails, we can prove that no polynomial of degree $\leq D$ can be used to construct a test to distinguish $\cP$ and $\cQ$.
This is summarized in the following conjecture.
\begin{conjecture}[Low Degree Conjecture \cite{hopkins2018statistical}]
    \hypertarget{hopkinsConjecture}
    For "sufficiently nice" sequences of probability measures $\cQ$ and $\cP$, if there exists $\epsilon > 0$ and degree $D \geq \log(d)^{1+\epsilon}$ for which $\|L_d^{\leq D}\|$ remains bounded as $d \to \infty$, then there is no polynomial-time algorithm $f$ for which if $\bX \sim \cQ$ then $f(\bX) = \cQ$ and if $\bX \sim \cP$ then $f(\bX) = \cP$ with high probability. 
\end{conjecture}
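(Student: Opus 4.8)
This statement is a \emph{conjecture}, and the honest plan is not to settle it outright — a full proof would entail complexity-theoretic separations well beyond reach — but to lay out the program by which one accumulates evidence for it and ties it to hardness frameworks in which unconditional arguments are possible. The conjecture asserts that boundedness of $\ldlr$ at degree $D \geq \log(d)^{1+\epsilon}$ obstructs \emph{every} polynomial-time distinguisher between $\cQ$ and $\cP$. Since ``polynomial-time algorithm'' is not an object one can reason about combinatorially, the realistic target decomposes into: (i) make the low-degree obstruction precise as a variational statement; (ii) show it implies lower bounds for every algorithm class admitting unconditional analysis; and (iii) isolate exactly the remaining gap — the step that would require a genuine breakthrough.

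First I would pin down the variational content. Expanding $L = \diff\cP/\diff\cQ$ in an orthonormal polynomial basis for $L^2(\cQ)$ gives $\ldlr = \sup\{ \bbE_{\cP}[f] : \deg f \leq D,\ \bbE_{\cQ}[f^2] = 1\}$, so $\ldlr = \bigO{1}$ is precisely the statement that no degree-$\leq D$ test statistic separates the two measures by more than a constant multiple of its fluctuations; conversely, if the supremum diverges then thresholding the optimizing polynomial is a successful test. This step is a routine Hilbert-space computation and is not the difficulty.

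Second, I would translate the low-degree obstruction into unconditional lower bounds for concrete algorithm families, which is where the believable content lives: spectral methods, since entrywise-polynomial statistics of the sample Gram matrix and empirical moment tensors have bounded degree and therefore inherit the obstruction; approximate message passing, via its state-evolution fixed points; local and bounded-depth circuits; the statistical-query model, through the known correspondences between SQ dimension and low-degree norms; and the sum-of-squares hierarchy at slowly growing degree, by lifting the optimal low-degree functional to a pseudo-expectation via pseudo-calibration. For the specific pair relevant to this paper — $\cD_{\cB(p)}$ against the i.i.d.\ Gaussian null — one would additionally carry out the explicit computation of $\ldlr$ (this is exactly the content of \autoref{ldplb}) and check that the resulting threshold $n = \bigOmegas{d^{1.5}p}$ is consistent with, and not beaten by, any known efficient procedure, including the gradient-based algorithm of \autoref{relu2_sc}.

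The main obstacle is the third step, for which no route is known: ruling out \emph{arbitrary} polynomial-time distinguishers — adaptive, iterative, non-spectral algorithms not naturally expressible as low-degree polynomials. A genuine proof would need either a general simulation theorem converting any poly-time test into a degree-$\polylog$ polynomial of comparable advantage (which is false without extra hypotheses and, in a strong enough form, would separate complexity classes), or a bespoke argument exploiting the planted-vector structure to dispatch all algorithms at once (itself a major open problem). The practical upshot — and what the paper does — is to treat the conjecture as an assumption: verify its hypothesis $\ldlr = \bigO{1}$ below the conjectured threshold for the Imbalanced-Clusters problem and invoke the conjecture, rather than prove it.
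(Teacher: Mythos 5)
This statement is a conjecture cited from the literature; the paper gives no proof and simply invokes it as an assumption after verifying its hypothesis for the Imbalanced-Clusters instance. You correctly recognize this and your account of why no proof is attempted — and of how the conjecture is used — matches the paper's treatment exactly.
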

\subsection{Planted Vectors with Imbalanced Clusters}
To our knowledge, computational lower bounds have not been studied for the planted vector in the Imbalanced Clusters setting.
To obtain lower bounds on the sample complexity needed to recover a close estimate of the planted vector in polynomial time, we follow a three-step procedure following the method used in \citet{mao2022optimal}.
First, we formulate a hypothesis testing problem in \autoref{testing}, which tests between a Gaussian distribution and the Planted Vector distribution as defined in \autoref{planted_vector_setting}.
Then, we demonstrate computational lower bounds on \autoref{testing} in the Low Degree Polynomial Framework. 
Finally, we extend the computational lower bounds to the estimation problem of finding a direction $\hat{\bu}$ close to the signal direction $\bu^*$ such that $\ainnerhat \geq 1-\beta$.
This is done by reducing the estimation problem to \autoref{testing}.

\begin{problem}
\label{testing}
Let $\nu$ be a distribution over $\Real$.
Define the following null and planted distributions: 
\begin{itemize}
\item 
Under $\cQ$, observe i.i.d.\ samples $\bX_1,\ldots, \bX_n \sim \cN(0,\bI_d)^n$. 
\item 
Under $\cP$, first draw $\bu^*$ uniformly from $\bbS_{d-1}$ and i.i.d. $\bnu_1, \ldots, \bnu_n$. Conditional on $\bu^*$ and $\{\bx_i\}$, draw independent samples $\bX_1,\ldots,\bX_n \in \Real^d$ where $\bX_i \sim \cN(\bnu_i \bu^*, \bI_n - \bu^*{\bu^*}^\top)$.
Note that this is equivalent to $\cD_{\cF}$ in \autoref{planted_vector_setting}.
\end{itemize}
Suppose that we observe the matrix $\bX \in \Real^{n \times d}$ with rows $\bX_1^\top, \dots, \bX_n^\top$. We aim to test between the hypotheses $\cQ$ and $\cP$. 
\end{problem}

In the following, we will show evidence of a computational statistical gap in the Planted Vector Setting \autoref{planted_vector_setting}.
Specifically, we will demonstrate that for $n = \widetilde{\cO}(d^{1.5} p)$ the Low Degree Likelihood ratio stays bounded and thus, according to the \conjectureRef\hspace{-0.1cm}, no polynomial time algorithm can test \autoref{testing}.
\begin{theorem}
\label{thmLDPLB}
For an instance of \autoref{testing} with $\bnu \sim \cB(p)^n$ and $n = \widetilde{\cO}(d^{1.5} p)$ the Low Degree Likelihood Ratio stays bounded for degree $D = \log(d)^{1+\epsilon}$ for $\epsilon > 0$.
$${\ldlr^2} \leq 2.$$
\end{theorem}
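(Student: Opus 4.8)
The plan is to carry out the moment computation of \citet{mao2022optimal} for the prior $\cB(p)$, whose distinguishing feature is that it matches the standard Gaussian in mean and variance but has a large third moment $m_3:=\bbE_{\nu\sim\cB(p)}[\nu^3]=(1-2p)/\sqrt{p(1-p)}$, so that $m_3^2=\Theta(1/p)$ for small $p$. First I would reduce $\ldlr^2$ to an explicit sum. Writing $\{H_\alpha\}$ for the Hermite polynomials on $\Real^{n\times d}$ orthonormal under $\cQ$, indexed by $\alpha=(\alpha^{(1)},\dots,\alpha^{(n)})$ with $\alpha^{(i)}\in\Natural^d$ the block attached to sample $i$, one has $\ldlr^2=\sum_{|\alpha|\le D}\bbE_\cP[H_\alpha(\bX)]^2$ (this is the definition of the degree-$D$ likelihood-ratio norm, and is well posed even when $\cP$ is singular w.r.t.\ $\cQ$, a point I would flag). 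Under $\cP$ the $\bX_i$ are conditionally i.i.d.\ given $\bu^*$, and a single $\bX_i\sim\cN(\nu_i\bu^*,\bI_d-\bu^*{\bu^*}^\top)$ satisfies $\bbE[H_{\alpha^{(i)}}(\bX_i)\mid\bu^*]=c_{|\alpha^{(i)}|}\,(\alpha^{(i)}!)^{-1/2}\,(\bu^*)^{\alpha^{(i)}}$, where $c_k:=\bbE_{\nu\sim\cB(p)}[\mathrm{He}_k(\nu)]$ ($\mathrm{He}_k$ the probabilists' Hermite polynomial) and $(\bu^*)^{\alpha}$ is multi-index notation; this follows by matching exponential generating functions, using that the covariance $\bI_d-\bu^*{\bu^*}^\top$ only removes the component along $\bu^*$. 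Taking the product over $i$, the expectation over $\bu^*\sim\cU(\bbS_{d-1})$, and squaring gives
$$\ldlr^2=\sum_{|\alpha|\le D}\Big(\prod_{i=1}^n\frac{c_{|\alpha^{(i)}|}^2}{\alpha^{(i)}!}\Big)\Big(\bbE_{\bu\sim\cU(\bbS_{d-1})}\big[\bu^{\gamma}\big]\Big)^2,\qquad\gamma:=\sum_{i=1}^n\alpha^{(i)}.$$

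Next I would record two elementary facts. Because $m_0=1$, $m_1=0$ and $m_2=1$, we have $c_0=1$ and $c_1=c_2=0$, so in every surviving term each block $\alpha^{(i)}$ is either $0$ or has $|\alpha^{(i)}|\ge3$, and $c_3=m_3$ with $c_3^2=\Theta(1/p)$. Using $|\nu|\le\sqrt{(1-p)/p}$ almost surely while the heavy atom carries mass only $p$, one gets $|c_k|\le C\,p^{1-k/2}$ for $3\le k\le D$ (the Hermite expansion of $c_k$ is governed by its top-order term $m_k\le 2p^{1-k/2}$, as $D=\log(d)^{1+\epsilon}$ is small, in the relevant regime for $p$). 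Second, $\bbE_\bu[\bu^\gamma]=0$ unless $\gamma=2\delta$ is entrywise even, in which case $\bbE_\bu[\bu^{2\delta}]=\big(\prod_j(2\delta_j-1)!!\big)/\big(\prod_{\ell=0}^{|\delta|-1}(d+2\ell)\big)$, which for $|\delta|\le D/2=o(\sqrt d)$ equals $(1+o(1))\,d^{-|\delta|}\prod_j(2\delta_j-1)!!$.

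Then comes the combinatorial reduction. I would group the sum by $\ell$ = number of nonzero blocks (a factor $\binom n\ell\le n^\ell/\ell!$) and, for those blocks $\beta_1,\dots,\beta_\ell$ (each of degree $\ge3$, total $\le D$), by the pattern of shared coordinates forced on them by ``$\gamma$ entrywise even.'' The dominant contribution is $\ell=2r$ with the blocks forming $r$ pairs of \emph{identical} indices on pairwise-disjoint supports; summing each pair over its degree $k\ge3$ and over the $\Theta(d^k/k!)$ admissible supports, and over the $(2r-1)!!$ matchings of the $2r$ samples, collapses (by the two facts above) to
$$\sum_{r\ge0}\frac1{r!}\Big(\tfrac{n^2}{2}\sum_{k\ge3}\tfrac{c_k^4}{k!\,d^k}(1+o(1))\Big)^r=\exp\!\Big(\Theta\big(\tfrac{n^2c_3^4}{d^3}\big)\Big)=\exp\!\Big(\widetilde\Theta\big(\tfrac{n^2}{p^2d^3}\big)\Big),$$
since for $p\ge d^{-1/2}$ one has $p^{6-2k}d^{3-k}\le1$ for all $k\ge3$, so the $k=3$ term governs the inner sum. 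For $n=\widetilde{\cO}(d^{1.5}p)$ with a small enough implied constant the exponent is at most $\ln 2-o(1)$, so this dominant part (which already includes the trivial term equal to $1$) is at most $2-o(1)$.

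The main obstacle is the remaining bookkeeping: showing that every configuration \emph{not} of that form — blocks of degree $>3$, blocks that do not pair up, or pairs on overlapping supports — contributes only $o(1)$ in total, so that $\ldlr^2\le2$. This reproduces the accounting of \citet{mao2022optimal}: each excess unit of block-degree and each coordinate coincidence costs a factor that is polynomially small in $d$ (via the $k!$ normalization and the sphere-moment estimate), which dominates the number of such configurations because the degree cut-off $D=\log(d)^{1+\epsilon}$ keeps everything polylogarithmic; the only change from the sparse-prior analysis is that the prior now enters through $|c_k|\le Cp^{1-k/2}$. I expect controlling this sum — keeping the combinatorics from overwhelming the $d$-power gains while the moments grow like $p^{1-k/2}$ — to be the technically delicate step.
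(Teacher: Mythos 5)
Your proposal starts from the correct coordinate-level Hermite expansion of $\ldlr^2$, and your formula
$$\ldlr^2=\sum_{|\alpha|\le D}\Bigl(\prod_{i=1}^n\frac{c_{|\alpha^{(i)}|}^2}{\alpha^{(i)}!}\Bigr)\bigl(\bbE_{\bu}[\bu^{\gamma}]\bigr)^2,\qquad\gamma=\sum_i\alpha^{(i)},$$
is right, as is the observation that $c_1=c_2=0$ forces every nonzero block to have degree $\ge3$. But from there you take a genuinely different and substantially more laborious route than the paper. The paper invokes \autoref{formula} from \citet{mao2022optimal}, which already carries out the replica trick $\bigl(\bbE_\bu[\bu^\gamma]\bigr)^2=\bbE_{\bu,\bu'}[\bu^\gamma(\bu')^\gamma]$ and resums over the coordinate multi-indices at fixed block degrees $\alpha_i=|\alpha^{(i)}|$, producing $\sum_k\bbE[\langle\bu,\bu'\rangle^k]\sum_{\alpha\in\Natural^n,|\alpha|=k}\prod_i(\bbE[\hat h_{\alpha_i}(\nu)])^2$. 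Once you have this form, all that remains is (i) the Hermite-moment bound of \autoref{herm_ic}, (ii) the sphere-moment bound $\bbE[\langle\bu,\bu'\rangle^k]\le(k/d)^{k/2}$ of \autoref{inner-product}, and (iii) the elementary count $|\mathcal A(k,m)|\le n^m k^k$ of \autoref{akm}; combining them and summing a geometric series finishes in a few lines. By contrast, you stay at the coordinate level and must directly classify which tuples of blocks sum to an entrywise-even $\gamma$ and then control every non-dominant configuration (higher block degree, unpaired blocks, overlapping supports). You correctly identify disjoint pairs of identical degree-$3$ blocks as dominant and arrive at the right exponent $\widetilde\Theta(n^2/(p^2d^3))$, but you explicitly defer the hard part, so the argument is not actually complete as written.

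Two smaller points. First, your bound $|c_k|\le Cp^{1-k/2}$ with a universal constant is not right: expanding $\mathrm{He}_k$ in monomials picks up a coefficient that grows like $k!$, and the paper accordingly proves $|\bbE[\hat h_k(\nu)]|\le k^{k/2}p^{1-k/2}$ (i.e.\ $|c_k|\lesssim k!\,p^{1-k/2}$). This is harmless because $k\le D=\log(d)^{1+\epsilon}$ and the resulting $k^{\Theta(k)}$ factors are absorbed by the polylogarithmic slack in $\widetilde{\cO}(d^{1.5}p)$, but you should state it. Second, your worry that $\ldlr^2$ might not be well posed because $\cP$ is singular w.r.t.\ $\cQ$ does not arise: under $\cP$ each $\bX_i$ has a density with respect to Lebesgue measure on $\Real^d$ (it is the marginal of a $d$-dimensional Gaussian with a random mean and rank-$(d-1)$ covariance, smoothed by the uniform $\bu^*$ prior), and in any case the degree-$D$ object $L^{\le D}$ depends only on the joint moments of $\cP$ up to degree $D$, which are finite.
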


Finally, we reduce the estimation problem to \autoref{testing}.
To do this, we have to demonstrate that it is possible to construct a test for \autoref{testing} if we have access to an estimate $\hat{\bu}$ for $\bu^*$ such that $\inner{\hat{\bu}, \bu^*} \geq 1 - \beta$.
\autoref{reduction} shows it is possible to construct such a test for \autoref{testing} if $n = \Omega(d)$.
\begin{corollary}
    \label{reduction}
    For all $\hat{\bu}$ for which $\inner{\hat{\bu}, \bu^*} \geq 1 - \beta$ for sufficiently small $\beta > 0$ and $\frac{1}{\sqrt{d}} \leq p < \frac{1}{2}$.
    Define the test $\Psi$:
    \vspace{-0.1cm}
    \begin{align*}
        \Psi :=
        \begin{cases}
            \cQ & \, \text{if} \, \sumin \frac{\phi(\inner{\bX_i, \hat{\bu}})}{n} < t \\
            \cP &\, \text{if} \, \sumin \frac{\phi(\inner{\bX_i, \hat{\bu}})}{n} \geq t
        \end{cases}
    \end{align*}
    \vspace{-0.1cm}
    With $\phi(x) = \max\{0, x\}^2$.
    There exists a threshold $t$ such that
    \vspace{-0.1cm}
    $$
        \bbP_{\cQ}\left\{ \Psi = \cP \right\}
        +
        \bbP_{\cP}\left\{ \Psi = \cQ \right\}
        \leq 
        \exp \left( - \bigTheta{\frac{n}{d}} \right)
    $$
\end{corollary}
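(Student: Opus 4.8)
The plan is to pick a constant threshold $t \in (1/2, 3/4)$ and show that the statistic $\bar\phi := \sumin \frac{\phi(\inner{\bX_i,\hat{\bu}})}{n}$ (with $\phi(x)=\max\{0,x\}^2$) has mean exactly $1/2$ under $\cQ$ but mean larger than $3/4$ under $\cP$, and then to control its fluctuations on both sides. It is convenient to treat $\hat{\bu}$ as a unit vector independent of $\bX_1,\dots,\bX_n$; in the reduction from the estimation problem this is arranged by sample splitting — estimate $\hat{\bu}$ on one half of the samples, evaluate $\Psi$ on the other half — which changes $n$ only by a constant factor, makes $\ainnerhat \ge 1-\beta$ available on the $\cP$ side, and leaves $\hat{\bu}$ an arbitrary (adversarial) unit vector on the $\cQ$ side.

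First I would compute the relevant one-dimensional marginals. Under $\cQ$, $\inner{\bX_i,\hat{\bu}} \sim \cN(0,1)$, so $\bbE_{\cQ}[\phi(\inner{\bX_i,\hat{\bu}})] = \bbE[\max\{0,Y\}^2] = \tfrac12$ for $Y \sim \cN(0,1)$. Under $\cP$, put $\alpha := \ainnerhat \in [1-\beta,1]$, so $1-\alpha^2 \le 2\beta$, and conditioning on $\nu_i$ gives $\inner{\bX_i,\hat{\bu}}\mid\nu_i \sim \cN(\alpha\nu_i,\,1-\alpha^2)$. For the large cluster value $\nu_i = \sqrt{(1-p)/p}$ the mean $\alpha\nu_i = \Theta(1/\sqrt p)$ dominates the standard deviation $\sqrt{1-\alpha^2} \le \sqrt{2\beta}$ once $\beta$ is a small constant, so a Gaussian tail estimate gives $\bbE[\phi(\inner{\bX_i,\hat{\bu}})\mid\nu_i=\sqrt{(1-p)/p}] \ge \big(1 - e^{-\bigOmega{1/\beta}}\big)\,\alpha^2(1-p)/p$; for the small cluster value $\nu_i = -\sqrt{p/(1-p)}$ the mean is nonpositive and the elementary bound $\max\{0,w-m\}^2 \le \max\{0,w\}^2$ for $m\ge0$ yields $\bbE[\phi(\inner{\bX_i,\hat{\bu}})\mid\nu_i=-\sqrt{p/(1-p)}] \le (1-\alpha^2)/2 \le \beta$. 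Averaging over $\nu_i \sim \cB(p)$ gives $\bbE_{\cP}[\phi(\inner{\bX_i,\hat{\bu}})] \ge \big(1-e^{-\bigOmega{1/\beta}}\big)\alpha^2(1-p) - \beta \ge 1 - \bigO{\beta+p}$, which exceeds $3/4$ once $\beta$ and $p$ are below an absolute constant; fix $t := 3/4$, so that $\bbE_{\cQ}[\bar\phi]$ and $\bbE_{\cP}[\bar\phi]$ are separated from $t$ by $\Theta(1)$ on each side.

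Next I would invoke concentration. Under $\cQ$, $\phi(\inner{\bX_i,\hat{\bu}}) = \max\{0,Y_i\}^2$ with $Y_i$ i.i.d.\ standard Gaussian is sub-exponential, so Bernstein's inequality gives $\bbP_{\cQ}[\bar\phi \ge t] \le \exp(-\bigOmega{n})$. Under $\cP$ the summand is heavier: with probability $p$ it takes a value of order $1/p$, which also sets its variance at $\Theta(1/p)$; applying Bernstein's inequality (variance form, with a mild truncation at scale $\tfrac1p\log n$ costing $\exp(-\bigOmega{n})$) at the constant-order deviation $\bbE_{\cP}[\bar\phi]-t$, which lands in the exponential-tail regime, yields $\bbP_{\cP}[\bar\phi \le t] \le \exp(-\bigOmegas{np})$. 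Since $p \ge 1/\sqrt d$ and $n = \bigOmega{d}$, both $n$ and $np$ are $\bigOmega{n/d}$, so combining the two bounds gives $\bbP_{\cQ}\{\Psi=\cP\} + \bbP_{\cP}\{\Psi=\cQ\} \le \exp\!\left(-\bigTheta{n/d}\right)$ for large $d$. (If one prefers not to split samples, the $\cQ$-side bound is instead obtained by a union bound over an $\epsilon$-net $\epsilonNet$ of $\bbS_{d-1}$ of size $\exp(\bigTheta{d})$, which is exactly the origin of the $n = \bigOmega{d}$ requirement and of the $n/d$ rate.)

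The step I expect to be the main obstacle is the $\cP$-side concentration: the contribution of the rare, far cluster is a spike of height $\sim 1/p$ occurring with probability $p$, so $\bar\phi$ is well-concentrated only once $np$ is large, and one must check carefully both that this cluster's mean contribution is $1 - \bigO{\beta}$ — i.e., that the location $\alpha\nu_i$ is neither inflated nor deflated by the Gaussian part of variance $1-\alpha^2 \le 2\beta$ after $\max\{0,\cdot\}^2$ is applied — and that the Bernstein bookkeeping (variance $\Theta(1/p)$, effective magnitude $\widetilde{\Theta}(1/p)$) places the targeted $\Theta(1)$ deviation in the linear rather than the quadratic regime, producing the $\exp(-\bigOmegas{np})$ bound. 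The $\cQ$-side estimate and the choice of $t$ are comparatively routine.
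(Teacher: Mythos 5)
Your argument follows essentially the same route as the paper: identify that under $\cQ$ the statistic $\bar\phi$ has the same mean ($=1/2$, by rotation invariance) as under $\cP$ at $\ainnerhat = 0$, separate that from the mean under $\cP$ at $\ainnerhat \geq 1-\beta$, and then concentrate. The paper's proof is a one-liner that observes exactly this equality of means and then invokes Lemma~\ref{relu2Testing}, which itself computes the two expectations and applies the uniform-over-the-sphere concentration of Lemma~\ref{relu2ScoreConcentration}; the parenthetical $\epsilon$-net route you sketch at the end is precisely that lemma and is what produces both the $n/d$ rate and the implicit $n = \Omega(d)$ requirement. Your sample-splitting variant is a legitimate alternative way to run the reduction (it costs only a constant factor in $n$ and avoids the net), but it is not what the paper does, and it is also not harmless to discard: the corollary is stated for an arbitrary $\hat{\bu}$ achieving the overlap condition, and under $\cQ$ that $\hat{\bu}$ is a data-dependent unit vector, so the uniform bound is the natural way to make the statement as written go through.

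The one concrete slip is the fixed threshold $t := 3/4$. At $\ainnerhat = 1$ the mean under $\cP$ is exactly $1-p$ (the negative cluster contributes zero after $\max\{0,\cdot\}^2$), which drops below $3/4$ already for $p \geq 1/4$, well inside the allowed range $\tfrac{1}{\sqrt{d}} \leq p < \tfrac{1}{2}$. The threshold must depend on $p$ — e.g., the midpoint $t = \tfrac{3}{4} - \tfrac{p}{2}$ — and the two-sided separation is then $\Theta(1 - 2p)$, so the hidden constants in $\exp(-\Theta(n/d))$ degrade as $p \to 1/2$. To be fair, the paper has the same issue (Lemma~\ref{relu2Testing} only promises a threshold for $p$ below some $\bar p$), so this is really a shared imprecision in the corollary's range of $p$ rather than a flaw unique to your argument. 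The remaining bookkeeping — the bound $\max\{0,w-m\}^2 \le \max\{0,w\}^2$ for the near-origin cluster, the $e^{-\Omega(1/\beta)}$ Gaussian-tail correction for the far cluster, the Bernstein-with-truncation rate $\exp(-\widetilde\Omega(np))$ under $\cP$, and the reduction of both exponents to $\Omega(n/d)$ via $p \geq 1/\sqrt d$ and $n = \Omega(d)$ — is sound.
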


In the regime where $n < d+1$, we refer to \citet{zadik2022latticebased}, demonstrating the statistical impossibility of estimation in this regime.
Thus, combining \autoref{reduction} and \autoref{thmLDPLB} yields the result that if the \conjectureRef is true, no polynomial time can estimate the planted vector.
To our knowledge, currently, no efficient algorithm exists that can recover the signal direction with $n = o(d^2 p^2)$ samples.
\begin{remark}
    \citet{dudeja2024statisticalcomputational} gives results on the failure of Low-Degree Polynomials in the Planted Vector Setting.
    If for $i \in \{1, \ldots, k-1\}$ the moments of $\nu$ and the standard normal distribution match a bound of $n \ll d^{k/2} \lambda^{-2}$, where $\lambda$ is the signal to noise ratio defined as 
    $
        \left|\bbE[\nu^k] - \bbE[Z^k] \right| = \lambda \quad Z \sim \cN(0,1)
    $.
    In the case of $\nu \sim \cB(p)$ for $k = 3$ we obtain $\lambda \geq \frac{2}{\sqrt{p}}$.
    Here, we note that the bound is not applicable to our setting as by choosing $p$ sufficiently small, Assumption 2 cannot be fulfilled anymore.
    Thus, for completeness, we give the same bound of $n = \widetilde{\cO}(d^{1.5} p)$ which is valid for $p \in \left(\frac{1}{\sqrt{p}}, \frac{1}{2} \right)$.
\end{remark}
\subsection{Bernoulli Rademacher Planted Vectors}
The Bernoulli Rademacher setting has been thoroughly studied in \citet{mao2022optimal}.
Here, the failure of Low Degree Polynomials when $n = \widetilde{O}(d^{2} p^2)$ is demonstrated.
This lower bound is known to be tight, as a spectral algorithm can recover a Bernoulli-Rademacher planted vector with $n = \widetilde{\Theta}(d^{2} p^2)$ samples, which is tight, as there exist spectral methods which can recover the planted vector when $n = \widetilde{\Omega}(d^2 p^2)$ samples~\cite{hopkins2016fast, mao2022optimal}.
Our gradient-based method has a sample complexity of $n = \widetilde{\cO}(d^3 p^4)$, which is larger than what can be achieved using spectral methods.
In the case where $p = \frac{1}{\sqrt{d}}$, this matches the bounds obtained using spectral methods.
\begin{remark}
\citet{zadik2022latticebased} gives an algorithm to recover planted vectors using LLL-basis reduction.
This algorithm does not exhibit the statistical-to-computational gap.
This algorithm is only applicable in a very restrictive setting, as it is required that the planted vector can only take on a set of discrete values.
This can be avoided by considering a setting with a small amount of noise.
E.g., considering a hierarchical setting where $\cB(p')$ with $p' \sim \cU([\frac{p}{2}, p])$ is a simple counterexample, in which our analysis still works and where the algorithm discussed in \citet{zadik2022latticebased} fails due to the lack of robustness to noise.
\end{remark}  
\section{Experiments}
\subsection{Experiments with Synthetic Data}
\label{simulations}
In the following, we will validate the findings in \autoref{relu2_sc} and \autoref{kurtosis_sc} by running \autoref{twoStepAlgorithm} on synthetic datasets.
In the Imbalanced Clusters setting, we will be choosing the dimension $d \in \{16, ..., 512\}$ and the cluster imbalance for $p \in \{d^{-0.5}, d^{-0.3}, 0.3\}$ with $n \in \{16, ..., 2048\}$.
The algorithm is executed for $s = 2 \log_2 d$ steps and $\eta_1 = \sqrt{d} p$ and $\eta_2 = 0.5$.
The results are plotted in \autoref{asymptotics_experiment}.
In the Bernoulli-Rademacher setting we will be choosing the dimension $d \in \{16, ..., 256]$ and the cluster imbalance as either $p \in \{d^{-0.5}, d^{-0.5}, 0.3\}$ with $n \in \{16, ..., 4096\}$.
The algorithm is executed for $s = 2 \log_2 d$ steps and $\eta_1 = \sqrt{d} p$ and $\eta_2 = 0.5$.
The results are plotted in \autoref{asymptotics_experiment_kurtosis}.
To evaluate, we plot the average value of $\inner{\hat{\bu}, \bu^*}$ over $30$ independently sampled datasets for each $d, n, p$.
It can be observed that for sufficiently large $d$, the asymptotics closely match the experimental sample complexity.
\begin{figure}[H]
    \vspace{-0.3cm}
    \centering
    \begin{subfigure}[b]{0.8\textwidth}
        \centering
        \vspace{-0.3cm}
        \includegraphics[scale = 0.4]{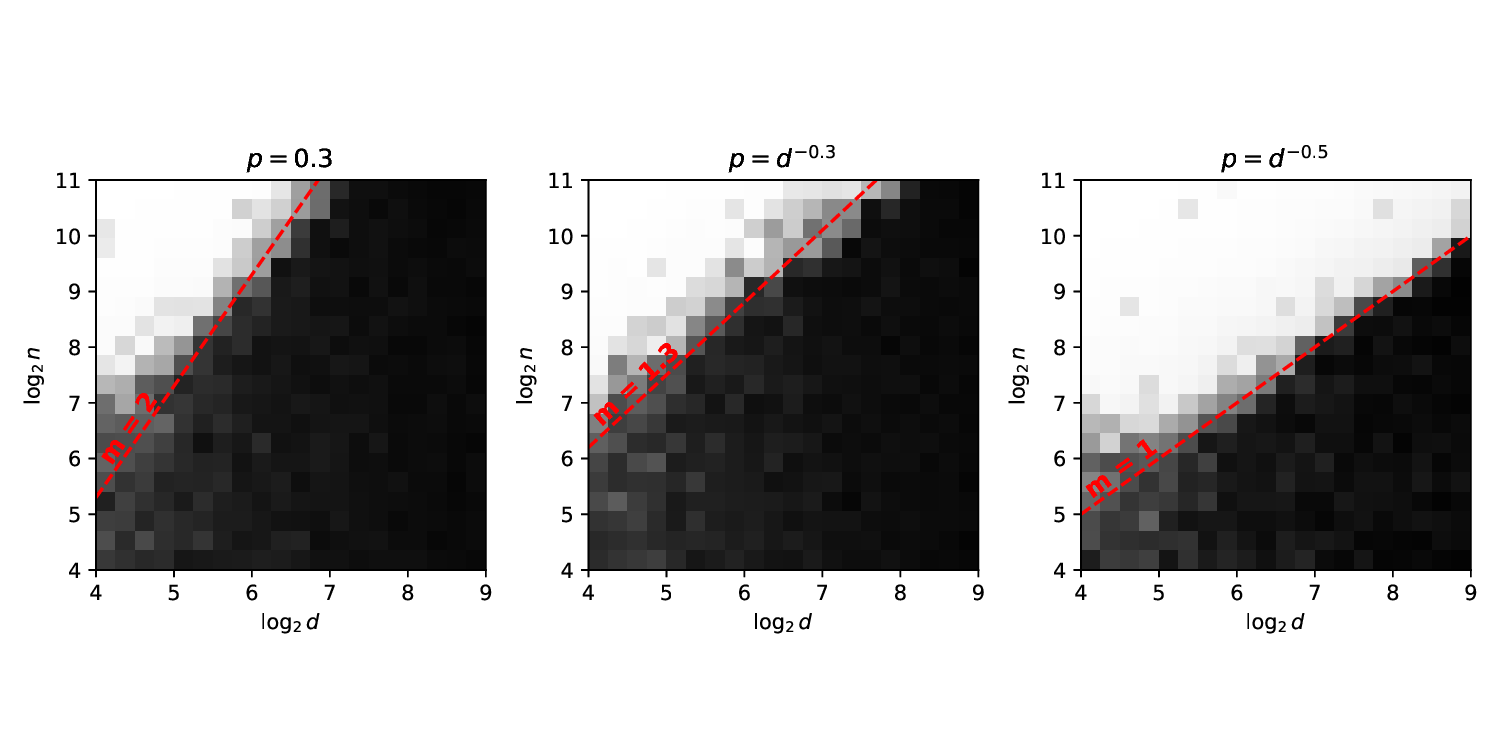}
        \vspace{-0.7cm}
        \caption{
            Projection pursuit of Imbalanced Clusters using \autoref{twoStepAlgorithm} with $\phi(x) = \psi(x) = \max\{0, x\}^2$.
            The red lines of slopes $(2, 1.3, 1)$ roughly highlight the phase transition.
        }
        \label{asymptotics_experiment}
    \end{subfigure}
    \begin{subfigure}[b]{0.8\textwidth}
        \centering
        \vspace{-0.1cm}
        \includegraphics[scale = 0.4]{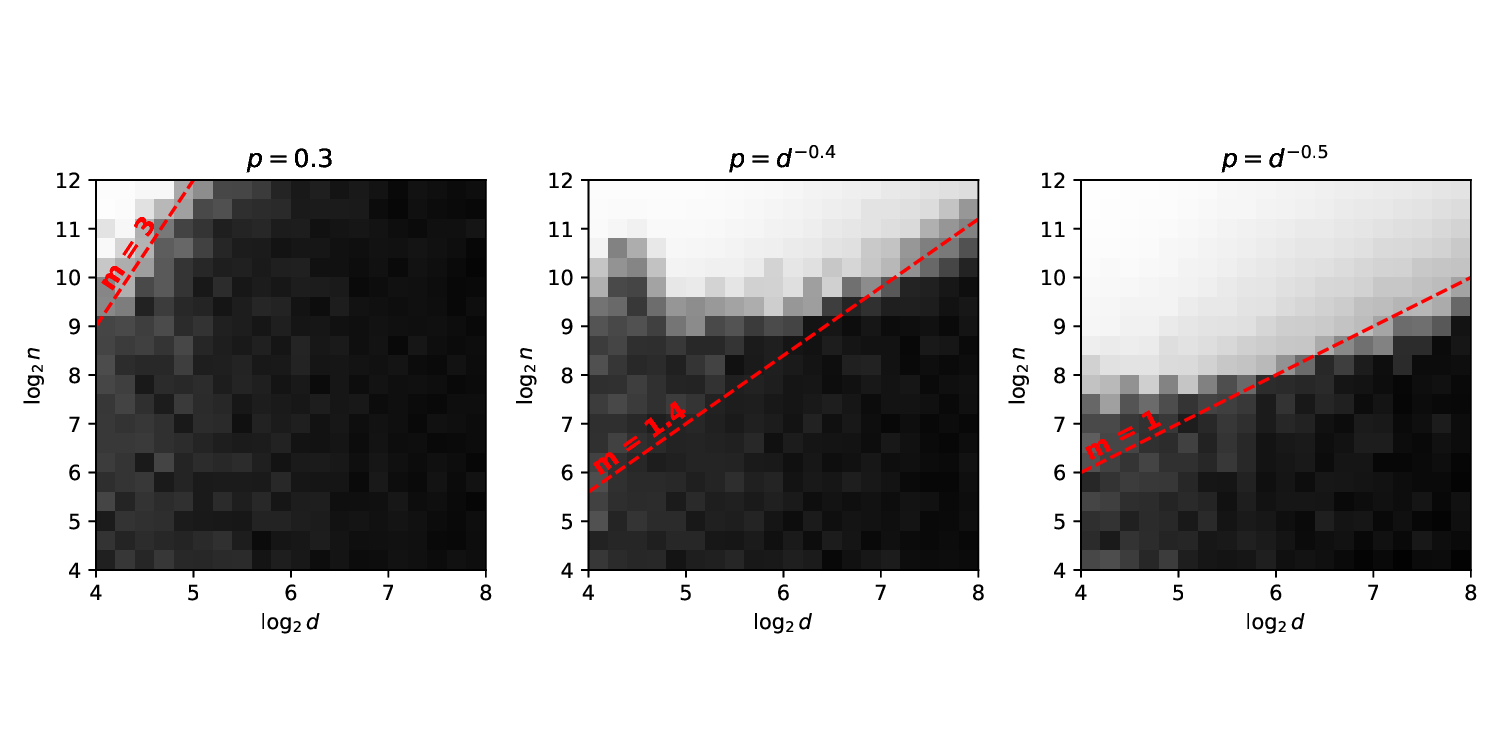}
        \vspace{-0.7cm}
        \caption{
            Projection pursuit of a sparse Bernoulli-Rademacher distribution using \autoref{twoStepAlgorithm} with $\phi(x) = x^4$ and $\psi(x) = -|x|$.
            The red lines of slopes $(3, 1.8, 1)$ roughly highlight the phase transition.
            }
        \label{asymptotics_experiment_kurtosis}
    \end{subfigure}
    \caption{
        Phase transitions in Projection Pursuit using gradient-based algorithms.
        The horizontal and vertical axes correspond to $\log_2 d$ and $\log_2 n$.
        Each pixel shows the average value of the absolute inner product between the predicted and signal directions, where white corresponds to 1 and black to 0.
    }
\end{figure}
\subsection{Comparison to Other Methods}
\label{discussion}
Here, we will apply \autoref{twoStepAlgorithm} to the planted vector problem with other projection indices.
Specifically, we will consider a list of projection pursuit methods as presented in \autoref{fig:pp_inidces}.
\begin{figure}[H]
    \centering
    \renewcommand{\arraystretch}{1.3}
    \begin{tabular}{ r | l | l}
        \textbf{Abbreviation} & $\sumin \frac{\phi(\bX_i)}{n}$ & $\sumin \frac{\psi(\bX_i)}{n}$ \\
        \hline & \\[-2ex]
        ReLU2 & $\average{\max\{0, \bX_i\}^2}$ & $\average{\max\{0, \bX_i\}^2}$\\
        Kurtosis & $\average{\bX_i^4}$ & $\average{-|\bX_i|}$\\
        Abs & $\average{-|\bX_i|}$ & $\average{-|\bX_i|}$\\
        AbsMax & $\average{|\bX_i|}$ & $\average{|\bX_i|}$\\
        Skewness & $\average{\bX_i^3}$ & $\average{\bX_i^3}$\\
        ApproxEntropy & $(\average{\bX_i^3})^2  + (\average{\bX_i^4} - 3)^2$ & $(\average{\bX_i^3})^2 + (\average{\bX_i^4} - 3)^2$\\
    \end{tabular}
    \vspace{0.5cm}
    
    \begin{tabular}{ r | l }
        \textbf{Abbreviation} & Method \\
        \hline & \\[-2ex]
        Cov4max & $\mathbf{v}_{\lambda_{\max}}$ $(\sumin \mynorm{\bX_i}^2 \bX_i \bX_i^\top)$ \\
        Cov4min & $\mathbf{v}_{\lambda_{\min}}(\sumin \mynorm{\bX_i}^2 \bX_i \bX_i^\top)$ \\
        3TensorDecomp & $\mathbf{v}_{\sigma_{\min}}(\sumin \bX_i \otimes \bX_i \bX_i^\top)$ \\
    \end{tabular}
    \caption{
        Methods used in the empirical comparison.
        On the left: Methods using the gradient-based algorithm.
        On the right: Spectral methods.
    }
    \label{fig:pp_inidces}
\end{figure}
ReLU2 and Kurtosis correspond to the projection indices we study in \autoref{relu2_sc} and \autoref{kurtosis_sc}.
Additionally, we test the projection indices Abs, which corresponds to the projection index used in \citet{QuSW14}, AbsMax, which corresponds to the objective in \citet{davis2021clustering} and Skewness as proposed in \citet{skewness2004paajarvi}.
ApproxEntropy uses the entropy approximation as introduced by \citet{NIPS1997_6d9c547c}.
We choose this selection of projection indices as a representative sample of projection indices studied in the literature. 
We choose projection indices that consist of basic functions as compared to more complex projection indices (e.g., \citet{PPFriedmanTukey}) to ensure that they work well with gradient-based optimization.

Additionally, we compare the gradient-based methods to three spectral methods.
\citet{nicolaSkewness} introduces a method called MaxSkew for finding directions of large skewness(abbreviated by 3TensorDecomp).
Cov4max and Cov4min denote the eigenvectors corresponding to the largest and smallest eigenvalue of a kurtosis matrix as introduced in \citet{Pena2010} and \citet{Kollo2008}.
This kurtosis matrix is also studied in the context of sample efficient recovery of planted vectors \cite{mao2022optimal} and clustering of Gaussian mixtures \cite{davis2021clustering}.

Fully resampling mini-batches seems to be unnecessary if the dimension is sufficiently low.
Thus, it tends to be beneficial to subsample the dataset with replacement, which we will do for the following experiments.
Here, we compare the previously mentioned methods in the planted vector setting with a Bernoulli Rademacher planted vector in \autoref{br_comp} and with an Imbalanced Clusters planted vector in \autoref{ic_comp} with $d = 300$, $p = 0.1$.
For \autoref{twoStepAlgorithm} we use $n_{init} = 400$ initializations.
In \autoref{fig:spectral_methods_comparison}, we can observe that most algorithms only perform well on one of both settings. 
In \autoref{br_comp}, we observe that Cov4max performs best closely followed gradient-based projection pursuit using Kurtosis.
This behavior is as expected by bounds on the sample complexity in the Bernoulli Rademacher setting, as Cov4max only needs $\bigOmegas{d^2 p^2}$ samples to recover the signal direction~\cite{mao2022optimal}.
In the planted vector setting with Imbalanced Clusters ReLU2 performs best.
\begin{figure}[H]
    \vspace{-0.3cm}
    \centering
    \begin{subfigure}[b]{0.49\textwidth}
        \includegraphics[scale = 0.4]{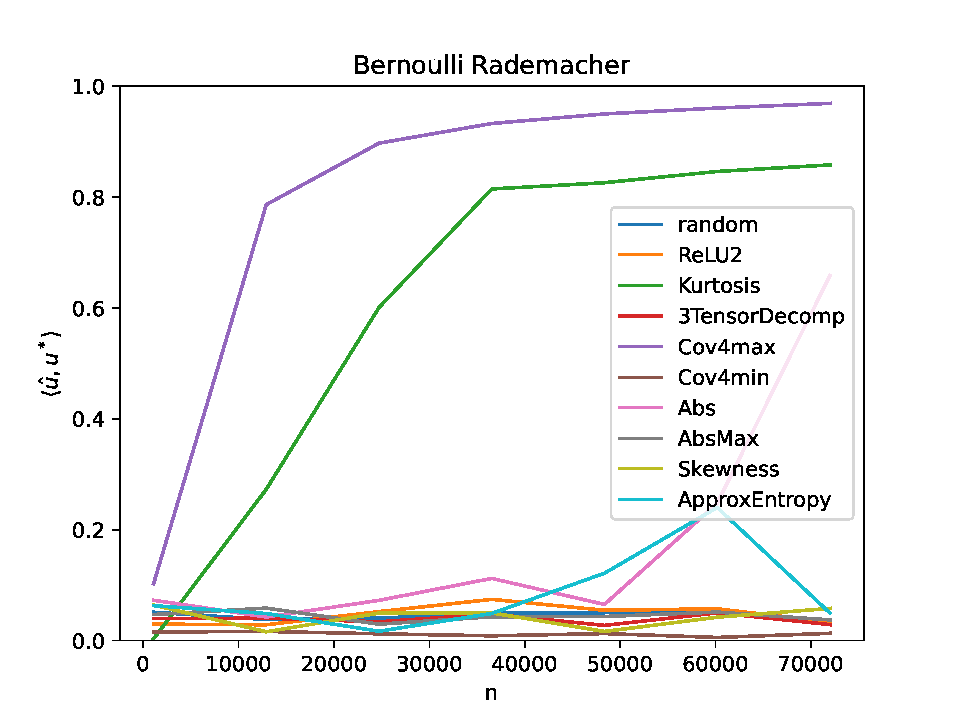}
        \caption{Bernoulli-Rademacher}
        \label{br_comp}
    \end{subfigure}
    \hfill
    \begin{subfigure}[b]{0.49\textwidth}
        \includegraphics[scale = 0.4]{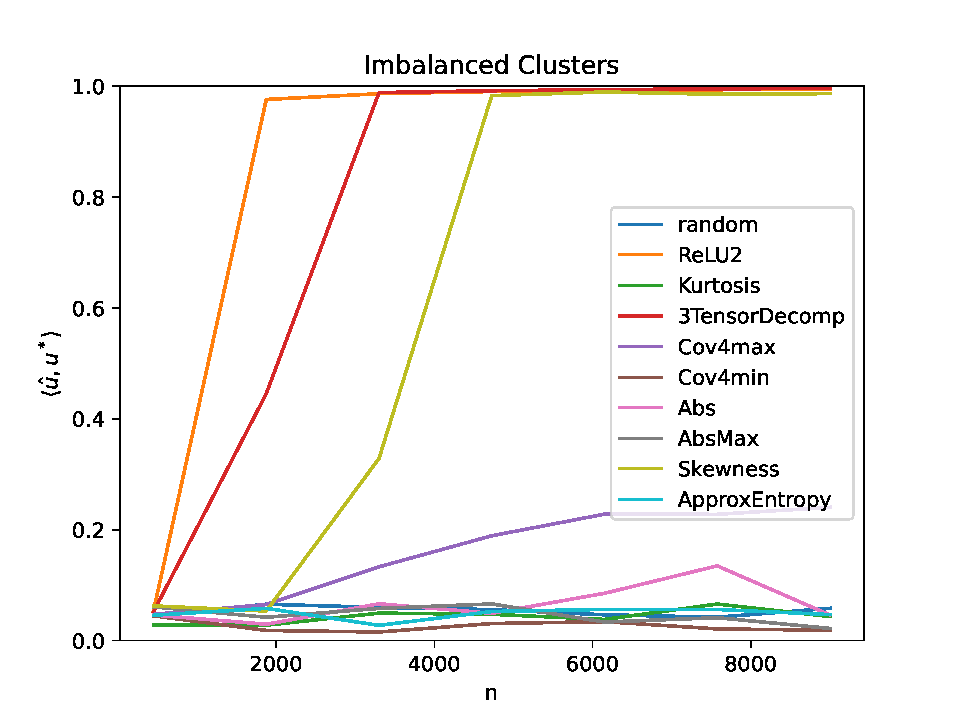}
        \caption{Imbalanced Clusters}
        \label{ic_comp}
    \end{subfigure}
    \caption{
        Comparison of different methods in the planted vector setting.
        We plot the average inner product between the signal direction and the recovered direction by each algorithm over 30 datasets.
    }
    \label{fig:spectral_methods_comparison}
\end{figure}
\subsection{Experiments with Real Data}
\label{experiments}
We compare the algorithms by measuring their performance on two different datasets: FashionMNIST~\cite{xiao2017fashionmnist} and the Human Activity Recognition Dataset~\cite{human_activity_recognition_using_smartphones_240}.
The FashionMNIST dataset contains $28 \times 28$ gray-scale images of fashion articles with articles labeled into $10$ categories denoting the article type (T-shirt/top, Trouser, Pullover, Dress, Coat, Sandal, Shirt, Sneaker, Bag, Ankle boot).
The Human Activity Recognition Dataset~\cite{human_activity_recognition_using_smartphones_240} has been collected from 30 subjects performing daily tasks while carrying a smartphone for inertial measurements.
Each sample consists of 561 summary statistics of the inertial measurements and is labeled by the performed activity (Walking, Walking Upstairs, Walking Downstairs, Sitting, Standing, Lying) by the subject.

For the experiments, we reduce the dimension to $100$ using PCA.
We use $n_{init} = 500$ initializations for the gradient-based algorithm and choose the $30$ directions with the largest value for the projection index.
For spectral methods, we run the spectral method $30$ times while removing the recovered directions from the dataset.
To compare the performance of the different methods, we evaluate how well a single projection can help predict the labels of the images.
This will be evaluated using the Information Gain $\mathrm{IG}(Y,A)=\mathrm {H}(Y)-\mathrm {H}(Y|A)$.
Where $Y$ are the labels assigned to the instances in the respective datasets.
Here, we choose the indicator $A = \bbI_{\inner{\bu, \bx} > t}$, where $t$ is an automatically chosen threshold for each index to maximize information gain on the training dataset.
The final information gain is evaluated on a holdout dataset.
We evaluate the projection pursuit methods on both a very small and a very large training set to determine which methods perform well with only a small number of samples and which need a large number of samples.

In \autoref{fig:InformationGain}, we plot the results for the FashionMNIST dataset, and in \autoref{fig:InformationGain_HAR}, we plot the results for the Human Activity Recognition Dataset.
Here, we can observe that the ReLU2 projection index still performs well even if only a few samples are present, outperforming all other methods.
Other methods such as AbsMax can only be optimized with significantly more samples but perform significantly better if optimization is successful than other projection pursuit methods.
Additionally, we note that using Kurtosis seems to perform very similarly to Abs, which is to be expected due to both algorithms using the same projection index to select projections.
We hypothesize that this is because the gradient of Kurtosis is less stable than other projection indices such as Abs offsetting the faster convergence of Kurtosis, which seems to be less important in lower dimensional data.
\begin{figure}[H]
    \vspace{-0.3cm}
    \centering
        \begin{subfigure}[b]{0.3\textwidth}
            \centering
            \includegraphics[width=\textwidth]{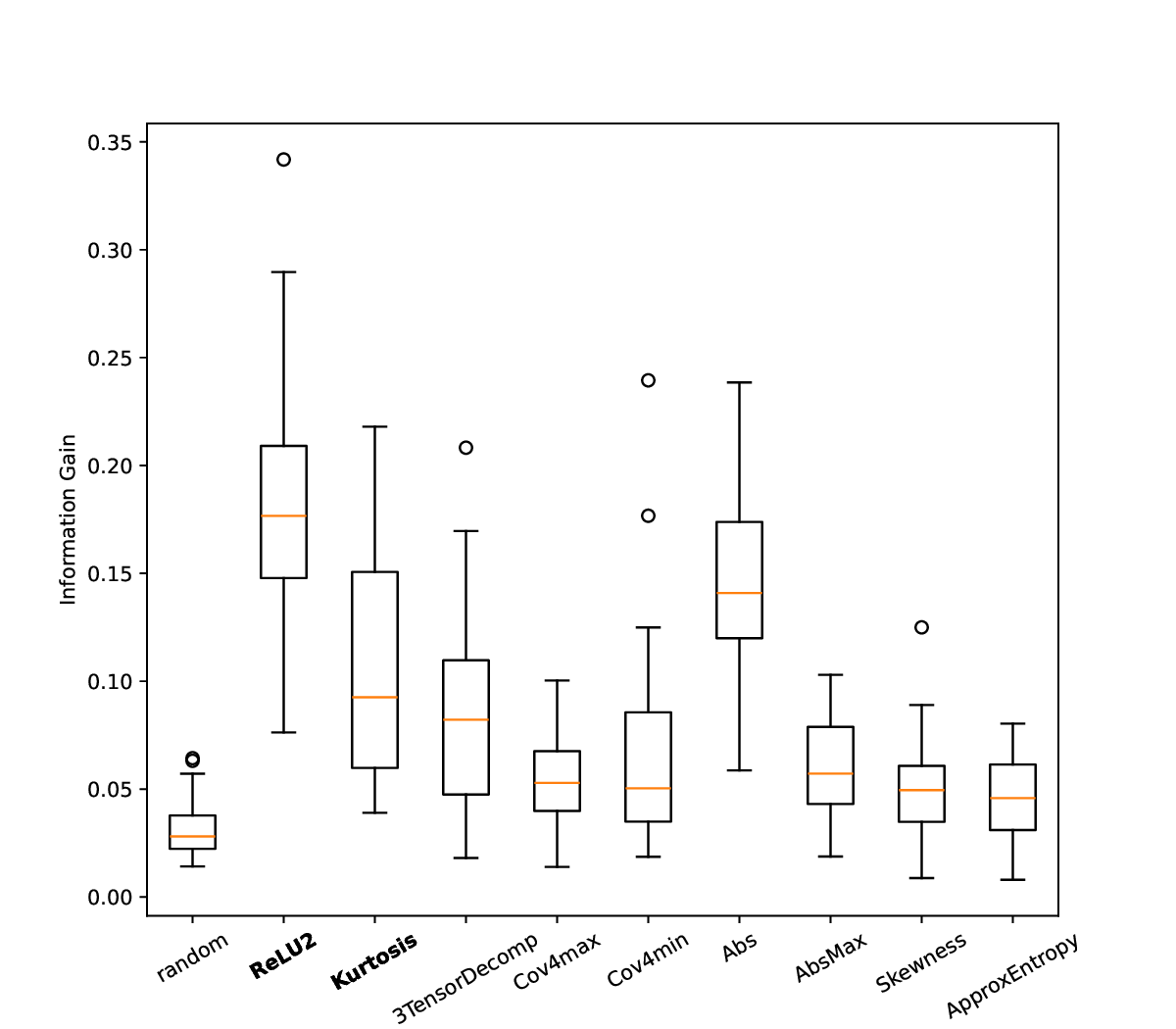}
            \caption{$n = 600$}
        \end{subfigure}
        \begin{subfigure}[b]{0.3\textwidth}
            \centering
            \includegraphics[width=\textwidth]{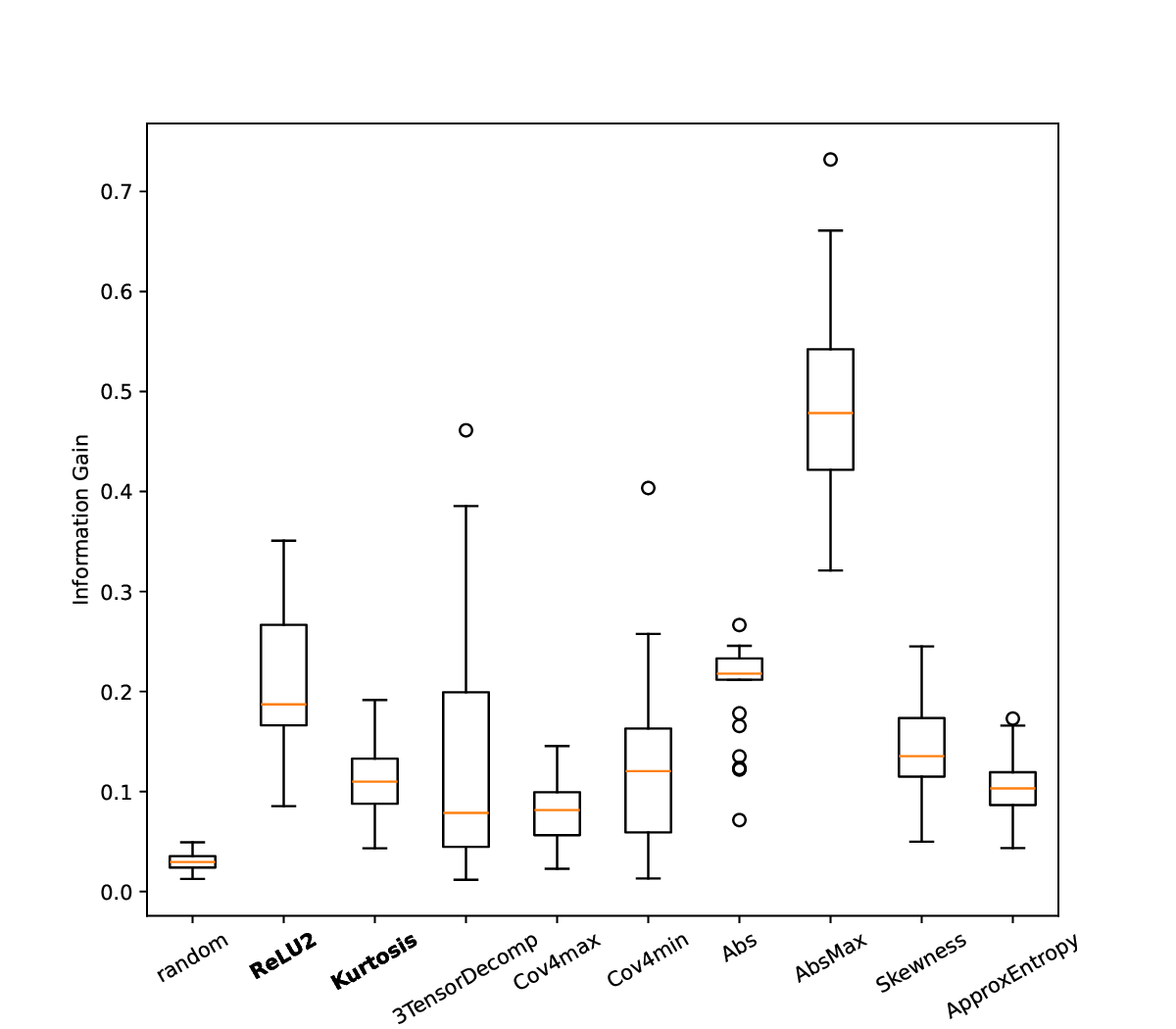}
            \caption{$n = 60000$}
        \end{subfigure}
    \caption{
        A comparison of projection pursuit approaches on fashionMNIST.
        We plot the achieved information gain using projections produced by different projection indices.
        The box plots are generated with the 30 candidate projections generated by the algorithms.
    }
    \label{fig:InformationGain}
\end{figure}
\begin{figure}
    \vspace{-0.3cm}
    \centering
        \begin{subfigure}[b]{0.3\textwidth}
            \centering
            \includegraphics[width=\textwidth]{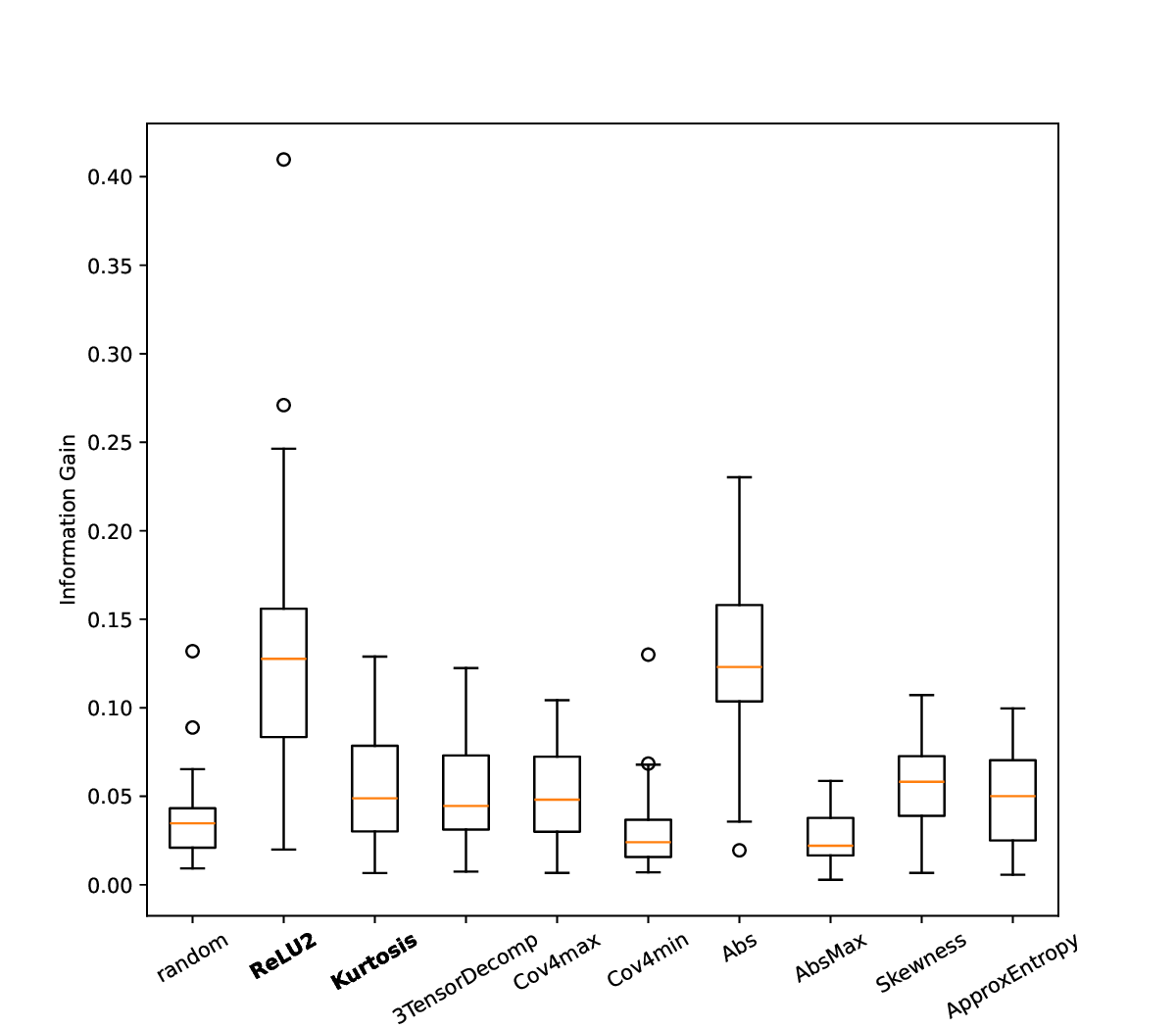}
            \caption{$n = 300$}
        \end{subfigure}
        \begin{subfigure}[b]{0.3\textwidth}
            \centering
            \includegraphics[width=\textwidth]{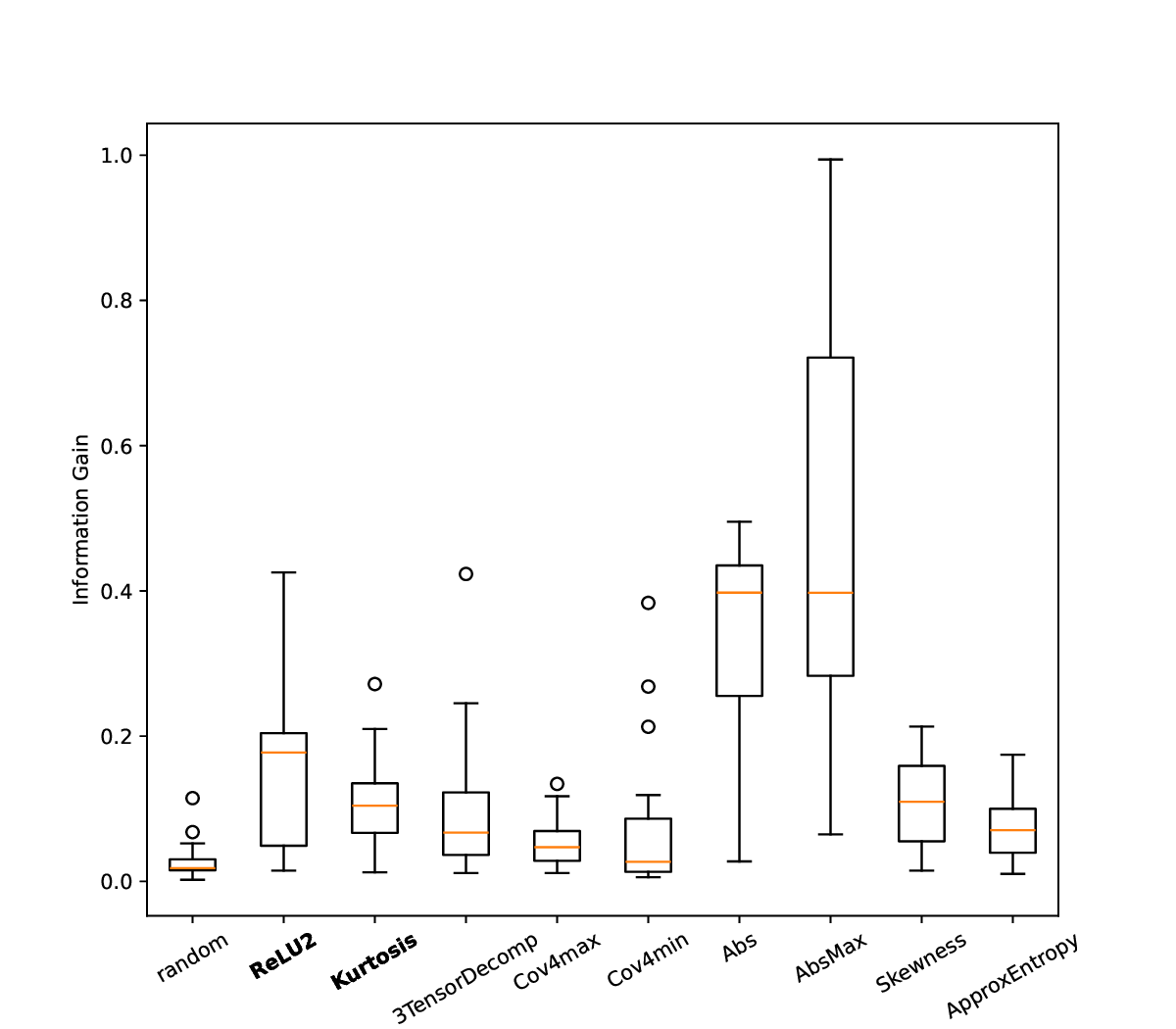}
            \caption{$n = 7352$}
        \end{subfigure}
    \caption{
        A comparison of projection pursuit approaches on the Human Activity Recognition using Smartphones Benchmark.
        We plot the achieved information gain using projections produced by different projection indices.
        The box plots are generated with the 30 candidate projections generated by the algorithms.
    }
    \label{fig:InformationGain_HAR}
\end{figure}
For demonstration purposes, we also show histograms of the data projected onto the recovered projections in \autoref{fig:histograms_fmnist_r2} and \autoref{fig:histograms_fmnist_kurtosis}.
In \autoref{fig:histograms_fmnist_r2}, it can be observed that the found projections reveal two Imbalanced Clusters where the smaller cluster contains samples of mostly one class. 
\autoref{fig:histograms_fmnist_kurtosis} shows that the Kurtosis projection index recovers projections for which many samples are projected close to zero while a few are projected away from zero, similar to a Bernoulli-Rademacher distribution.
\begin{figure}
    \vspace{-0.3cm}
    \centering
    \begin{subfigure}[b]{\textwidth}
    \centering
        \includegraphics[scale = 0.4]{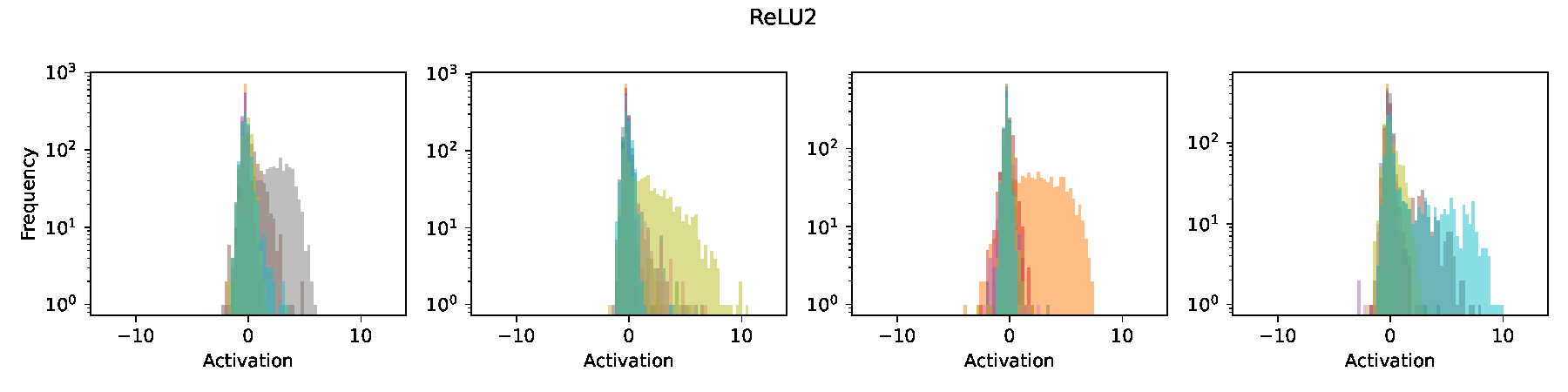}
        \caption{4 recovered projection using the ReLU2 projection index}
        \label{fig:histograms_fmnist_r2}
    \end{subfigure}
    \begin{subfigure}[b]{\textwidth}
        \centering
        \includegraphics[scale = 0.4]{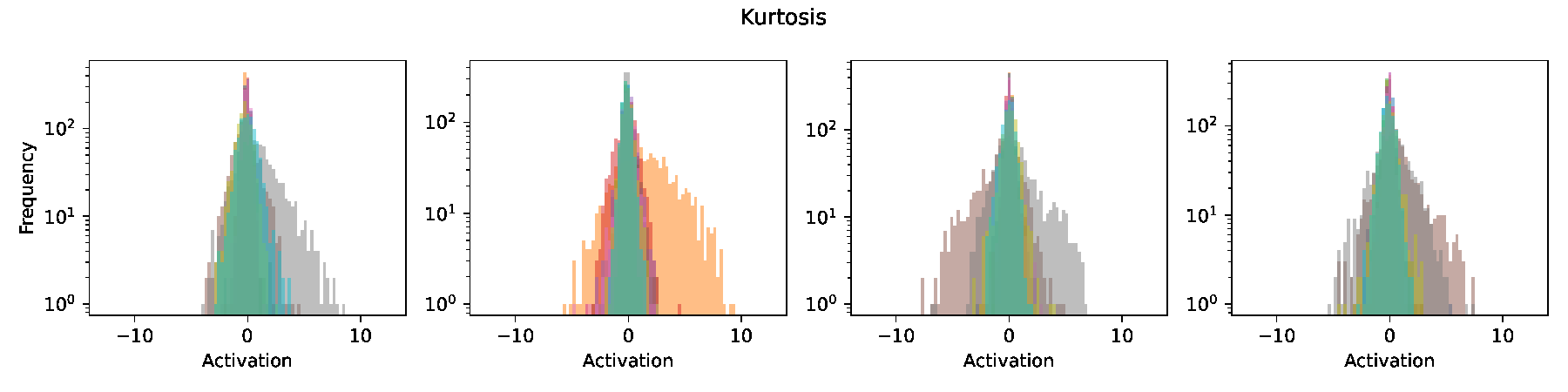}
        \caption{4 recovered projection using the Kurtosis projection index}
        \label{fig:histograms_fmnist_kurtosis}
    \end{subfigure}
    \caption{
        Histograms of projections recovered by \autoref{twoStepAlgorithm} on FashionMNIST using $n = 600$ samples.
        Classes are plotted using different colors.
    }
    \label{hists}
\end{figure}
\autoref{fig:histograms_fmnist_r2} provides insight into why discovering imbalanced projections effectively recovers label-separating projections in classification tasks such as Fashion-MNIST. 
We can reasonably assume that the data of a classification task follows a cluster structure.
If the data is projected in the direction of one cluster center, we expect that the samples from the other clusters will collapse close to $0$, while the samples from the chosen cluster will move out to one side.
This effect can also be observed in \autoref{fig:histograms_fmnist_r2}.
Consequently, projecting the data along the direction of a particular cluster center naturally yields an imbalanced histogram, with samples from the corresponding cluster positioned towards one extreme.
\section{Conclusion}
We consider the performance of gradient-based algorithms for projection pursuit in the planted vector setting, where we study their sample complexity.
Specifically, we consider the setting where the planted vector follows a distribution containing two clusters of imbalanced size or a Bernoulli-Rademacher distribution.
In the former setting, Low-Degree Polynomials give a lower bound of $n = \widetilde{\Theta}(d^{1.5} p)$ and gradient-based methods can recover the signal direction provably with $n = \widetilde{\Omega}(d^2 p^2)$ thus presenting a gap of a factor of $p \sqrt{d}$ which increases as $p$ increases.
It is currently unknown whether an algorithm exists that matches the sample complexity of the lower bound in the Low-Degree Polynomial Framework.
In the latter setting, $n = \widetilde{\Omega}(d^3 p^4)$ samples are sufficient, and there exist spectral algorithms matching the computational lower bounds of $n = \widetilde{\Omega}(d^2 p^2)$ samples.
Although there still exists a gap between gradient-based methods and computational lower bounds, we can observe that in both settings, if the distribution is very imbalanced/sparse, gradient-based methods match computational lower bounds closely.
Finally, we demonstrate the favorable performance of our algorithm if the number of samples is severely limited.
\section{Acknowledgements}
This work has been supported by the German Research Foundation (DFG) through DFG-ANR PRCI ``ASCAI" (GH 257/3-1).
\appendix
\bibliographystyle{myjmva}
\bibliography{bib}
\section{Proof of \autoref{mainTheorem}}
\begin{proof}   
\label{proofMainTheorem}
Start by choosing an initialization $j \in [n_{init}]$ for which $\inner{\bu_{0, j}, \bu^*} \geq a$ which can be assumed to exist with probability at least $1 - \smallO{1}$.
By \autoref{assGradient} we know that for each step $\inner{\bu_{t+1, j}, \bu^*} \geq (1 + c_0) \inner{\bu_{t, j}, \bu^*}$ if $\inner{\bu_{t, j}, \bu^*} < b$ with proabability at least $1 - \smallO{\frac{1}{s}}$.
By applying the union bound over all steps, we can obtain that
$s = \log_{1 + c_0}\left( \frac{b}{a} \right) = \cO(\log(d))$ steps are sufficient, such that at least one direction $\bu_{i,j}$ is encountered for which $\inner{\bu_{i,j}, \bu^*} \geq b$.
By \autoref{assTestability} we can observe that $\hat{\bu} = \argmax_{\bu \in \{\bu_{i,j} | i \in [s], j \in [n_{init}]\}} \sum_{k = 1}^{n} \frac{\phi(\inner{\bX_k, \bu})}{n}$ has $\inner{\hat{\bu}, \bu^*} \geq b - \delta$ finishing the proof.
\end{proof}
\vspace{-0.1cm} 
\section{Proofs in \autoref{Relu2Section}}
\label{relu2_sc_proof}
In this section use $\phi(x) = \psi(x) = \max\{0, x\}^2$.
For simplicity we also define $\mu_1 = \sqrt{\frac{1-p}{p}}$ and $\mu_2 = - \sqrt{\frac{p}{1-p}}$.
\begin{lemma}
    \label{relu2Init}
    Given $\bu \sim \Data_{\cB(p)}$ for $p \in (1/\sqrt{d}, 1/2)$
    with probability at least $\bigTheta{p}$
    $$
    \frac{\ainner}{\mynorm{\bu}} \geq \bigTheta{\frac{1}{\sqrt{p d}}}.
    $$
\end{lemma}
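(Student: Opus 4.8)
\emph{Proof plan.} The plan is to exploit the explicit structure of $\Data_{\cB(p)}$. Writing a draw $\bu \sim \Data_{\cB(p)}$ as $\bu = \nu \ustar + \bg$ with $\nu \sim \cB(p)$ and $\bg \sim \cN(0, \bI_d - \ustar{\ustar}^\top)$ independent, one has $\inner{\bu, \ustar} = \nu$ (since $\bg \perp \ustar$ almost surely) and $\mynorm{\bu}^2 = \nu^2 + \mynorm{\bg}^2$, so the quantity to bound is exactly $\nu / \sqrt{\nu^2 + \mynorm{\bg}^2}$. The intuition is that the heavy cluster $\nu = \mu_1 = \sqrt{(1-p)/p}$, which occurs with probability exactly $p$, gives a projection onto $\ustar$ of order $1/\sqrt p$, while the orthogonal noise satisfies $\mynorm{\bg}^2 = \bigTheta{d}$; since $p > 1/\sqrt d$ forces $\mu_1^2 = \bigTheta{1/p} = \smallO{d}$, the denominator is dominated by the noise and the ratio is $\bigTheta{1/\sqrt{pd}}$.

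Carrying this out, I would first condition on the event $E_1 := \{\nu = \mu_1\}$, which has probability $p$; on $E_1$, using $1 - p \ge 1/2$, we get $\inner{\bu, \ustar} = \mu_1 \ge \sqrt{1/(2p)}$. Independently of $\nu$, $\mynorm{\bg}^2 \sim \chi^2_{d-1}$, so a standard chi-square tail bound (e.g.\ Laurent--Massart) produces an event $E_2 := \{\mynorm{\bg}^2 \le 2d\}$ with $\bbP(E_2) = 1 - \smallO{1}$. On $E_1 \cap E_2$, and using $\mu_1^2 = (1-p)/p \le 1/p < \sqrt d \le d$, we bound $\mynorm{\bu}^2 \le \mu_1^2 + 2d \le 3d$, hence
$$\frac{\inner{\bu, \ustar}}{\mynorm{\bu}} \;\ge\; \frac{\sqrt{1/(2p)}}{\sqrt{3d}} \;=\; \frac{1}{\sqrt{6pd}} \;=\; \bigTheta{\frac{1}{\sqrt{pd}}}.$$
By independence $\bbP(E_1 \cap E_2) = p\,(1 - \smallO{1}) \ge p/2$ for $d$ large enough, which is the claimed probability $\bigTheta{p}$.

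I do not expect a real obstacle here: the two points deserving a little care are (i) obtaining the $\chi^2_{d-1}$ concentration with a failure probability that is genuinely $\smallO{1}$ (in fact $e^{-\bigTheta{d}}$) rather than merely a small constant, and (ii) confirming that it is precisely the hypothesis $p \in (1/\sqrt d, 1/2)$ that makes $\mu_1^2$ negligible against $d$, so that $\mynorm{\bu}$ behaves like $\sqrt d$ up to constants. The remainder is bookkeeping of absolute constants, which one can track so that both the constant in front of $1/\sqrt{pd}$ and the constant in front of $p$ are explicit.
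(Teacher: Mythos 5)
Your proof follows the same decomposition the paper uses: condition on the heavy-cluster event $\{\nu = \mu_1\}$ (probability $p$), bound the orthogonal Gaussian component's norm by $\bigTheta{\sqrt{d}}$, and combine the two events. The only difference is cosmetic — you invoke a chi-square tail bound (Laurent--Massart) where the paper uses Markov's inequality — and both yield a combined success probability $\bigTheta{p}$, so the argument is correct and essentially identical to the paper's.
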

\begin{proof}
    We know that $\ainner = \sqrt{\frac{1-p}{p}} \geq \sqrt{\frac{1}{2 p}}$ with probability $p$.
    Conditioned on $\inner{\bu^*, \bu} = \sqrt{\frac{1-p}{p}}$ we have $\bu \sim \mathcal{N}\left( \sqrt{\frac{1-p}{p}} \bu^*, \bI_n - \bu^*{\bu^*}^\top \right)$.
    By Markov's inequality, we have
    $\mynorm{\correctiontwo \bu} \leq \bigTheta{\sqrt{d}}$ 
    with constant probability.
    The lemma follows from the combination of the previous two results.
\end{proof}
\begin{lemma}
    \label{relu2Concentration}
    For any $\delta \in (0, 1)$ and $\bX \sim \cD^n_{\cB(p)}$ have
    \begin{flalign*}
    \mathbb{P}
    \left[
    \myabs{
    \inner{\sumin \frac{g_\bu(\bX_i)}{n}, \bu^*}
    - 
    \inner{\expectation\left[g_\bu(\bX)\right], \bu^*}
    }
    \geq \delta
    \right]
    \leq
    \exp
    \left(
    - \bigTheta{n \delta^2 \min\left\{ 1, \frac{p}{\ainner^2} \right\}}
    \right).
    \end{flalign*}
\end{lemma}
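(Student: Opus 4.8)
The plan is to reduce $\inner{\sumin g_\bu(\bX_i)/n,\bu^*}$ to an empirical mean of i.i.d.\ scalars, split that mean according to which of the two clusters each sample's planted coordinate $\nu_i$ lands in, and apply Bernstein's inequality to the resulting pieces. Write $\alpha := \ainner$. Since $\phi'(x)=2\max\{0,x\}$ we have $g_\bu(\bx)=2\max\{0,\inner{\bx,\bu}\}(\bI_d-\bu\bu^\top)\bx$, hence
$$
\inner{g_\bu(\bx),\bu^*} \;=\; 2\max\{0,\inner{\bx,\bu}\}\bigl(\inner{\bx,\bu^*}-\alpha\inner{\bx,\bu}\bigr).
$$
Decomposing $\bX=\nu\bu^*+\bg$ with $\bg\sim\cN(0,\bI_d-\bu^*{\bu^*}^\top)$ and $\bu=\alpha\bu^*+\sqrt{1-\alpha^2}\,\bw$, $\bw\perp\bu^*$, $\mynorm{\bw}=1$, gives $\inner{\bX,\bu^*}=\nu$ and $\inner{\bX,\bu}=\alpha\nu+\sqrt{1-\alpha^2}\,z$ with $z:=\inner{\bg,\bw}\sim\cN(0,1)$ independent of $\nu$, so the $n$ summands equal $h(\nu_i,z_i)$, where
$$
h(\nu,z) \;:=\; 2\max\bigl\{0,\alpha\nu+\sqrt{1-\alpha^2}z\bigr\}\bigl(\nu(1-\alpha^2)-\alpha\sqrt{1-\alpha^2}z\bigr),
$$
are i.i.d.\ and $\inner{\expectation[g_\bu(\bX)],\bu^*}=\expectation[h(\nu,z)]$; thus the statement is a concentration inequality for $\tfrac1n\sum_i h(\nu_i,z_i)$, and one also has the crude bound $|h(\nu,z)|\le 2(|\nu|+|z|)^2$.

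Next I would split the centered summand by the value of $\nu_i$. Conditionally on $\nu_i=\mu_2$, since $|\mu_2|=\bigO{\sqrt p}=\bigO{1}$ we get $|h(\mu_2,z_i)|\le\bigO{(1+|z_i|)^2}$, so $h(\mu_2,z_i)$ is sub-exponential of scale $\bigO{1}$ with $\bigO{1}$ variance; write $m_2:=\expectation[h(\mu_2,z)]=\bigO{1}$. Conditionally on $\nu_i=\mu_1$, the scaling $\mu_1\asymp p^{-1/2}$ makes $h(\mu_1,z_i)$ larger, but using the elementary identities $\expectation[\max\{0,a+bz\}]=a\Phi(a/b)+b\varphi(a/b)$ and $\expectation[z\max\{0,a+bz\}]=b\Phi(a/b)$ (with $\Phi,\varphi$ the standard normal cdf and density) one finds that the would‑be $\Theta(\mu_1^2)=\Theta(1/p)$ leading term of the conditional mean carries a factor $\alpha(1-\alpha^2)$, so $m_1:=\expectation[h(\mu_1,z)]=\Theta(\alpha/p)+\bigO{p^{-1/2}}=\Theta(\max\{\alpha/p,\,p^{-1/2}\})$, while the conditional fluctuation $h(\mu_1,z)-m_1$ has variance only $\bigO{1/p}$. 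This yields a decomposition into three independent families of i.i.d.\ mean‑zero variables,
$$
h(\nu_i,z_i)-\expectation[h] \;=\; \underbrace{\indicator[\nu_i=\mu_1]\bigl(h(\mu_1,z_i)-m_1\bigr)}_{A_i} \;+\; \underbrace{\indicator[\nu_i=\mu_2]\bigl(h(\mu_2,z_i)-m_2\bigr)}_{B_i} \;+\; \underbrace{\bigl(\indicator[\nu_i=\mu_1]-p\bigr)(m_1-m_2)}_{C_i},
$$
where $A_i,B_i$ have variance $\bigO{1}$ with controlled sub‑exponential tails, while $C_i$ is a rescaled centered Bernoulli with $|C_i|\le|m_1-m_2|=\Theta(\max\{\alpha/p,p^{-1/2}\})$ and $\mathrm{Var}(C_i)=\Theta(\max\{\alpha^2/p,1\})$, so $\mathrm{Var}(C_i)^{-1}=\Theta(\min\{1,p/\alpha^2\})$.

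Finally I would apply Bernstein's inequality to each of $\tfrac1n\sum_iA_i$, $\tfrac1n\sum_iB_i$, $\tfrac1n\sum_iC_i$ and take a union bound. In the regime of interest, $\delta=\bigO{\ainner}$, the $A$‑ and $B$‑sums contribute $\exp(-\Theta(n\delta^2))$, which is already at least the target since $\min\{1,p/\alpha^2\}\le 1$; the $C$‑sum contributes $\exp(-\Theta(n\delta^2/\mathrm{Var}(C_i)))=\exp(-\Theta(n\delta^2\min\{1,p/\alpha^2\}))$ (the linear Bernstein correction is lower order here, and $\tfrac1n\sum_iC_i$ is in any case deterministically bounded by $|m_1-m_2|$), and this dominates, giving the claimed bound; the centering $\inner{\expectation[g_\bu(\bX)],\bu^*}=\expectation[h]$ matches the stated one. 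The main obstacle is the conditional‑moment bookkeeping for $h(\mu_1,z)$ — in particular showing its mean is only $\Theta(\max\{\alpha/p,p^{-1/2}\})$ rather than the naive $\bigO{1/p}$ — since this cancellation is precisely what upgrades the crude $\exp(-\Theta(np\,\delta^2))$ (obtained by running Bernstein on $h$ directly with variance and scale $\bigO{1/p}$) to the claimed $\exp(-\Theta(n\delta^2\min\{1,p/\alpha^2\}))$.
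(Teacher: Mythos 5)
Your proof is correct and reaches the same bottleneck as the paper, but the decomposition you use is genuinely different. The paper splits the summand algebraically, $\inner{g_\bu(\bx),\bu^*}=t_1(\bx)-t_2(\bx)$ with $t_1=\max\{0,\inner{\bx,\bu}\}\inner{\bx,\bu^*}$ and $t_2=\ainner\max\{0,\inner{\bx,\bu}\}\inner{\bx,\bu}$, controls each sum by sub-Gaussian/sub-exponential concentration \emph{conditionally} on the empirical balance $\hat p=\tfrac1n\sum_i\indicator_{\hat\nu_i=\mu_1}$ lying in a $\bar\delta$-window of $p$, bounds the shift in the conditional mean by $\bigTheta{\ainner\bar\delta}$, and then accounts for the $\hat p$ fluctuation via the Chernoff bound for the binomial, setting $\bar\delta=\bigTheta{\delta/\ainner}$. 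You instead decompose the \emph{centered} summand directly by cluster membership into the three i.i.d.\ mean-zero families $A_i+B_i+C_i$ and run Bernstein on each; your $C_i$ term is exactly the Bernoulli fluctuation that the paper handles via Chernoff on $\hat p$, and it produces the dominant $\exp(-\bigTheta{n\delta^2\min\{1,p/\ainner^2\}})$ factor, while your $A_i,B_i$ bounds play the role of the paper's conditional $t_1,t_2$ bounds. What your route buys is that the imbalance fluctuation appears as an explicit independent summand rather than a conditioning event, and that the cancellation in $m_1=\bbE[h(\mu_1,z)]=\bigTheta{\max\{\ainner/p,\,p^{-1/2}\}}$ (rather than $\bigTheta{1/p}$) is stated openly instead of being absorbed into the paper's $\bigTheta{\ainner\bar\delta}$ mean-shift bound. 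One caveat applies to both write-ups: the linear Bernstein/Chernoff correction is not automatically dominated for every $\delta\in(0,1)$ --- your $A$-sum has sub-exponential scale $\bigTheta{\mu_1}$ despite $\bigTheta{1}$ variance, and the paper's Chernoff step needs $\bar\delta=\bigTheta{\delta/\ainner}<1$ --- so both implicitly work in the regime $\delta=\bigO{\ainner}$, which you flag and which is the one actually used in the proof of \autoref{relu2_sc}.
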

\begin{proof}
For simplicity we will split $\inner{g_\bu(\bx), \bu^*}$ into terms $t_1, t_2$ such that
$$
    \inner{g_\bu(\bx), \bu^*}
    = 
    \underbrace{\max \{0, \inner{\bx, \bu}\} \inner{\bx, \bu^*}}_{= t_1(\bx)}
    - 
    \underbrace{\max \{0, \inner{\bx, \bu}\} \ainner \inner{\bx, \bu}}_{= t_2(\bx)}
    .
$$
Here we use the Sub-Gaussian norm $\| \cdot \|_{\psi_2}$ and Sub-Exponential norm $\| \cdot \|_{\psi_1}$ as defined in \cite{vershynin2018high}.

Next we will choose a fixed sample $\hat{\nu} \sim \cB(p)$ such that $\inner{\bX_i, \bu^*} = \hat{\nu}_i$ and define the \emph{empirical} balance as $\hat{p} = \sumin \frac{\indicator_{\hat{\nu} = \mu_1}}{n}$.
For now we will assume $(1 - \bar{\delta}) p \leq \hat{p} \leq (1 + \bar{\delta}) p$.

Bounding $\sumin t_1(\bX_i)$:
First note that if the random variable $Z$ is Sub-Gaussian, then there exists a constant $C_{centering}$ such that $\twoOrclizNorm{Z - \expectation[Z]} \leq C_{centering} \twoOrclizNorm{Z}$.
Also note that $\|\max\{ c, Z \}\|_{\psi_2} \leq \twoOrclizNorm{Z}$.

Thus 
$
\sum_{i = 1}^n 
\twoOrclizNorm{
    \frac{t_1(\bX_i)}{n}
- 
\expectation
\left[
    \frac{t_1(\bX_i)}{n}
\right]
}^2
\leq 
\hat{p} \frac{\mu_1^2}{n}
+ 
(1 - \hat{p}) \frac{\mu_2^2}{n}
\stackrel{(1)}{\leq}
\bigTheta{\frac{1}{n}}
$.
With (1) following from $\hat{p} \leq (1 + \bar{\delta}) p$.
Thus 
$$
\mathbb{P}\left[ 
\myabs{
    \sum_{i = 1}^n \frac{t_1(\bX_i)}{n}
- 
\expectation
\left[
    \sum_{i = 1}^n \frac{t_1(\bX_i)}{n}
\right]
} \leq \delta \right] 
\geq 1 - \exp\left( - \bigTheta{n \delta^2} \right)
$$

Bounding $\sumin t_2(\bX_i)$:
Note that if the random variable $Z$ is Sub-Exponential then there exists a constant $C_{centering}$ such that $\|Z - \expectation[Z]\|_{\psi_1} \leq C_{centering} \| Z \|_{\psi_1}$.
Additionally note if  $\|Z - \expectation[Z] \|_{\psi_2} = K$ and $\expectation[Z] < 0$ then $\|\max\{0, Z\}\|_{\psi_2} \leq K$.

Thus
$\sum_{i = 1}^n \oneOrclizNorm{
\frac{t_2(\bX_i)}{n}
- 
\expectation
\left[
    \sum_{i = 1}^n \frac{t_2(\bX_i)}{n}
\right]
}^2 
\leq 
\hat{p} \frac{\mu_1^4}{n^3}
+ 
(1 - \hat{p}) \frac{\mu_2^4}{n^3}
\leq
\bigTheta{\frac{1}{n^2}}
$ 
for $n > d$
and $\max_{i \in [n]} \| t_2(\bX_i) \|_{\psi_1} \leq \bigTheta{\frac{1}{n}}$.
Thus, we can show the following bound by applying Bernstein's inequality for sufficiently large $n \geq \frac{1}{\delta}$
\begin{flalign*}  
\mathbb{P}
\left[
\myabs{
    \sum_{i = 1}^n \frac{t_2(\bX_i)}{n}
- 
\expectation
\left[
    \sum_{i = 1}^n \frac{t_2(\bX_i)}{n}
\right]
}
\leq \delta
\right]
& \geq 
1 - \exp\left(
- \bigTheta{
    \min\left\{
    \frac{\delta^2}{\sum_{i = 1}^n \| t_2(\bX_i) \|_{\psi_1}^2},
    \frac{\delta}{\max_{i \in [n]} \| t_2(\bX_i) \|_{\psi_1}}
    \right\} 
}
\right) \geq 
1 - \exp\left\{- \bigTheta{n \delta} \right\}.
\end{flalign*}

Now, what is left to do is to bound $\projectustar{g_\bu(\bx)}$ with a change in $\hat{p}$.
$$
    \myabs{
    \expectation \left[ \sumin \inner{g_\bu(\bx), \bu^*} \right] - 
    \expectation \left[ \sumin \inner{g_\bu(\bx), \bu^*} \bigg| \hat{p} = (1 - \bar{\delta}) p\right]} 
    \leq \bigTheta{\ainner \bar{\delta}}
$$
By applying the Chernoff bound for the Binomial distribution, we obtain for all $\bar{\delta} \in (0,1)$
$$\mathbb{P}\big[(1 - \bar{\delta}) p \leq \hat{p} \leq (1 + \bar{\delta}) p\big] \geq 1 - 2 \exp\left( - \frac{\bar{\delta}^2 n p}{3} \right).$$
The lemma follows by choosing $\bar{\delta} = \bigTheta{\frac{\delta}{\ainner}}$ and applying the union bound.
\end{proof} 

\begin{lemma}
    \label{relu2Expectation}
    For arbitrary $\beta > 0$
    there exist constants $t, p_0 > 0$ such that with $\bX \sim \cD_{\cB(p)}$ for $p < p_0$ we have\\
    for $\ainner (\mu_1 - \mu_2) < t$
    $$
        \expectation [\inner{g_{\bu}(\bX), \bu^*}] 
        \geq 
        \bigTheta{\frac{\ainner^2}{\sqrt{p}}}
    $$
    and for $\frac{t}{ (\mu_1 - \mu_2)} \leq \ainner \leq 1 - \beta$ we obtain
    $$
        \expectation [\inner{g_{\bu}(\bX_i), \bu^*}] \geq \bigTheta{\ainner}
    $$
\end{lemma}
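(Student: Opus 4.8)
The plan is to compute the conditional expectation of $\inner{g_{\bu}(\bX), \bu^*}$ by conditioning on the value of $\nu \in \{\mu_1, \mu_2\}$, writing $\bX = \nu \bu^* + \bZ$ with $\bZ \sim \cN(0, \bI_d - \bu^*{\bu^*}^\top)$, and decomposing the direction $\bu$ into its component along $\bu^*$ and its orthogonal complement. Abbreviating $a := \ainner$ and $\sigma := \sqrt{1-a^2}$, we have $\inner{\bX, \bu} = a\nu + \sigma W$ where $W \sim \cN(0,1)$ (conditionally on $\nu$), and $\inner{\bX, \bu^*} = \nu + (\text{orthogonal Gaussian noise independent of } W \text{ up to the shared } \bZ \text{ term})$. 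Since $g_{\bu}(\bx) = (\bI - \bu\bu^\top)\tfrac{\partial \phi(\inner{\bx,\bu})}{\partial \bu}$ and $\phi(x) = \max\{0,x\}^2$, we get $g_{\bu}(\bx) = 2\max\{0, \inner{\bx,\bu}\}(\bI - \bu\bu^\top)\bx$, so
$$\inner{g_{\bu}(\bX), \bu^*} = 2\max\{0, \inner{\bX,\bu}\}\bigl(\inner{\bX,\bu^*} - a\inner{\bX,\bu}\bigr).$$
Taking expectations and using that the Gaussian noise in $\inner{\bX,\bu^*}$ orthogonal to $\bu$ has mean zero, the cross terms simplify so that $\bbE[\inner{g_{\bu}(\bX),\bu^*}]$ reduces to a sum over the two clusters of one-dimensional Gaussian integrals of the form $\bbE[\max\{0, a\mu_j + \sigma W\}(\mu_j - a(a\mu_j + \sigma W))]$, weighted by $p$ and $1-p$ respectively.

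Next I would evaluate these one-dimensional integrals explicitly using the standard formulas $\bbE[\max\{0, c + \sigma W\}] = c\,\Phi(c/\sigma) + \sigma\varphi(c/\sigma)$ and $\bbE[\max\{0, c+\sigma W\}\cdot W] = \sigma\Phi(c/\sigma)$, where $\varphi, \Phi$ are the standard normal density and CDF. This yields a closed form for $\bbE[\inner{g_{\bu}(\bX),\bu^*}]$ in terms of $a$, $p$, $\mu_1$, $\mu_2$, and $\sigma$. The key regime distinction is governed by the size of $a\mu_1 \asymp a/\sqrt{p}$ relative to $\sigma \approx 1$: when $a(\mu_1 - \mu_2) < t$ for small constant $t$, the argument $a\mu_1/\sigma$ is small, the ReLU behaves almost linearly near the threshold, and a Taylor expansion of $\Phi$ and $\varphi$ around $0$ shows the dominant contribution comes from the small ($\mu_1$) cluster and scales like $p \cdot (a\mu_1)^2 \cdot \text{const} \asymp p \cdot a^2/p = a^2$... more care is needed: one finds the leading term is $\bigTheta{a^2 \mu_1} = \bigTheta{a^2/\sqrt{p}}$ after the large ($\mu_2$) cluster's contribution (which would be negative but is $O(a^2)$ and dominated) is accounted for. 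In the second regime, $a(\mu_1-\mu_2) \geq t/(\mu_1-\mu_2)$ — wait, the hypothesis is $\tfrac{t}{\mu_1-\mu_2} \leq a \leq 1-\beta$ — here $a\mu_1 \gtrsim \sigma$ so $\Phi(a\mu_1/\sigma)$ is bounded below by a constant, the $\mu_1$-cluster contributes $\asymp p \cdot \mu_1 \cdot (\mu_1 - a^2\mu_1) = p\mu_1^2(1-a^2) \asymp 1-a^2$, which is $\bigTheta{1}$ when $a \leq 1-\beta$; combined with a lower-order-or-controlled contribution from the $\mu_2$ cluster (whose sign I must check — since $a\mu_2 < 0$, $\max\{0, a\mu_2 + \sigma W\}$ is supported on a tail and its contribution is exponentially small or $O(a)$), one gets $\bbE[\inner{g_{\bu}(\bX),\bu^*}] \geq \bigTheta{a}$.

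The main obstacle I anticipate is controlling the sign and magnitude of the contribution from the \emph{large} cluster ($\nu = \mu_2 = -\sqrt{p/(1-p)}$, which has probability $1-p$ close to $1$): because $\mu_2 < 0$, the term $\max\{0, a\mu_2 + \sigma W\}$ is nonzero only on an event of probability $\Phi(a\mu_2/\sigma) < 1/2$, and the factor $(\mu_2 - a(a\mu_2 + \sigma W))$ can be negative, so this cluster contributes a \emph{negative} term that must be shown to be dominated by the positive contribution of the small cluster in both regimes. Handling this requires carefully tracking constants in the Gaussian tail estimates and verifying that choosing $t$ (and hence $p_0$) small enough makes the small-cluster term win by a constant factor; I would also need the elementary inequality $\mu_1 \geq \sqrt{1/(2p)}$ and $|\mu_2| \leq \sqrt{2p}$ valid for $p < 1/2$ to make the asymptotic comparisons clean. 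A secondary technical point is justifying the interchange of expectation and the orthogonal-noise averaging, which is routine since all moments of Gaussians are finite and $\phi$ has at most quadratic growth.
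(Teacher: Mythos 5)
Your proposal follows essentially the same route as the paper's proof: reduce via Lemma~\ref{lemProjectedGrad} to $\bbE[\max\{0,\inner{\bX,\bu}\}(\inner{\bX,\bu^*}-a\inner{\bX,\bu})]$, condition on the cluster $\nu\in\{\mu_1,\mu_2\}$, decompose $\bu$ along $\bu^*$ and an orthogonal $\be_2$, obtain one-dimensional Gaussian integrals, and split into cases according to whether $a(\mu_1-\mu_2)$ is small or bounded below by the threshold $t$. The paper uses density lower bounds and integral manipulations rather than the exact $\Phi/\varphi$ closed forms you propose, but that is a surface-level difference. You also correctly identify the main delicacy — controlling the negative contribution from the large $\mu_2$ cluster.

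One concrete slip in your Case 2 heuristic: you estimate the $\mu_1$-cluster contribution as $p\cdot\mu_1\cdot(\mu_1-a^2\mu_1)=p\mu_1^2(1-a^2)\asymp 1-a^2=\Theta(1)$, but in the regime $a\mu_1\gtrsim\sigma$ one has $\bbE[\max\{0,a\mu_1+\sigma W\}]\asymp a\mu_1$ rather than $\mu_1$, so the correct leading term is $p\cdot(a\mu_1)\cdot\mu_1(1-a^2)=a(1-a^2)(1-p)=\Theta(a)$, which matches the lemma's claimed bound. This matters because on the boundary of your second regime $a\geq t/(\mu_1-\mu_2)\approx t\sqrt{p}$, the quantity $a$ can be as small as $\Theta(\sqrt{p})$, so $\Theta(1)$ and $\Theta(a)$ are genuinely different scalings; the intermediate $\Theta(1)$ claim is inconsistent with your (correct) final conclusion. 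A fully written-out proof along your lines would need to carry the factor $a$ properly.
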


\begin{proof}
\bgroup
\renewcommand{\ainner}{a_1}
\renewcommand{\sigmainner}{a_2}
For simplicity abbreviate $\ainner = \inner{\bu, \bu^*}$ and $\sigmainner = \sqrt{1 - \inner{\bu, \bu^*}^2}$.
Define $\be_2 = \frac{\bu - \ainner \bu^*}{\sigmainner}$.
Since $\inner{\bx, \be_2} \sim \mathcal{N}(0, 1)$ we know
$\expectation[\max \{0, \inner{\bx, \be_2}\}^2] = \frac{1}{2}$
and 
$\expectation[\max \{0, \inner{\bx, \be_2}\}] = \sqrt{\frac{1}{2 \pi}}$ by applying the expectation of the half normal distribution.
and 
$f_{\inner{\bx, \be_2}}(y) \geq \underline{p}$ for $y \in [-t_0, t_0]$.

Abbreviate $f_i(x) = f_{\inner{\bx, \be_2}}\left( \frac{x - \mu_i \ainner}{\sigmainner} \right) = f_{\mathcal{N}(0,1)}\left( \frac{x - \mu_i \ainner}{\sigmainner} \right)$.
By applying \autoref{lemProjectedGrad} have

\vspace{-0.1cm}
\begin{align*}
& \expectation_{\bx \sim \Data_p} [\inner{g_\bu(\bx), \bu^*}]
 = 
\expectation_{\bx \sim \Data_p} (\inner{\bx, \bu} \inner{\bx, \bu^*} - \inner{\bx, \bu}^2 \ainner)
 =
\int\limits_0^\infty
p (x \mu_1 - x^2 \ainner) f_1(x) 
+
(1-p) (x \mu_2 - x^2 \ainner) f_2(x)
dx
\\
& =
\int\limits_0^{(\mu_1 - \mu_2) \ainner}
p \mu_1 x f_1(x)
dx
+ 
\sqrt{(1-p) p}
\int\limits_0^\infty
\left( x + (\mu_1 - \mu_2) \ainner - x \right) f_2(x)
dx
 - 
\ainner
\int\limits_0^\infty
x^2
(p f_1(x) + (1-p) f_2(x))
dx.
\end{align*}

We continue by bounding each term.

\paragraph{Case 1} If $(\mu_1 - \mu_2) \ainner \leq t_0$
\begin{align*}
\int\limits_0^{(\mu_1 - \mu_2) \ainner}
p \mu_1 x f_1(x)
dx &
\geq 
\frac{\underline{p} p \mu_1}{2}
\left(
(\mu_1 - \mu_2) \ainner
\right)^2
\geq \frac{\underline{p}\ainner^2}{2}\left(\frac{(1-p)^{1.5}}{\sqrt{p}} + \frac{p^{1.5}}{\sqrt{1-p}} - 2 \sqrt{p (1-p)} \right)
 \geq
\bigTheta{\frac{\ainner^2}{\sqrt{p}}}.
\end{align*}

\paragraph{Case 2} If $(\mu_1 - \mu_2) \ainner > t_0$.
\vspace{-0.2cm}
\begin{align*}
p \mu_1
\int\limits_0^{(\mu_1 - \mu_2) \ainner}
x f_1(x) dx &
\geq 
\sqrt{(1-p) p}
\int\limits_{(\mu_1 - \mu_2) \ainner - t_0}^{(\mu_1 - \mu_2) \ainner} x f_1(x)
dx 
\geq 
\underline{p}
\sqrt{(1-p) p}
\left(
(\mu_1 - \mu_2) \ainner
\right)^2
- 
\left(
(\mu_1 - \mu_2) \ainner
- t_0
\right)^2
\\ & \geq 
\underline{p}
\sqrt{(1-p) p}
\left(
2
t_0
(\mu_1 - \mu_2) \ainner
- 
t_0^2
\right)
\geq 
\underline{p}
t_0
\sqrt{(1-p) p}
(\mu_1 - \mu_2) \ainner
\geq 
\underline{p}
t_0
\ainner.
\end{align*}

For $p < p_0$, the following terms can be bounded as such
\vspace{-0.1cm}
\begin{align*}
\sqrt{(1-p) p}
\int\limits_0^\infty
\left( (\mu_1 - \mu_2) \ainner \right) f_2(x)
dx 
= 
\ainner
\int\limits_0^\infty f_2(x) dx
\geq
\ainner
\max
\left\{
0,
\frac{1}{2}
- 
\frac{2}{\sqrt{\pi}} \frac{\sqrt{\frac{p}{1-p}}\ainner}{\sigmainner}
\right\}
\end{align*}

and 
\begin{equation*}
\begin{split}
&\ainner
\int\limits_0^\infty
x^2
(p f_1(x) + (1-p) f_2(x))
dx 
\leq
\ainner
\biggl(
p
\int\limits_0^{
\hspace{-0.1cm}
\mu_1 \ainner
}
x^2
f_1(x)
dx
+
p
\int\limits_0^\infty
\hspace{-0.1cm}
\hspace{-0.1cm}
\left(
x + \mu_1 \ainner
\right)^2
f_{\inner{\bx, \be_2}}\left(\frac{x}{\sigmainner}\right)
dx
+ 
\frac{(1-p) (1- \ainner^2)}{2}
\biggr)
\\
&= 
\ainner
\biggl(
p
\hspace{-0.1cm}
\int\limits_0^{
\hspace{-0.1cm}
\mu_1 \ainner
}
x^2
f_1(x)
dx
+
p
\hspace{-0.1cm}
\int\limits_0^\infty
\hspace{-0.1cm}
x^2
f_{\inner{\bx, \be_2}}\left(\frac{x}{\sigmainner}\right)
dx
+
p\hspace{-0.1cm}\int\limits_0^\infty
\hspace{-0.1cm}
2x\mu_1\ainner
f_{\inner{\bx, \be_2}}\left(\frac{x}{\sigmainner}\right)
dx+p
\left(\mu_1 \ainner\right)^2
\int\limits_0^\infty
f_{\inner{\bx, \be_2}}\left(\frac{x}{\sigmainner}\right)
dx+\frac{(1-p) (1- \ainner^2)}{2}
\biggr)
\\
&\leq
\ainner
\Biggl[
p\biggl(
\frac{2}{3} \left(\mu_1 \ainner\right)^3
+
\frac{ (1 - \ainner^2)}{2}
+
2 \sqrt{\frac{1}{2 \pi}} \mu_1 \ainner
+
\left(\mu_1 \ainner\right)^2
\frac{1}{2}
\biggr)
+ 
\frac{(1-p) (1- \ainner^2)}{2}
\Biggr]
\leq 
\frac{\ainner \sigmainner^2}{2} + \bigTheta{\sqrt{p}\ainner^2}
\end{split}
\end{equation*}

Thus in Case 1 have.
\begin{align*}
\expectation [\inner{g_\bu(\bx), \bu^*}] 
\geq
\bigTheta{\frac{\ainner^2}{\sqrt{p}}}
+ 
\frac{\ainner}{\sigmainner}
\biggl( \frac{1}{2} - 2 \frac{\sqrt{\frac{p}{1-p}}\ainner}{\sigmainner}
\biggr)
- 
\frac{\ainner \sigmainner^2}{2} - \bigTheta{\sqrt{p}\ainner^2}
\geq 
\bigTheta{\frac{\ainner^2}{\sqrt{p}}}.
\end{align*}
Analogously in Case 2: 
$\expectation [\inner{g_\bu(\bx), \bu^*}] \geq \bigTheta{\ainner}$
\egroup
\end{proof} 
\begin{lemma}
    \label{relu2ScoreConcentration}
    For $n = \Omega(d)$ have
    $$
    \probability\left[
    \max_{\hat{\bu} \in \bbS_{d-1}} \myabs{\sumin \frac{\psi(\inner{\bX_i, \hat{\bu}})}{n} 
    -
    \expectation[\psi(\inner{\bX, \hat{\bu}})]
    }
    \geq \delta
    \right]
    \leq
    \exp \left( - \bigTheta{\frac{n \delta^2}{d \log(1/\delta)}} \right)
    $$
\end{lemma}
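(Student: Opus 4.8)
The plan is to establish the uniform bound via a standard $\epsilon$-net argument over $\bbS_{d-1}$, combined with a pointwise sub-exponential concentration bound and a Lipschitz-type continuity estimate for the empirical objective. First I would fix a direction $\hat{\bu} \in \bbS_{d-1}$ and bound the deviation $\left|\sumin \frac{\psi(\inner{\bX_i, \hat{\bu}})}{n} - \expectation[\psi(\inner{\bX, \hat{\bu}})]\right|$ pointwise. Since $\psi(x) = \max\{0,x\}^2$ and $\inner{\bX, \hat{\bu}}$ is a mixture of two Gaussians with bounded means (the cluster centers $\mu_1, \mu_2$ satisfy $|\mu_i| = \bigO{1/\sqrt{p}}$, but the relevant projected coordinate has variance $1 - \inner{\bu^*,\hat{\bu}}^2 \leq 1$ plus a bounded shift), the random variable $\psi(\inner{\bX, \hat{\bu}})$ is sub-exponential with $\|\psi(\inner{\bX, \hat{\bu}})\|_{\psi_1} = \bigO{1}$ uniformly; here one uses that the planted coordinate contributes a bounded amount in expectation. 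Applying Bernstein's inequality as in \autoref{relu2Concentration} gives, for each fixed $\hat{\bu}$, a tail bound of the form $\exp(-\bigTheta{n \min\{\delta^2, \delta\}})$, which for $\delta \in (0,1)$ is $\exp(-\bigTheta{n\delta^2})$.

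Next I would take an $\epsilon$-net $\epsilonNet$ of $\bbS_{d-1}$ of size $|\epsilonNet| \leq (3/\epsilon)^d$ and union-bound the pointwise estimate over the net, which costs a factor $d\log(1/\epsilon)$ in the exponent. The remaining task is to control the fluctuation of $F(\hat{\bu}) := \sumin \frac{\psi(\inner{\bX_i, \hat{\bu}})}{n} - \expectation[\psi(\inner{\bX, \hat{\bu}})]$ between a point and its nearest net point. Since $\psi$ is locally Lipschitz with $|\psi(x) - \psi(y)| \leq |x-y|(|x|+|y|)$, one gets $|F(\hat{\bu}) - F(\hat{\bu}')| \leq \|\hat{\bu} - \hat{\bu}'\| \cdot \bigO{\frac{1}{n}\sumin \mynorm{\bX_i}^2}$ plus the analogous bound on the expectation term. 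For $n = \Omega(d)$, the quantity $\frac{1}{n}\sumin \mynorm{\bX_i}^2$ concentrates around $\bigTheta{d}$ with exponentially high probability (standard $\chi^2$-type concentration, again using that the planted direction adds only a bounded amount to one coordinate). Hence choosing $\epsilon = \delta / (\text{poly}(d))$, say $\epsilon \approx \delta/d$, makes the net-discretization error at most $\delta/2$, and the $\log(1/\epsilon) = \bigO{\log(d/\delta)}$ factor is absorbed into the $\log(1/\delta)$ in the statement (for the relevant regime of $\delta$ the extra $\log d$ is lower order, or one notes $d$ appears multiplicatively already).

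Combining: the net has $\exp(\bigO{d\log(1/\delta)})$ points, each failing with probability $\exp(-\bigTheta{n\delta^2})$, and the discretization bridge fails with probability $\exp(-\bigTheta{n})$; a union bound yields the claimed $\exp(-\bigTheta{\frac{n\delta^2}{d\log(1/\delta)}})$ after rebalancing $\delta$ and $\epsilon$. I expect the main obstacle to be the continuity/covering step: because $\psi$ has unbounded derivative, the naive Lipschitz constant scales with $\max_i \mynorm{\bX_i}$, which is $\bigTheta{\sqrt{d\log n}}$, so one must either work with the squared-envelope bound $\frac{1}{n}\sum \mynorm{\bX_i}^2$ as above or truncate the samples; getting the $d\log(1/\delta)$ denominator (rather than something worse like $d^2$ or an extra $\log n$) requires being careful that the net resolution $\epsilon$ only needs to be polynomially small in $d$ and $\delta$, not exponentially small. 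The pointwise sub-exponential bound and the $\chi^2$ concentration of $\sum \mynorm{\bX_i}^2$ are routine given \cite{vershynin2018high}.
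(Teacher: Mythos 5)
Your overall architecture (pointwise bound plus $\epsilon$-net over $\bbS_{d-1}$ plus a Lipschitz-type bridging estimate) matches the paper, but there is a genuine gap in the pointwise step, and your covering step is weaker than the paper's in a way you partially flag.

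The main gap is the claim that $\oneOrclizNorm{\psi(\inner{\bX,\hat\bu})}=\bigO{1}$ uniformly over $\hat\bu\in\bbS_{d-1}$. This is false in exactly the directions that matter. When $\hat\bu$ is aligned with $\bu^*$, $\inner{\bX,\hat\bu}$ is (close to) the imbalanced-cluster variable $\nu$, which equals $\mu_1 = \sqrt{(1-p)/p}$ with probability $p$; hence $\psi(\inner{\bX,\hat\bu})\approx 1/p$ with probability $p$. A random variable with that tail has $\psi_1$-norm of order $1/\bigl(p\log(1/p)\bigr)$, which diverges as $p\to 0$. The appeal to ``the planted coordinate contributes a bounded amount in expectation'' does not help, since what controls the sub-exponential norm (and hence Bernstein) is the tail, not the mean. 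Naively applying Bernstein with the correct $\psi_1$-norm $K\asymp 1/(p\log(1/p))$ yields a pointwise bound of order $\exp(-\bigTheta{n\delta^2 p^2\log^2(1/p)})$, which is far weaker than the $\exp(-\bigTheta{n\delta^2})$ you need before the net union bound, so the argument does not close. The paper's proof avoids this by conditioning on the cluster assignment: it writes the data as $\bY_1,\bY_2$ with $n_1+n_2=n$, centers each $\bA_j(\bu)=\psi(\bY_j\bu)-\expectation[\psi(\bY_j\bu)]$ within its own cluster (so each term has $\psi_1$-norm $\bigO{\ainner|\mu_j|}$, and crucially the weighted sum of squares $\sum_j n_j(\ainner\mu_j)^2/n^2 \asymp \ainner^2/n$ uses $p\mu_1^2+(1-p)\mu_2^2=1$), and then separately controls the fluctuation of the empirical cluster sizes $n_j$ via a Chernoff bound. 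This two-scale decomposition is the missing ingredient.

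The secondary difference is in the covering step. You bound the modulus of continuity by $\epsilon\cdot\frac{1}{n}\sum_i\mynorm{\bX_i}^2 = \bigO{\epsilon d}$, forcing $\epsilon\asymp\delta/d$; the paper instead uses the factorization $\myabs{\psi(a)-\psi(b)}\le(\sqrt{\psi(a)}+\sqrt{\psi(b)})\myabs{\sqrt{\psi(a)}-\sqrt{\psi(b)}}$ together with Cauchy–Schwarz across the sum, giving a modulus $\bigO{\epsilon\opnorm{\bX}^2/n}=\bigO{\epsilon}$ for $n=\Omega(d)$, so that $\epsilon\asymp\delta$ suffices. Your version is not wrong, but it yields a net of size $\exp(d\log(d/\delta))$ rather than $\exp(d\log(1/\delta))$, introducing an extra $\log d$ in the exponent that does not match the stated denominator $d\log(1/\delta)$ unless $\delta$ is assumed polynomially small in $d$. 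You anticipate this yourself, but it is worth noting that the operator-norm trick removes the issue entirely.
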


\begin{proof}
First make the distinction into $\bY_j \sim 
\Data_{p,j}^{n_j}$ for $j\in \{1,2\}$, where $\sum_{j\in\{1,2\}} n_j = n$
We will first note that $\bA_j(\bu) = \psi(\bY_j \bu) - \expectation[\psi(\bY_j \bu)]$ is subexponential and thus 
$
    \oneOrclizNorm{
    \sum\limits_{j\in\{1,2\}}
    \sum\limits_{i = 1}^{n_j}
    \frac{\bA_{j,i}(\bu)}{n_j}} \leq \bigTheta{\frac{1}{n}}
$.
Thus, we can obtain
$$
\probability\left[ 
    \myabs{
    \sum\limits_{j\in\{1,2\}}
    \sum\limits^{n_j}_{i = 1} 
    \frac{\bA_{j,i}(\bu)}{n}} 
    \geq \delta_c
\right]
\leq
\exp
\left(
- \bigTheta{
\delta_c^2 n
}
\right)
$$
Let $\epsilonNet$ be the minimum size $\epsilon$-Net of the $d$-dimensional unit sphere.
Thus we know
$\myabs{\epsilonNet} \leq \left(\frac{3}{\epsilon} \right)^d$.
$$
\probability\left[ 
    \max_{\bu \in \epsilonNet}
    \myabs{
    \sum\limits_{j\in\{1,2\}}
    \sum\limits^{n_j}_{i = 1}
    \frac{\bA_{j,i}(\bu)}{n}} 
    \geq \delta_c
\right]
\leq
2 \exp
\left(
d \log\left(\frac{3}{\epsilon}\right)
- 
\bigTheta{\delta_c^2 n}
\right)
$$
Next we bound the maximum deviation for $\bu$ for $\mynorm{\bu' - \bu} < \epsilon$
\begin{flalign*}
    \myabs{
        \sumin \frac{\psi(\inner{\bX_i, \bu})}{n}
        -
        \sumin \frac{\psi(\inner{\bX_i, \bu'})}{n}
    }
    &
    \leq 
    \left(
        \sqrt{\sumin\frac{\psi(\inner{\bX_i, \bu})}{n}}
        +
        \sqrt{\sumin \frac{\psi(\inner{\bX_i, \bu'})}{n}}
    \right)
    \myabs{
        \sqrt{\sumin \frac{\psi(\inner{\bX_i, \bu})}{n}}
        -
        \sqrt{\sumin \frac{\psi(\inner{\bX_i, \bu'})}{n}}
    }
    \\
    &
    \leq 
    \frac{2 \opnorm{\bX} \mynorm{\bX (\bu - \bu')}}{n}
    \leq 
    \frac{2 \epsilon \|\bX\|_{op}^2}{n}.
\end{flalign*}
We continue by bounding for all $\bu \in \bbS_{d-1}$ by applying the triangle inequality
\begin{flalign*}
    \myabs{
    \sum\limits_{j\in\{1,2\}}
    \sum\limits^{n_j}_{i = 1}
    \frac{\bA_{j,i}(\bu)}{n}}
    &
    \leq
    \max_{\bu' \in \epsilonNet}
    \myabs{
    \sum\limits_{j\in\{1,2\}}
    \sum\limits^{n_j}_{i = 1}
    \frac{\bA_{j,i}(\bu)}{n}}
    +
    \frac{2 \epsilon \|\bX\|_{op}^2}{n}.
\end{flalign*}

Finally we notice that $\bX$ can be decomposed into the union of $\bY_1, \bY_2$ with $n_1 + n_2 = n$, where $n_1 \sim \mathop{\mathrm{Binomial}}(n, p)$.
Thus, by the Chernoff bound for the Binomial distribution, we have
$$
\probability\left[\myabs{
    \expectation[\psi(\inner{\bX, \bu})] - \sum\limits_{j\in\{1,2\}} 
    \frac{n_j}{n} \expectation\left[\psi\left(\inner{\bY_j, \bu}\right)\right]
}
\geq \delta_p
\right]
\leq 2 \exp\left( - \bigTheta{\delta_p^2 n p} \right)
$$
Thus, we can prove the Theorem by choosing $\epsilon = \bigTheta{\delta} $ for $n > d > 1$ and $\delta < 1$.
The second to last step follows bounding $\opnorm{\bX} \leq \opnorm{\bX (\bu^* {\bu^*}^\top)} + \opnorm{\bX (\bI - \bu^* {\bu^*}^\top)} \leq \bigTheta{\sqrt{n} + t}$ with probability $1 - 2 \exp(- t^2)$~\cite{vershynin2018high}.
\begin{flalign*}
    &\probability\left[
    \max_{\bu \in \bbS_{d-1}} \myabs{\sumin \frac{\psi(\bX_i \bu)}{n} -\expectation[\psi(\bX \bu)]} \geq \delta
    \right]
\\
    &\leq
    \probability\left[ 
    \max_{\bu \in \epsilonNet}
    \myabs{
    \sum\limits_{j\in\{1,2\}}
    \sum\limits^{n_j}_{i = 1}
    \frac{\bA_{j,i}}{n}} 
    \geq \frac{\delta}{3}
    \right]
    +
    \probability\left[
    \frac{2 \epsilon \opnorm{\bX}}{n}
    \geq \frac{\delta}{3}
    \right]
    +
    \probability\left[\myabs{
    \expectation[\psi(\inner{\bX, \bu})] - \sum\limits_{j\in\{1,2\}} \expectation\left[\psi\left(\inner{\bY_j, \bu}\right)\right]}
    \geq \frac{\delta}{3}
    \right]
\\
    &\leq
    \exp
    \left(
    d \log\left(\frac{3}{\epsilon}\right) - 
    \bigTheta{\delta^2 n}
    \right)
    + 
    2 \exp(- n)
    +
    \exp\left( - \bigTheta{\delta^2 n p} \right)
\leq
    \exp \left( - \bigTheta{\frac{n \delta^2}{d \log(1/\delta)}}\right).
\end{flalign*}
\end{proof} 
\begin{lemma}[Testing]
    \label{relu2Testing}
    For any $\ainner > \ainnerdash \geq a_{min} > 0$ there exists $\bar{p} > 0$ such that there exists a threshold $t$ such that with probability at least $1 - \exp \left( - \bigTheta{\frac{n \Delta^2}{d \log(1/\Delta)}} \right)$ where $\Delta = \ainner - \ainnerdash$ we have
    $$
    \sumin \frac{\psi(\inner{\bX_i, \bu})}{n} \geq t
    \,
    \text{and}
    \,
    \sumin \frac{\psi(\inner{\bX_i, \bu'})}{n} \leq t.
    $$
\end{lemma}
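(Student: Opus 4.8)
The plan is to separate a deterministic statement — that the population value $F(a):=\expectation_{\bX\sim\cD_{\cB(p)}}[\psi(\inner{\bX,\bu})]$ of the projection index is strictly increasing in the overlap $a:=\inner{\bu,\bu^*}$, with derivative bounded below — from the uniform concentration already proved in \autoref{relu2ScoreConcentration}. First I would note that $\bX$ has the representation $\nu\bu^*$ plus independent Gaussian noise orthogonal to $\bu^*$, so that for a unit vector $\bu$ the scalar $\inner{\bX,\bu}$ is distributed as $a\nu+\sqrt{1-a^2}Z$ with $Z\sim\cN(0,1)$ independent of $\nu\sim\cB(p)$; hence $F$ depends on $\bu$ only through $a$, and likewise on $\bu'$ only through $a':=\inner{\bu',\bu^*}$. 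Put $\Delta:=a-a'>0$. Once we know $F(a)-F(a')\geq \Theta(\Delta)$, the choice $t:=\tfrac12\bigl(F(a)+F(a')\bigr)$ and an application of \autoref{relu2ScoreConcentration} with accuracy $\delta:=\min\{\tfrac12\bigl(F(a)-F(a')\bigr),\tfrac12\}\asymp\Delta$ finish the proof: on the (single) event that every $\hat\bu\in\bbS_{d-1}$ has empirical average within $\delta$ of its expectation — which has probability at least $1-\exp(-\Theta(\tfrac{n\delta^2}{d\log(1/\delta)}))=1-\exp(-\Theta(\tfrac{n\Delta^2}{d\log(1/\Delta)}))$ (using $n=\Omega(d)$ as in \autoref{relu2ScoreConcentration}, and $\log(1/\delta)\asymp\log(1/\Delta)$) — we get $\sumin\frac{\psi(\inner{\bX_i,\bu})}{n}>F(a)-\delta\geq t$ and $\sumin\frac{\psi(\inner{\bX_i,\bu'})}{n}<F(a')+\delta\leq t$.

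The core step is thus the lower bound on $F(a)-F(a')$. I would obtain it by differentiating under the integral (legitimate since $\psi'(x)=2\max\{0,x\}$ is continuous and linearly bounded, $\nu$ is bounded, and the Gaussian tail is light) and using the algebraic identity $\inner{\bX,\bu^*}-a\inner{\bX,\bu}=\sqrt{1-a^2}\bigl(\sqrt{1-a^2}\,\nu-aZ\bigr)$, which yields
$$F'(a)=\frac{\expectation[\inner{g_\bu(\bX),\bu^*}]}{1-a^2}.$$
Since $1-a^2\leq1$, \autoref{relu2Expectation} immediately gives $F'(a)\geq\expectation[\inner{g_\bu(\bX),\bu^*}]\geq\Theta\bigl(\min\{a^2/\sqrt p,\,a\}\bigr)\geq c_0>0$ for every $a\in[a_{min},1-\beta]$, the constant $c_0$ depending only on $a_{min}$ (and only larger for small $p$), which is exactly the regime needed when \autoref{mainTheorem} is later invoked with its two thresholds $b>b-\delta$ both bounded away from $1$. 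Integrating, $F(a)-F(a')=\int_{a'}^{a}F'(u)\,\diff u\geq c_0\Delta$. If one also wants the fully general statement up to $a=1$, a short direct computation with the closed form $\expectation[\max\{0,W\}^2]=(m^2+\sigma^2)\Phi(m/\sigma)+m\sigma\varphi(m/\sigma)$ for $W\sim\cN(m,\sigma^2)$ (here $m=a\mu_i$, $\sigma^2=1-a^2$) shows that as $a\to1$ the two clusters separate from the noise and $F(a)\to 1-p$ with $F'(a)\to\Theta(1-2p)>0$, so $F'$ stays positive on $(1-\beta,1]$ as well once $p<\bar p$.

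I expect the main obstacle to be exactly this deterministic monotonicity — in particular handling small $a$, where the leading first-order Gaussian contributions of the two clusters to $\expectation[\inner{\bX,\bu}\,|\inner{\bX,\bu}|]$ cancel, so that one genuinely needs the finer $\Theta(a^2/\sqrt p)$ estimate of \autoref{relu2Expectation} (which is why invoking that lemma, rather than a crude direct bound, is the right move), together with the near-$a=1$ endpoint that \autoref{relu2Expectation} does not cover. Everything else — the distributional reduction, differentiating under the integral, and converting the population gap into the empirical statement via \autoref{relu2ScoreConcentration} — is routine given the earlier results.
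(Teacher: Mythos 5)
Your approach is correct in spirit but takes a genuinely different route from the paper. The paper's proof of \autoref{relu2Testing} makes no reference to \autoref{relu2Expectation} or to any derivative identity: it directly computes $\expectation[\phi(\inner{\bX,\bu})]$ by decomposing the positive tail of the two-component Gaussian mixture $a\nu+\sqrt{1-a^2}Z$, obtains closed-form upper and lower bounds of the form $\ainner^2(1-p)+(1-\ainner^2)\bigl(\tfrac{1+p}{2}-c_1\bigr)\leq \expectation[\phi]\leq \ainner^2(1-p)+(1-\ainner^2)\tfrac{1+p}{2}$, and then subtracts to get $\expectation[\phi(\inner{\bX,\bu})]-\expectation[\phi(\inner{\bX,\bu'})]\geq(\ainner^2-\ainnerdash^2)\tfrac{1-3p}{2}-c_1(1-\ainner^2)$, which is $\Theta(\Delta)$ once $p<\bar p$ and $\ainner+\ainnerdash\geq 2a_{\min}$. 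Your derivative identity $F'(a)=\expectation[\inner{g_\bu(\bX),\bu^*}]/(1-a^2)$ is correct (and a nice observation), and invoking \autoref{relu2Expectation} is a more modular argument that avoids redoing the half-normal computations; it also yields a $\bar p$ that does not implicitly depend on $\Delta$, which is slightly cleaner than what the paper's proof literally gives (there $c_1$ must be made smaller as $\Delta\to0$, hence $\bar p$ depends on $\Delta$). The concluding concentration-plus-threshold step is identical to the paper's.

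The one real gap — which you flagged yourself — is the regime $a\in(1-\beta,1]$, which \autoref{relu2Expectation} excludes. Your sketch computes $\lim_{a\to1}F'(a)=2(1-2p)>0$, which is correct, but you do not actually establish a uniform positive lower bound for $F'$ on the whole compact interval $(1-\beta,1]$; a limit at the endpoint plus positivity on $[a_{\min},1-\beta]$ does not by itself rule out a dip to zero in between. Since your own argument already produces the exact integrand $F'(a)=\expectation\bigl[2\max\{0,a\nu+\sigma Z\}(\nu-\tfrac{a}{\sigma}Z)\bigr]$ with $\sigma=\sqrt{1-a^2}$, the cleanest fix is to evaluate it explicitly via the two identities $\expectation[\max\{0,\cN(m,\sigma^2)\}]=m\Phi(m/\sigma)+\sigma\varphi(m/\sigma)$ and $\expectation[\max\{0,\cN(m,\sigma^2)\}\,Z]=\sigma\Phi(m/\sigma)$ and show the resulting expression is bounded below on $[1-\beta,1]$ for $p<\bar p$; this would make the monotonicity step fully rigorous on the entire range needed. (Worth noting: the paper's own bound also becomes delicate in this corner of the parameter space, so this endpoint is not a defect peculiar to your route.)
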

\begin{proof}
We begin by bounding the expectation of the score function.
\begin{align*}
    \expectation[\phi(\inner{\bX, \bu})]
    =
    & 
    p
    \int_0^\infty
    \left(
    x^2
    f_{\mathcal{N}(0,1)}\left(
        \frac{
            x - \ainner \sqrt{(1-p)/p}
        }{
            \sqrt{1 - \ainner^2}
        }
    \right)
    dx
    \right)
    +
    (1-p)
    \int_0^\infty
    \left(
    x^2
    f_{\mathcal{N}(0,1)}\left(
        \frac{
            x - \ainner \sqrt{p/(1-p)}
        }{
            \sqrt{1 - \ainner^2}
        }
    \right)
    dx
    \right)
    \\
    =
    &
    (1-p)   \ainner^2
    +
    p (1 - \ainner^2)
    - 
    p \int_{-\infty}^0\left(
    x^2
    f_{\mathcal{N}(0,1)}\left(
        \frac{
            x - \ainner \sqrt{(1-p)/p}
        }{
            \sqrt{1 - \ainner^2}
        }
    \right)
    dx
    \right)
    \\
    &
    +
(1-p)
\int_0^{\infty}
    \left(
    x^2
    f_{\mathcal{N}(0,1)}\left(
        \frac{
            \ainner \sqrt{p/(1-p)}
        }{
            \sqrt{1 - \ainner^2}
        }
    \right)
    dx
    \right).
\end{align*}

Thus for each $c_1 > 0$ there exists a $\bar{p} > 0$ such that for all $p \leq \bar{p}$ and $\ainner \geq l$
\begin{align*}
    \ainner^2 (1-p) + 
    (1-\ainner^2)
    \left(
    \frac{1+p}{2}
    - c_{1}
    \right)
    \leq
    \expectation[\phi(\inner{\bX, \bu})]
    \leq
    \ainner^2 (1-p) + (1-\ainner^2) \frac{1+p}{2}
\end{align*}
and for $\ainner \leq 0$ have $\expectation[\phi(\inner{\bX, \hat{\bu}})] \leq \frac{1}{2}$.

Thus, we can bound the difference in expectation for $\bu$ and $\bu'$
\begin{align*}
    \expectation[\phi(\inner{\bX, \bu})] - \expectation[\phi(\inner{\bX, \bu'})]
    \geq
    (\ainner^2 - \ainnerdash^2) \frac{1-3p}{2} - c_1 (1 - \ainner^2).
\end{align*}
Thus for $\bu$ and $\bu'$ there exists a $\bar{p} > 0$ such that $\expectation[\phi(\inner{\bX, \bu})] - \expectation[\phi(\inner{\bX, \bu'})] > 0$
By \autoref{relu2ScoreConcentration} we know that 
$\myabs{\sumin \frac{\phi(\inner{\bX_i, \bu})}{n} -\expectation[\phi(\inner{\bX, \bu})]} \leq \delta$ with probability at least $1 - \exp \left( - \bigTheta{\frac{n \delta^2}{d \log(1/\delta)}} \right)$.
The lemma follows by choosing $\delta = \frac{ \myabs{ \expectation[\phi(\inner{\bX, \bu})] - \expectation[\phi(\inner{\bX, \bu'})]}}{2}$ and $t$ accordingly.
\end{proof}
\subsection{Proof of \autoref{relu2_sc}}
As previously discussed, we will split the proof of \autoref{relu2_sc} into parts for each execution of \autoref{the_alg}.
\begin{proof}
Combining \autoref{relu2Expectation} and \autoref{relu2Concentration} and choosing $\delta = \bigTheta{\min \left\{ \frac{\ainner^2}{\sqrt{p}} , \ainner\right\}}$ we obtain 
$$
    \sumin \inner{\frac{g_{\bu}(\bX_i)}{n}, \bu^*}
    \geq 
    \bigTheta{
        \min \left\{ \frac{\ainner^2}{\sqrt{p}} , \ainner\right\}
    }
$$
with probability at least $1 - \bigTheta{\frac{1}{s}}$ if $\ainner \geq a = \bigTheta{\frac{1}{\sqrt{d p}}}$ and $n = \bigOmegas{d^2 p^2}$.
Next, notice that with probability at least $1 - \bigTheta{\frac{1}{s}}$.
$$
    \mynorm{\frac{\phi'(\bX \bu)}{n}} 
    \leq 
    \mynorm{\frac{\bX \bu}{n}} 
    \leq
    \opnorm{\frac{(\bbI_d - \bu^* {\bu^*}^\top) \bX}{n}}
    +
    \mynorm{\bX \bu^*}
    \leq
    \bigTheta{
        \frac{1}{\sqrt{n}}
    }.
$$
By utilizing that $(\bbI_d - \bu^* {\bu^*}^\top) \bX$ is a gaussian random matrix and that $\mynorm{\bX \bu^*}^2$ follows a binomial(scaled) distribution.
Thus, we can bound using \autoref{orthogonalGradientNorm}
$$
    \mynorm{\frac{\phi'(\bX \bu)^\top \bX \correction}{n}} 
    \leq
    \bigTheta{
        \sqrt{\frac{d}{n}}
    }.
$$
Thus by applying \autoref{gradientNormRewrite} we obtain  probability $1 - \bigTheta{\frac{1}{s}}$ 
\begin{align}
    \mynorm{\frac{\sumin g_\bu(\bX_i)}{n}} 
    &\leq
    \sqrt{
        \inner{ \frac{\sumin g_\bu(\bX_i)}{n}, \bu^* }^2 \left(1 + \frac{\ainner^2}{1 - \ainner^2}\right)
        + \bigTheta{\frac{d}{n}}
    }.
    \label{normBound}
\end{align}

\medskip
First Execution of \autoref{the_alg}:
There exist parameters $t_1$ and $\eta_1 = \Omega(\sqrt{d} p)$ such that for $a_1 = \bigTheta{\frac{1}{\sqrt{p d}}}$ there exists a constant $b_1 \in (0, 1)$ such that
the first execution of \autoref{the_alg} fulfills the criteria of \autoref{mainTheorem} for $n = \widetilde{\cO}(d^2 p^2)$.
\medskip

By \autoref{relu2Init} we have $\ainner \geq \bigTheta{\frac{1}{\sqrt{p d}}}$ and thus fulfilling \autoref{assInitialization}.

Using \eqref{normBound} we can bound
\begin{align}
\mynorm{\frac{\sumin g_\bu(\bX_i)}{n}} 
& \leq
\max \left\{
    2
    \sqrt{\inner{ \frac{\sumin g_\bu(\bX_i)}{n}, \bu^* }^2 \left(1 + \frac{\ainner^2}{1 - \ainner^2}\right)}
,
    \bigTheta{\sqrt{\frac{d}{n}}}
\right\}.
\label{toVerify}
\end{align}
Thus for all $\ainner \in \left(a_1, b_1\right)$ and $n = \bigOmegas{d^2 p^2}$ we have
\begin{align*}
    \ainner^{-1} \sumin \inner{\frac{g_{\bu}(\bX_i)}{n}, \bu^*}
    \geq 
    \mynorm{\frac{\sumin g_\bu(\bX_i)}{n}}.
\end{align*}
This can be verified by using the bound \eqref{toVerify}.
By applying \autoref{renormalization_helper1}, we can verify that \autoref{assGradient} is fulfilled.
Finally, using \autoref{relu2Testing} shows that \autoref{assTestability} is fulfilled.

\medskip
Second Execution of \autoref{the_alg}:
There exist parameters $t_2$ and $\eta_2$ such that for $a_2 = b_1 - \delta > 0$ and $b_2 = 1 - \beta$ for some $\epsilon > 0$ and $\delta > 0$
the second execution of \autoref{the_alg} fulfills the criteria of \autoref{mainTheorem} for $n = \widetilde{\cO}(d^2 p^2)$.
\medskip

As a result of the first execution of \autoref{the_alg}, there exists a $\bu$ such that $\ainner \geq b_1 - \delta$ and thus \autoref{assInitialization} is fulfilled.
For all $\bu$ for which $\ainner \in (a,b)$ 
there exists a constant choice for $\eta_2 > 0$ such that $\mynorm{\eta_2 \frac{\sumin g_\bu(\bX_i)}{n}}^2 \leq \eta_2 \ainner^{-1} \inner{ \frac{\sumin g_\bu(\bX_i)}{n}, \bu^*}$.
By applying \autoref{renormalization_helper2}, we can show \autoref{assGradient} is fulfilled.
Finally, \autoref{assTestability} can be fulfilled by \autoref{relu2Testing}. 
\end{proof}
\section{Proofs in \autoref{KurtosisSection}}
In this section use $\phi(x) = x^4$ and $\psi(x) = -|x|$.

\label{kurtosisProofs}
\begin{lemma}
\label{kurtosisExpectation}
Let $p < \frac{1}{3}$.
Then
$$
\expectation [\inner{g_{\bu}(\bX), \bu^*}] \geq \bigOmega{\frac{\ainner^3}{p}}.
$$
\end{lemma}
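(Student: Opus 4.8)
The plan is a direct moment computation, exploiting that for $\phi(x)=x^4$ the projected gradient is a cubic polynomial in the two relevant scalar coordinates. First I would write the gradient out explicitly: since $\phi'(x)=4x^3$ we have $g_{\bu}(\bx)=(\bI_d-\bu\bu^\top)\,4\inner{\bx,\bu}^3\bx=4\inner{\bx,\bu}^3\bigl(\bx-\inner{\bx,\bu}\bu\bigr)$, hence
\[
  \inner{g_{\bu}(\bx),\bu^*}=4\inner{\bx,\bu}^3\bigl(\inner{\bx,\bu^*}-\inner{\bu,\bu^*}\inner{\bx,\bu}\bigr).
\]
This is the smooth analogue of the reduction used for the ReLU2 index.

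Next I would reduce everything to two independent scalars. Set $a:=\inner{\bu,\bu^*}$, $s:=\sqrt{1-a^2}$, and $\be_2:=(\bu-a\bu^*)/s$, so that $\bu=a\bu^*+s\be_2$ with $\be_2\perp\bu^*$. Writing $\bx=\nu\bu^*+\bg$ as in \autoref{planted_vector_setting}, with $\bg\sim\cN(0,\bI_d-\bu^*{\bu^*}^\top)$ independent of $\nu$, and putting $h:=\inner{\bg,\be_2}\sim\cN(0,1)$ (independent of $\nu$), one gets $\inner{\bx,\bu^*}=\nu$ and $\inner{\bx,\bu}=a\nu+sh$. A short simplification then yields
\[
  \inner{g_{\bu}(\bx),\bu^*}=4s\,(a\nu+sh)^3\,(s\nu-ah).
\]

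The heart of the proof is to expand $(a\nu+sh)^3(s\nu-ah)$ into its eight monomials and take expectations using independence together with the moments of $\BR(p)$ from \autoref{brDefinition}, namely $\expectation[\nu]=\expectation[\nu^3]=0$, $\expectation[\nu^2]=1$, $\expectation[\nu^4]=1/p$, and of the standard Gaussian, $\expectation[h]=\expectation[h^3]=0$, $\expectation[h^2]=1$, $\expectation[h^4]=3$. Every term containing an odd power of $\nu$ or of $h$ drops, and the three surviving terms collapse to $\expectation[(a\nu+sh)^3(s\nu-ah)]=a^3 s\,(1/p-3)$, whence
\[
  \expectation\bigl[\inner{g_{\bu}(\bX),\bu^*}\bigr]=4s^2a^3\Bigl(\tfrac1p-3\Bigr)=4(1-a^2)a^3\cdot\frac{1-3p}{p}.
\]
For $p<\tfrac13$ this is strictly positive; since $1-3p$ is bounded below and $1-a^2$ is a positive constant in the range of $a$ where the lemma is invoked (recall it is only used before convergence, i.e. for $a$ bounded away from $1$), the right-hand side is $\bigOmega{a^3/p}$, which is the claim.

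There is no genuine analytic obstacle here — after the change of variables the statement becomes an exact identity — so the only care needed is bookkeeping: checking which of the eight monomials survive, and observing that the $a^3/p$ growth comes precisely from $\expectation[\nu^4]=1/p$ while the $\expectation[h^4]=3$ contribution is what is subtracted off, producing the numerator $1-3p$ and hence the hypothesis $p<\tfrac13$.
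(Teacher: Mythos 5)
Your proposal is correct and is essentially the paper's proof: both are direct moment computations exploiting the polynomial structure of $\phi'(x)=4x^3$, reducing to the two scalars $\nu=\inner{\bx,\bu^*}$ and $h=\inner{\bx,\be_2}$ and arriving at the exact identity $\expectation[\inner{g_{\bu}(\bX),\bu^*}]=4(1-a^2)a^3(1/p-3)$. The only difference is bookkeeping — you expand into the eight monomials of $(\nu,h)$ while the paper conditions on the three values of $\nu$ — and your remark that the $\Omega(a^3/p)$ conclusion requires $a$ bounded away from $1$ is a point the paper leaves implicit.
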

\begin{proof}
Define $\mu_0 = 0$ and $\mu_i = \frac{i}{\sqrt{p}}$ and $p_0 = 1-p$ and $p_i = \frac{p}{2}$ for $i \in \{1,-1\}$.
Thus
\begin{align*}
\expectation [\inner{g_{\bu}(\bX), \bu^*}] & =
    4
    \sum_{i \in \{ -1, 0, 1\}}
    p_i
    \left(
    3 \ainner (1 - \ainner^2)^2 (1 - \mu_i^2) 
    + \mu_i^2 \ainner^3 (1 - \ainner^2)(\mu_i^2 - 3)
    \right)
\\
& =
    4
    (1 - \ainner^2) \ainner^3 \left( \frac{1}{p} - 3 \right)
    \geq \bigOmega{\frac{\ainner^3}{p}}.
\end{align*}
\end{proof} \begin{lemma}
    \label{kurtosisConcentration}
    For $\delta \in (0, 1)$ have
    \begin{flalign*}
    \mathbb{P}
    \left[
    \myabs{
    \inner{\sumin \frac{g_\bu(\bX_i)}{n}, \bu^*}
    - 
    \inner{\expectation\left[g_\bu(\bX)\right], \bu^*}
    }
    \geq \delta
    \right]
    \leq
    \exp
    \left(
    -
    \bigTheta{
        \frac{
            n \delta^2
        }
        {
            \log(n \log(s))^2
            \left( 1 + \max\left\{\ainner^4 p^{-1}, \ainner^8 p^{-2}\right\} \right)
        }
    }
    \right).
    \end{flalign*}
\end{lemma}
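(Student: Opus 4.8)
The plan is to follow the template of the proof of \autoref{relu2Concentration}, adapting it to the projection index $\phi(x) = x^4$. The essential difference is that with a fourth power the summand $\inner{g_\bu(\bX),\bu^*}$ is a degree-$4$ polynomial of Gaussian variables, hence only \emph{sub-Weibull of order $1/2$} rather than sub-exponential; this is precisely what forces a truncation step and produces the $\log(n\log s)^2$ factor in the bound.

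First I would write the summand explicitly. Since $\phi'(x) = 4x^3$ we have $g_\bu(\bx) = 4\inner{\bx,\bu}^3(\bI_d - \bu\bu^\top)\bx$, so $\inner{g_\bu(\bx),\bu^*} = 4\inner{\bx,\bu}^3\bigl(\inner{\bx,\bu^*} - \ainner\inner{\bx,\bu}\bigr)$. By \autoref{planted_vector_setting} I can write $\inner{\bX,\bu^*} = \nu$ and $\inner{\bX,\bu} = \ainner\,\nu + \sqrt{1-\ainner^2}\,Z$ with $Z\sim\cN(0,1)$ independent of $\nu\sim\BR(p)$; substituting turns $\inner{g_\bu(\bX),\bu^*}$ into a fixed degree-$4$ polynomial $P(\nu,Z)$. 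When $\nu = 0$ this is $-4\ainner(1-\ainner^2)^2 Z^4$, a degree-$4$ polynomial of a standard Gaussian; when $\nu = \pm 1/\sqrt p$ it is a shifted degree-$4$ polynomial in $Z$ whose monomial coefficients are products of powers of $1/\sqrt p$, $\ainner$ and $\sqrt{1-\ainner^2}$.

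Next, as in the proof of \autoref{relu2Concentration}, I would condition on the realization $\hat{\nu} = (\hat{\nu}_1,\dots,\hat{\nu}_n)$ of the signal coordinates, equivalently on the empirical balance $\hat p = \frac1n\sum_i \indicator_{\hat{\nu}_i\neq 0}$. Conditionally the only randomness left is in the independent standard Gaussians $Z_i$, and the part of $P(\hat{\nu}_i,\cdot)$ that is constant in $Z$ is deterministic, contributing only to the conditional mean. A Chernoff bound for $\mathrm{Binomial}(n,p)$ controls $|\hat p - p|$, and the formula for $\bbE[\inner{g_\bu(\bX),\bu^*}]$ from \autoref{kurtosisExpectation} bounds the induced shift of the conditional mean; choosing the Chernoff slack of the appropriate order in $\delta$ keeps this shift at most $\delta/2$ with failure probability of lower order than the target bound. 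It then remains to control the fluctuation of $\frac1n\sum_i\bigl(P(\hat{\nu}_i,Z_i) - \bbE_Z[P(\hat{\nu}_i,Z_i)]\bigr)$ around $0$. Computing the variance proxy and the $\psi_{1/2}$-Orlicz norm of the centered polynomials $P(\hat{\nu}_i,\cdot)$ yields the scale $1 + \max\{\ainner^4 p^{-1},\ainner^8 p^{-2}\}$, with the $1$ accounting for the noise-level ($\hat{\nu}_i = 0$) terms and the $p$-dependent part coming from the $1/\sqrt p$-scaled coefficients of the $\hat{\nu}_i = \pm 1/\sqrt p$ terms. Since such variables are not sub-exponential, I would truncate each $Z_i$ at level $\tau = \bigTheta{\sqrt{\log(n\log s)}}$ and apply Bernstein's inequality to the truncated, centered summands, which are bounded by $\bigO{\tau^4}$ times this scale; this gives the claimed exponent with denominator $\log(n\log s)^2\bigl(1+\max\{\ainner^4 p^{-1},\ainner^8 p^{-2}\}\bigr)$, the small contribution of the truncated tails being handled by a routine splitting argument. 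A union bound over the binomial event, the truncation correction, and the Bernstein event completes the proof.

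The main obstacle is the variance computation: the monomial-by-monomial bookkeeping for $P(\pm1/\sqrt p, Z)$ — tracking how the discrete signal's $1/\sqrt p$ scaling propagates through each term, how it combines with the $\ainner$ and $\sqrt{1-\ainner^2}$ factors, and which monomial dominates in which regime of $\ainner$ relative to $p$ — so as to land exactly on $1 + \max\{\ainner^4 p^{-1},\ainner^8 p^{-2}\}$. This must be balanced against the choice of $\tau$, which has to be polylogarithmic so that the $\tau^4$ Bernstein range factor is harmless, yet large enough that the truncation-tail probability stays of lower order than $1/s$, so that the lemma can be invoked at every one of the $s = \bigOmega{\log d}$ gradient steps without the failure probabilities accumulating.
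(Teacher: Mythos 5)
Your proposal follows essentially the same route as the paper: condition on the realized signal coordinates $\hat\nu_i$, control the empirical balance $\hat p$ via a Chernoff bound, truncate the orthogonal Gaussian component at a polylogarithmic level to make the degree-4 summands bounded, and then apply a bounded-increment concentration inequality together with a union bound over the failure events. The only cosmetic difference is that you invoke Bernstein's inequality (tracking the actual variance proxy alongside the truncation range) where the paper applies Hoeffding's inequality to the truncated increments directly; both yield the stated exponent up to polylogarithmic factors.
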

\begin{proof}
Choose an ortho-normal basis $\bE = (\bu^*, \be_2, ... , \be_d) \in \mathbb{R}^{d \times d}$ such that $\bu = \ainner \bu^* + \sqrt{1 - \ainner^2} \be_2$.
Next we will condition on $\hat{\nu}_i = \inner{\bX_i, \bu^*}$ for all $i$ where $\hat{\nu} \in \{-\sqrt{1/p}, 0, \sqrt{1/p}\}$.
For convenience we define $\hat{p} = \sumin \frac{\bbI_{\hat{\nu}_i \neq 0}}{n}$ as well as $n_j = \sumin \bbI_{\hat{\nu}_i = j \sqrt{1/p}}$ with $j \in \{ -1, 0, 1 \}$.
Additionally define $\bY_j = \left( \bX \bigl| \inner{\bX, \bu^*} = j \sqrt{\frac{1}{p}} \right)$.

Next observe that $\max_{i \in [n]}\myabs{\inner{\bX_i, \be_2}} \leq \bigTheta{\log(n \log(s))}$ with probability at least $1 - \bigTheta{\frac{1}{s}}$ by applying the union bound.
Thus for all $j$ we have with probability at least $1 - \bigTheta{\frac{1}{s}}$
$$
    \myabs{\expectation \left[\inner{g_{\bu}(\bY_j), \bu^*}\right] - \inner{g_{\bu}(\bY_j), \bu^*}}
    \leq 
    \bigTheta{\max\left\{ 1, \ainner^2 \left(j \frac{1}{\sqrt{d}}\right)^2, \ainner^4 \left(j \frac{1}{\sqrt{d}}\right)^3 \right\} \log(n \log(s))}
$$
by 
By Hoeffdings inequality we have
\begin{align*}
\bbP 
\left(
    \myabs{
    \sumin \frac{ \inner{g_\bu(\bX_{i}), \bu^*}}{n}
    -
    \sum_{j \in \{-1, 0, 1\}}
    \frac{n_j \bbE\left[ \inner{g_\bu(\bY_j), \bu^*} \right]}{n}
    }
    \geq \delta
\right)
\leq
\exp
\left(
-
\bigTheta{
    \frac{\delta^2}
    {
    \log(n \log(s))^2
    \sumin
    \frac{\max \left\{ 1, \ainner^4 \nu_i^4, \ainner^8 \nu_i^6 \right\}}{n^2}
    }
}
\right)
\\
\leq
\exp
\left(
-
\bigTheta{
    \frac{n \delta^2}
    {
    \log(n \log(s))^2
    \left( 1 + \hat{p} \max\left\{\ainner^4 p^{-2}, \ainner^8 p^{-3}\right\} \right)
    }
}
\right).
\end{align*}

By applying the Chernoff bound for the Binomial distribution, we obtain
$$
\bbP
\left[
\myabs{
    \sum_{j \in \{-1, 0, 1\}}
    \frac{n_j \bbE\left[ \inner{g_\bu(\bY_j), \bu^*} \right]}{n}
    -
    \expectation \left[ \sumin \inner{g_\bu(\bX_i), \bu^*} \right]
}
\geq 
\delta
\right]
\leq 
\exp\left( - \bigTheta{\frac{n p \delta^2}{\max\left\{ \ainner, \frac{\ainner^3}{p} \right\}^2}} \right).
$$

Finally, we can combine all bounds to obtain the full statement.
For any constant $\bar{\delta} \in (0, 1)$ have
\begin{flalign*}
&\bbP 
\left(
    \myabs{
    \sumin \frac{ \inner{g_\bu(\bX_i), \bu^*}}{n}
    -
    \bbE[\inner{g_\bu(\bX), \bu^*}]
    }
    \geq \delta
\right)
\leq
\bbP 
\left(
    \myabs{
    \sumin \frac{ \inner{g_\bu(\bX_{i}), \bu^*}}{n}
    -
    \sum_{j \in \{-1, 0, 1\}}
    \frac{n_j \bbE\left[ \inner{g_\bu(\bY_j), \bu^*} \right]}{n}
    }
    \geq \frac{\delta}{2}
    \Biggl|
    \myabs{\hat{p} - p} \leq p \bar{\delta}
\right)
\\ &+
\bbP 
\left(
    \myabs{\hat{p} - p} \geq p \bar{\delta}
\right)
+
\bbP
\left[
\myabs{
    \sum_{j \in \{-1, 0, 1\}}
    \frac{n_j \bbE\left[ \inner{g_\bu(\bY_j), \bu^*} \right]}{n}
    -
    \expectation \left[ \inner{g_\bu(\bX), \bu^*} \right]
}
\geq 
\frac{\delta}{2}
\right]
\\ &\leq
\exp
\left(
-
\bigTheta{
    \frac{
        n \delta^2
    }
    {
        \log(n \log(s))^2
        \left( 1 + p \max\left\{\ainner^4 p^{-2}, \ainner^8 p^{-3}\right\} \right)
    }
}
\right).
\end{flalign*}
\end{proof} 
\begin{lemma}
    \label{absScoreConcentration}
    For $n = \Omega(d)$ have
    $$
    \probability\left[
    \max_{\hat{\bu} \in \bbS_{d-1}} \myabs{\sumin \frac{\psi(\inner{\bX_i, \hat{\bu}})}{n} 
    -
    \expectation[\psi(\inner{\bX, \hat{\bu}})]
    }
    \geq \delta
    \right]
    \leq
    \exp \left( - \bigTheta{\frac{n \delta^2}{d}} \right).
    $$
\end{lemma}

\begin{proof}
First, make the distinction into $\bY_j \sim 
\cN\left(\frac{j \bu^*}{\sqrt{p}}, \bI_d - \bu^* {\bu^*}^\top \right)^{n_j}$ for $j \in \{-1, 0, 1\}$, where $\sum_{j\in\{-1, 0, 1\}} n_j = n$.
We will first note that $\bA_j(\bu) = \psi(\bY_j \bu) - \expectation[\psi(\bY_j \bu)]$ is subgaussian and thus
$$
\probability\left[ 
    \myabs{
    \sum\limits_{j\in\{-1, 0, 1\}}
    \sum\limits^{n_j}_{i = 1} 
    \frac{\bA_{j,i}(\bu)}{n}} 
    \geq \delta_c
\right]
\leq
\exp
\left(
- \bigTheta{\delta_c^2 n}
\right).
$$
Let $\epsilonNet$ be the minimum size $\epsilon$-Net of the $d$-dimensional unit sphere.
Thus we know
$\myabs{\epsilonNet} \leq \left(\frac{3}{\epsilon} \right)^d$.
$$
\probability\left[ 
    \max_{\bu \in \epsilonNet}
    \myabs{
    \sum\limits_{j\in\{-1, 0, 1\}}
    \sum\limits^{n_j}_{i = 1}
    \frac{\bA_{j,i}(\bu)}{n}} 
    \geq \delta_c
\right]
\leq
\exp
\left(
d \log\left(\frac{3}{\epsilon}\right)
- 
\bigTheta{\delta_c^2 n}
\right).
$$

First we bound the maximum deviation for $\bu'$ for $\mynorm{\bu' - \bu} < \epsilon$
\begin{flalign*}
    \myabs{
        \sumin \frac{\psi(\inner{\bX_i, \bu})}{n}
        -
        \sumin \frac{\psi(\inner{\bX_i, \bu'})}{n}
    }
    &
    \leq \frac{\myonenorm{\bX (\bu - \bu')}}{n}
    \leq \frac{\mynorm{\bX (\bu - \bu')}}{\sqrt{n}}
    \leq \frac{\epsilon \opnorm{X}}{\sqrt{n}}.
\end{flalign*}
We continue by bounding for all $\bu \in \bbS_{d-1}$ by applying the triangle inequality
\begin{flalign*}
    \myabs{
    \sum\limits_{j\in\{-1, 0, 1\}}
    \sum\limits^{n_j}_{i = 1}
    \frac{\bA_{j,i}(\bu)}{n}}
    &
    \leq
    \max_{\bu' \in \epsilonNet}
    \myabs{
    \sum\limits_{j\in\{-1, 0, 1\}}
    \sum\limits^{n_j}_{i = 1}
    \frac{\bA_{j,i}(\bu')}{n}} 
    +
    \frac{\epsilon \opnorm{X}}{\sqrt{n}}.
\end{flalign*}

Finally we notice that $\bX$ can be decomposed into the union of $\bY_{-1}, \bY_0, \bY_1$.
Thus, by applying the Chernoff bound for the Binomial Distribution we obtain
$$
\probability\left[\myabs{
    \expectation[\psi(\inner{\bX, \bu})] - \sum\limits_{j\in\{-1, 0, 1\}} \expectation\left[\psi\left(\inner{\bY_j, \bu}\right)\right]
}
\geq \frac{\delta_p}{\sqrt{p}}
\right]
\leq 2 \exp\left( - \bigTheta{\delta_p^2 n p} \right).
$$
Thus, we can prove the Theorem by choosing $\epsilon = \bigTheta{\delta}$ for $n > d > 1$ and $\delta < 1$.
The second to last step follows bounding $\opnorm{\bX} \leq \opnorm{\bX (\bu^* {\bu^*}^\top)} + \opnorm{\bX (\bI - \bu^* {\bu^*}^\top)} \leq \bigTheta{\sqrt{n} + t}$ with probability $1 - 2 \exp(- t^2)$~\cite{vershynin2018high}.
\begin{align*}
    &\probability\left[
    \max_{\bu \in \bbS_{d-1}} \myabs{\sumin \frac{\psi(\bX_i \bu)}{n} -\expectation[\psi(\bX \bu)]} \geq \delta
    \right]
\\
    & \leq 
    \probability\left[ 
    \max_{\bu \in \epsilonNet}
    \myabs{
    \sum\limits_{j\in\{-1, 0, 1\}}
    \sum\limits^{n_j}_{i = 1}
    \frac{\bA_{j,i}}{n}} 
    \geq \frac{\delta}{3}
    \right]
    +
    \probability\left[
    \frac{\epsilon \opnorm{X}}{\sqrt{n}}
    \geq \frac{\delta}{3}
    \right]
    +
    \probability\left[\myabs{
    \expectation[\psi(\inner{\bX, \bu})] - \sum\limits_{j\in\{-1, 0, 1\}} \expectation\left[\psi\left(\inner{\bY_j, \bu}\right)\right]}
    \geq \frac{\delta}{3}
    \right]
\\
     & \leq
    \exp
    \left(
    d \log\left(\frac{3}{\epsilon}\right) - 
    \bigTheta{\delta^2 n}
    \right)
    + 
    2 \exp(- n)
    +
    \exp\left( - \bigTheta{\delta^2 n p^2} \right)
\leq 
    2 \exp \left( - \bigTheta{\frac{n \delta^2}{d}} \right).
\end{align*}
\end{proof} \begin{lemma}
\label{absTesting}
For all $\ainner > \inner{\bu', \bu^*}$ and $\delta > 0$.
there exists a threshold $t$ such that 
$$
    \sumin \frac{\psi(\inner{\bX_i, \bu})}{n} \geq t
    \,
    \text{and}
    \,
    \sumin \frac{\psi(\inner{\bX_i, \bu'})}{n} \leq t
$$
with a probability of at least
$1 - 2 \exp \left( - \bigTheta{\frac{n \delta^2}{d}} \right)$.
\end{lemma}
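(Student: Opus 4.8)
The plan is to follow the same template as \autoref{relu2Testing}. Writing $g(a):=\expectation\big[\,|\inner{\bX,\bu}|\,\big]$ for a unit vector $\bu$ with $\inner{\bu,\bu^*}=a$ (so that $\expectation[\psi(\inner{\bX,\bu})]=-g(a)$), I would: (i) obtain a closed form for $g$; (ii) show $g$ is non-increasing in $a$ on $[0,1]$, so that the population projection index is at least as large at $\bu$ as at $\bu'$ whenever $0\le\ainnerdash<\ainner$, with a strictly positive gap; and (iii) upgrade this population separation into an empirical one via \autoref{absScoreConcentration}, taking $t$ to be the midpoint of the two population values. (As in the algorithm's iterates we may assume $\ainnerdash\ge 0$; since $g$ is even this is the only regime in which the stated separation can hold.)

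For (i), conditioning on $\nu\sim\BR(p)$ makes $\inner{\bX,\bu}$ a mixture of $\cN(0,1-a^2)$ with weight $1-p$ and $\cN\!\big(\pm a/\sqrt p,\,1-a^2\big)$ with weight $p/2$ each. Using $\expectation|\cN(0,\sigma^2)|=\sigma\sqrt{2/\pi}$ together with the folded-normal identity $\expectation|\cN(\mu,\sigma^2)|=\mu\,\operatorname{erf}\!\big(\mu/(\sigma\sqrt2)\big)+\sigma\sqrt{2/\pi}\,e^{-\mu^2/2\sigma^2}$ for $\mu\ge 0$, this gives
\[
g(a)=(1-p)\sqrt{\tfrac{2}{\pi}}\sqrt{1-a^2}
+\sqrt p\, a\,\operatorname{erf}\!\Big(\tfrac{a}{\sqrt{2p(1-a^2)}}\Big)
+p\sqrt{\tfrac{2}{\pi}}\sqrt{1-a^2}\,e^{-a^2/(2p(1-a^2))}.
\]
Differentiating (the $\operatorname{erf}$ and exponential cross-terms cancel, exactly as in the $\max\{0,x\}^2$ computations), the inequality $g'(a)\le 0$ reduces, after substituting $x=a/\sqrt{2p(1-a^2)}$, to the single-variable statement $\frac{\sqrt\pi}{2x}\operatorname{erf}(x)\le(1-p)+p\,e^{-x^2}$ for all $x\ge 0$; since $\frac{\sqrt\pi}{2x}\operatorname{erf}(x)$ is the average of $e^{-t^2}$ over $[0,x]$, its excess over $e^{-x^2}$ relative to $1-e^{-x^2}$ is maximized as $x\to 0^+$ with value $2/3$, so the inequality holds precisely when $p\le\tfrac13$ — which is the standing hypothesis of this section. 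Hence $g$ is non-increasing, strictly decreasing on $(0,1)$, and $\Delta:=\expectation[\psi(\inner{\bX,\bu})]-\expectation[\psi(\inner{\bX,\bu'})]=g(\ainnerdash)-g(\ainner)>0$, with a quantitative lower bound obtained by retaining the explicitly decreasing half-normal term in $g'$.

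For (iii), I would apply \autoref{absScoreConcentration} (valid for $n=\Omega(d)$) at scale $\Delta/2$ to both $\bu$ and $\bu'$: a union bound gives that, with probability at least $1-2\exp(-\bigTheta{n\Delta^2/d})$, each of $\sumin\psi(\inner{\bX_i,\bu})/n$ and $\sumin\psi(\inner{\bX_i,\bu'})/n$ lies within $\Delta/2$ of its mean. Choosing $t=\tfrac12\big(\expectation[\psi(\inner{\bX,\bu})]+\expectation[\psi(\inner{\bX,\bu'})]\big)$ then forces $\sumin\psi(\inner{\bX_i,\bu})/n\ge t\ge\sumin\psi(\inner{\bX_i,\bu'})/n$ on that event, which is the claim (the parameter $\delta$ in the statement plays the role of $\Delta/2$, i.e.\ one takes it at most half the population gap, as in \autoref{relu2Testing}). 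The main obstacle is step (ii): establishing monotonicity of $g$, because the shifted Gaussian components pit a growing mean $a/\sqrt p$ against a shrinking standard deviation $\sqrt{1-a^2}$, and controlling this trade-off is exactly where $p<\tfrac13$ enters; once monotonicity is in hand, the concentration and threshold selection are routine and parallel the $\max\{0,x\}^2$ case.
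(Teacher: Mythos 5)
Your overall plan (separate the population means, then appeal to \autoref{absScoreConcentration} with $\delta=\Delta/2$ and take $t$ to be the midpoint) matches the paper's structure, but the way you establish the population gap is genuinely different from what the paper does. The paper never computes $g(a):=\expectation[|\inner{\bX,\bu}|]$ exactly and never proves monotonicity. Instead it sandwiches: it lower-bounds $p\,\expectation[|\cN(a/\sqrt p,1-a^2)|]$ by $\sqrt p\,a^2$ (via $\expectation|Z|\ge|\expectation Z|$ and $a\le 1$) and upper-bounds it by $\sqrt p$ (via Cauchy--Schwarz and $a^2/p+1-a^2\le 1/p$), arriving at
\[
\sqrt{\tfrac{2}{\pi}}\sqrt{1-a^2}(1-p)+\sqrt p\,a^2\ \le\ g(a)\ \le\ \sqrt{\tfrac{2}{\pi}}\sqrt{1-a^2}(1-p)+\sqrt p,
\]
and then requires the sandwich at $\ainner$ and $\ainnerdash$ to clear a $2\delta$ margin before a threshold $t$ can be placed between them. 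That is an elementary but lossy argument; it works only under a quantitative gap condition (the \verb|\iffalse| block in the lemma statement) and says nothing about monotonicity of $g$. Your approach derives the exact folded-normal formula for $g$, differentiates, exploits the standard cancellation of the $\operatorname{erf}$ and Gaussian cross-terms, and reduces $g'\le 0$ to the single-variable inequality $\frac{\sqrt\pi}{2x}\operatorname{erf}(x)\le(1-p)+p\,e^{-x^2}$ for all $x>0$. This is sharper: it establishes genuine monotonicity of the population score in $\ainner$, it dispenses with the paper's hidden gap condition, and it cleanly explains why the section's standing hypothesis $p<1/3$ is exactly the right constant (the ratio $\frac{(\sqrt\pi/(2x))\operatorname{erf}(x)-e^{-x^2}}{1-e^{-x^2}}$ tends to $2/3$ as $x\to 0^+$, so the inequality is $p\le 1/3$). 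The one soft spot in your write-up is that you assert, rather than prove, that this ratio is maximized at $x\to 0^+$; that does need a short extra argument (e.g.\ comparing the Taylor coefficients $1-x^2/3+x^4/10-\cdots$ against $1-x^2/3+x^4/6-\cdots$ and arguing the difference stays nonnegative, or showing the ratio is monotone), but the identified supremum and the resulting threshold $p\le 1/3$ are correct. Everything from step (iii) onward — applying \autoref{absScoreConcentration} at both $\bu$ and $\bu'$, union-bounding, and choosing the midpoint threshold — coincides with the paper.
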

\begin{proof}
By applying the expectation of the half-normal distribution, we obtain the following upper and lower bounds.
\begin{align*}
    -\sqrt{\frac{2}{\pi}}
    \sqrt{1 - \ainner^2} (1-p) 
    - \sqrt{p} \ainner^2
    \geq
    \expectation[\psi(\inner{\bX, \bu})] 
    \geq 
    -
    \sqrt{\frac{2}{\pi}}
    \sqrt{1 - \ainner^2} (1-p)
    - \sqrt{p}.
\end{align*}

Thus, if for all $\bu$
$$
    \myabs{
    \sumin \frac{\psi(\inner{\bX_i, \bu})}{n}
    - 
    \expectation[\psi(\inner{\bX, \bu})] 
    }
    \leq 
    \delta
$$
then 
$$
    -
    \sqrt{\frac{2}{\pi}}
    \sqrt{1 - \ainner^2} (1-p)
    - \sqrt{p}
    - \delta
    >
    -
    \sqrt{\frac{2}{\pi}}
    \sqrt{1 - \inner{\bu', \bu^*}^2} (1-p) 
    - \sqrt{p} \inner{\bu', \bu^*}^2
    + \delta
$$
indicates the existence of the threshold.
The Lemma follows by applying \autoref{absScoreConcentration} and reordering terms.
\end{proof} \begin{lemma}
\label{KurtosisGN}
With probability at least $1 -\bigTheta{\frac{1}{s}} - \exp\left(- \bigTheta{\frac{n p}{\log(n s)}}\right)$
have
$$
    \mynorm{\frac{\phi'(\bX \bu)}{n}} 
    \leq
    \bigTheta{
        \frac{1 + \frac{\ainner^3}{p}}{\sqrt{n}}
    }
$$
\end{lemma}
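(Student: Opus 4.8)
The plan is to reduce the statement to a moment bound and then concentrate a heavy‑tailed sum. Since $\phi(x)=x^4$ we have $\phi'(x)=4x^3$, so $\mynorm{\frac{\phi'(\bX\bu)}{n}}^2=\frac{16}{n^2}\sum_{i=1}^n \inner{\bX_i,\bu}^6$, and the claim is equivalent to showing $\sum_{i=1}^n \inner{\bX_i,\bu}^6 \le \bigTheta{n\left(1+\tfrac{\ainner^3}{p}\right)^2}=\bigTheta{n\left(1+\tfrac{\ainner^6}{p^2}\right)}$ with the stated probability. As in the proof of \autoref{kurtosisConcentration}, I would write $\bu=\ainner\bu^*+\sqrt{1-\ainner^2}\,\be_2$ and condition on $\hat\nu_i:=\inner{\bX_i,\bu^*}$, so that $\inner{\bX_i,\bu}=\ainner\hat\nu_i+\sqrt{1-\ainner^2}\,\inner{\bX_i,\be_2}$ with $\inner{\bX_i,\be_2}\sim\cN(0,1)$ i.i.d.\ and $\hat\nu_i\in\{0,\pm p^{-1/2}\}$ i.i.d.\ distributed as $\BR(p)$.

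Next I would set up two good events. By a Gaussian tail bound and a union bound over the $n$ samples, $\max_i\myabs{\inner{\bX_i,\be_2}}\le M:=\bigTheta{\log(ns)}$ with probability at least $1-\bigTheta{1/s}$; and by a Chernoff bound for the binomial, $n_{\neq 0}:=\#\{i:\hat\nu_i\neq 0\}\le 2pn$ with probability at least $1-\exp(-\bigTheta{np})$. On $\{n_{\neq 0}\le 2pn\}$, computing the sixth raw moment of $\inner{\bX_i,\bu}\sim\cN(\ainner\hat\nu_i,1-\ainner^2)$ termwise and using $1-\ainner^2\le1$ and $|\hat\nu_i|\le p^{-1/2}$ gives $\expectation\big[\sum_i\inner{\bX_i,\bu}^6 \mid \{\hat\nu_i\}\big]\le\bigTheta{n+n_{\neq 0}\,\ainner^6 p^{-3}}\le\bigTheta{n(1+\ainner^6 p^{-2})}$. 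It then suffices to bound the deviation of $\sum_i\inner{\bX_i,\bu}^6$ from this conditional mean by a constant multiple of the mean.

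For the deviation I would pass to the truncated summands $W_i:=\inner{\bX_i,\bu}^6\,\indicator_{\{|\inner{\bX_i,\be_2}|\le M\}}$, which are independent given $\{\hat\nu_i\}$, bounded by $\bigTheta{M^6}$ when $\hat\nu_i=0$ and by $\bigTheta{\ainner^6 p^{-3}+M^6}$ when $\hat\nu_i\neq0$, with conditional variances $\bigTheta{1}$ and $\bigTheta{1+\ainner^{12}p^{-6}}$ respectively. A Bernstein inequality for these $n$ independent bounded variables then shows that, on the truncation event (where $\sum_i W_i=\sum_i\inner{\bX_i,\bu}^6$), the sum lies within a constant factor of its conditional mean $\bigTheta{n(1+\ainner^6 p^{-2})}$ with probability at least $1-\exp(-\bigTheta{np/\log(ns)})$: the $np$ in the exponent comes from the variance term, since the $\sim pn$ ``signal'' coordinates each carry variance of order $\ainner^{12}p^{-6}$, while the $\log(ns)$ in the denominator is the cost of the crude truncation level $M$ appearing in the range term of Bernstein's bound. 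A union bound over the truncation event, $\{n_{\neq0}\le2pn\}$, and the Bernstein event then gives the lemma.

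The main obstacle is this last step. Because $\inner{\bX_i,\bu}^6$ is a sixth power of a Gaussian, it is only sub‑Weibull with exponent $1/3$, so no clean sub‑exponential Bernstein bound applies and one is forced to truncate; pushing the deviation down to a constant fraction of the mean $\bigTheta{n(1+\ainner^6 p^{-2})}$ requires balancing the truncation level against the probability that truncation is lossless, and it is exactly this balance that produces the $\log(ns)$ in the exponent (and, through the small number of heavy ``signal'' terms, the factor $p$). Once the two good events and the truncation level are fixed, the sixth‑moment computation and the Bernstein bookkeeping are routine.
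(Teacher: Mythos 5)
Your approach is essentially the same as the paper's: condition on the signal coordinates $\hat\nu_i=\inner{\bX_i,\bu^*}$, apply a Chernoff bound to control the number of nonzero $\hat\nu_i$, truncate the Gaussian component $\inner{\bX_i,\be_2}$ via a union bound, bound the conditional sixth moment by $\bigTheta{1+\ainner^6 p^{-2}}$, and then use a bounded-difference concentration inequality (you pick Bernstein, the paper uses Hoeffding — an immaterial difference here) before combining the failure events. One small slip: for the union bound over $n$ standard Gaussians to fail with probability $\bigO{1/s}$, the truncation level should be $M=\bigTheta{\sqrt{\log(ns)}}$ rather than $\bigTheta{\log(ns)}$; the paper uses the former. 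This does not change the shape of the argument, only the polylog factor in the exponent.
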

\begin{proof}
Choose an ortho-normal basis $\bE = (\bu^*, \be_2, ... , \be_d) \in \mathbb{R}^{d \times d}$ such that $\bu = \ainner \bu^* + \sqrt{1 - \ainner^2} \be_2$.
\begin{align*}
    \bbE[\phi'(\inner{\bX, \bu})^2]
    & = \bbE[\inner{\bX, \bu}^6]
    = \bbE\left[ \left(\ainner \inner{\bu^*, \bX} + \sqrt{1 - \ainner^2} \inner{\be_2 , \bX}\right)^6 \right]
    \\
    & \leq \bigTheta{1 + \bbE\left[\ainner \inner{\bu^*, \bX})^6\right]}
\leq \bigTheta{1 + \frac{\ainner^6}{p^2}}.
\end{align*}

For convenience we define $\hat{p} = \sumin \frac{\bbI_{\inner{\bX_i, \bu^*} \neq 0}}{n}$ as well as $n_j = \sumin \bbI_{\inner{\bX_i, \bu^*} = j \sqrt{1/p}}$ with $j \in \{ -1, 0, 1 \}$.
Additionally define $\bY_j = \left( \bX \bigl| \inner{\bX, \bu^*} = j \sqrt{\frac{1}{p}} \right)$.
For $\bar{\delta} \in (0,1)$
\begin{align}
& \bbP
\left[
    \myabs{
        \sumin \frac{\phi'(\inner{\bX_i, \bu})^2}{n}
        - 
        \bbE
        \left[
            \phi'(\inner{\bX, \bu})^2
        \right]
    }
    \geq
    \delta
\right] \nonumber
\\
& 
\leq \,
\bbP
\left[
    \myabs{
        \sumin \frac{\phi'(\inner{\bX_i, \bu})^2}{n}
        -
        \sum_{j \in \{-1, 0, 1\}}
        \frac{n_j}{n} \bbE \left[ \phi'\left(\inner{\bY_{j}, \bu}\right)^2 \right]
    }
    \geq
    \frac{\delta}{2}
    \,
    \bigg\vert
    \,
    \myabs{\hat{p} - p} \leq p \bar{\delta}
    \land
    \max_{i \in [n]} \inner{\bX_i, \be_2} \leq \bigTheta{\sqrt{\log(n s)}}
\right] \nonumber
\\ & 
+
\bbP 
\left[
    \myabs{\hat{p} - p} \geq p \bar{\delta}
\right]
+
\bbP 
\left[
    \max_{i \in [n]} \inner{\bX_i, \be_2} \geq \bigTheta{\sqrt{\log(n s)}}
\right]
+
\bbP
\left[
\myabs{
    \left(
    \sum_{j \in \{-1, 0, 1\}}
    \frac{n_j}{n} 
    \bbE \left[ \phi'\left(\inner{\bY_{j}, \bu}\right)^2 \right]
    \right)
    -
    \bbE \left[ \phi'(\inner{\bX, \bu})^2 \right]
}
\geq 
\frac{\delta}{2}
\right] \nonumber
\\
&
\leq \,
\exp\left( - \bigTheta{\frac{n \delta^2}{\log(n s) \left(1 + \frac{\ainner^6}{ p^2}\right)^2}}\right) +
2 \exp\left( - \frac{\bar{\delta}^2 n p}{3} \right)
+
2 \exp\left( \log(n) - \bigTheta{\log(ns)} \right)
+
\exp\left( - \bigTheta{\frac{\delta^2 n p}{\left(
1 + \frac{\ainner^6}{p^2}\right)^2}} \right) \label{concentration_ineqs1}
\\
&
\leq \,
\bigTheta{\frac{1}{s}}
+
\exp\left( - \bigTheta{\frac{\delta^2 n p}{\log(n s)\left(1+ \frac{\ainner^6}{p^2}\right)^2}} \right).\nonumber
\end{align}
Here \eqref{concentration_ineqs1} follows by applying Hoeffdings inequality and the Chernoff bound for the Binomial distribution.
The last term in \eqref{concentration_ineqs1} follows by bounding the difference $
    \bbE \left[ \phi'\left(\inner{\bY_{1}, \bu}\right)^2 \right] - \bbE \left[ \phi'\left(\inner{\bY_{0}, \bu}\right)^2 \right]
    =
    \bbE \left[ \phi'\left(\inner{\bY_{-1}, \bu}\right)^2 \right] - \bbE \left[ \phi'\left(\inner{\bY_{0}, \bu}\right)^2 \right] 
    \leq \bigTheta{1 + \frac{\ainner^6}{p^2}}
$ and also applying the Chernoff bound.Choosing $\delta = \bigTheta{1 + \frac{\ainner^6}{p^2}}$ yields 
$$
    \bbE[\phi'(\inner{\bX, \bu})^2] \leq \bigTheta{1 + \frac{\ainner^6}{p^2}}.
$$
With probability at least $1 - \bigTheta{\frac{1}{s}} - \exp\left(- \bigTheta{\frac{n p}{\log(n s)}}\right)$.
The lemma follows by computing the norm.
\end{proof} 
\subsection{Proof of \autoref{kurtosis_sc}}
\begin{proof}
As discussed, we will split the proof of \autoref{kurtosis_sc} into parts for each execution of \autoref{the_alg}.

\medskip
First Execution of \autoref{the_alg}:
    There exist parameters $t_1$ and $\eta_1 = \Omega(d p^2)$ such that for $a_1 = \bigTheta{\frac{1}{\sqrt{p d}}}$ and some constant $b_1 \in (0, 1)$
    the first execution of \autoref{the_alg} fulfills the criteria of \autoref{mainTheorem} for $n = \widetilde{\cO}(d^3 p^4)$.
\medskip

By the same argument as in \autoref{relu2Init}, we have $\ainner \geq \bigTheta{\frac{1}{\sqrt{p d}}}$ with sufficiently large probability, thus fulfilling \autoref{assInitialization}.

Using \autoref{kurtosisExpectation} and applying the concentration result of \autoref{kurtosisConcentration}, we obtain that
\begin{align}
    \sumin \inner{\frac{g_{\bu}(\bX_i)}{n}, \bu^*}
    =
    \bigOmega{
        \frac{\ainner^3}{p}
    }
    \label{kurtosisGradientFormula}
\end{align}
with probability of at least
$ 1 - \exp
\left(
-
\bigTheta{
    \frac{
        n (\ainner^3 p^{-1})^2
    }
    {
        \log(n \log(s))^2
        \left( 1 + \max\left\{\ainner^4 p^{-1}, \ainner^8 p^{-2}\right\} \right)
    }
}
\right)
$.

Next bound with probability at least 
$1 -\bigTheta{\frac{1}{s}} - \exp\left(- \bigTheta{\frac{n p}{\log(n s)}}\right)$
\begin{align}
\mynorm{\frac{\sumin g_\bu(\bX_i)}{n}} 
&\leq
\sqrt{
    \inner{ \frac{\sumin g_\bu(\bX_i)}{n}, \bu^* }^2 \left(1 + \frac{\ainner^2}{1 - \ainner^2}\right)
    + \mynorm{\frac{\phi'(\bX \bu)^\top \bX \correction}{n}}^2 
}
\label{grad_decomp}
\\
&\leq
\sqrt{
    \inner{ \frac{\sumin g_\bu(\bX_i)}{n}, \bu^* }^2 \left(1 + \frac{\ainner^2}{1 - \ainner^2}\right)
    + 
    \bigTheta{
        \left(1 + \frac{\ainner^6}{p^2}\right)\frac{d}{n}
    }
}
\label{ApplylemGradientBound}
\\
& \leq
\max \left\{
    2
    \sqrt{\inner{ \frac{\sumin g_\bu(\bX_i)}{n}, \bu^* }^2 \left(1 + \frac{\ainner^2}{1 - \ainner^2}\right)}
,
    2
    \sqrt{
        \bigTheta{\left(1 + \frac{\ainner^6}{p^2}\right)\frac{d}{n}}
    }
\right\}.
\label{c5}
\end{align}
Here, \eqref{grad_decomp} follows by by decomposing the norm.
\eqref{ApplylemGradientBound} follows by applying \autoref{KurtosisGN} and  \autoref{orthogonalGradientNorm}.
Finally using $\eta_1 = \bigOmega{p^2 d}$ we apply \autoref{renormalization_helper1} to demonstrate that \autoref{assGradient} is fulfilled if $n = \bigOmegas{d^3 p^4}$.

Finally, \autoref{absTesting} shows that \autoref{assTestability} is fulfilled.

\medskip
Second Execution of \autoref{the_alg}:
    There exist parameters $t_2$ and $\eta_2 = \Omega(d p^2)$ such that for $a_2 = b_1 - \delta$ and $b_2 = 1-\beta$ for some $3 > \epsilon > 0$
    the first execution of \autoref{the_alg} fulfills the criteria of \autoref{mainTheorem} for $n = \bigOmegas{d^3 p^4}$.
\medskip

\autoref{assInitialization} is fulfilled by the first execution of \autoref{the_alg}.
Using \eqref{c5} have $\mynorm{\frac{\sumin g_\bu(\bX_i)}{n}} = \bigO{\frac{1}{p}}$ if $\myabs{\ainner} \leq 1-\beta$.
Choosing $\eta_2 = \bigTheta{1} > 0$ yields that
${\eta_2 \ainner^{-1}\inner{\sumin \frac{g_\bu(\bX_i)}{n}, \bu^*} \geq \mynorm{\eta_2 \frac{\sumin g_\bu(\bX_i)}{n}}^2}$.
Thus applying \autoref{renormalization_helper2} we obtain that \autoref{assGradient} is satisfied if  $n = \bigOmegas{d^3 p^4}$.
Finally, \autoref{absTesting} shows that \autoref{assTestability} is fulfilled.

\end{proof}   
\section{Proofs in \autoref{ldplb}}
\subsection{Proof of \autoref{reduction}}
\begin{proof} 
First note we have $\bbE_{\cQ}[\relutwo{\bX}] = \bbE_{\cP}[\relutwo{\bX}$ if $\inner{\hat{\bu}, \bu^*} = 0$.
Thus, we can apply \autoref{relu2Testing} to obtain the result.
\end{proof} \autoref{akm} is nearly equlvalent to Lemma 6.7 in \citet{mao2022optimal}.
\begin{lemma}
\label{akm}
For $\alpha \in \mathbb{N}^n$, let $|\alpha| = \sum_{i=1}^n \alpha_i = \|\alpha\|_1$, and let $\|\alpha\|_0$ be the size of the support of $\alpha$. 
For $m \in [d]$, define a set 
\begin{equation}
\mathcal{A}(k, m) : = \left\{ \alpha \in \mathbb{N}^n : |\alpha| = k , \, \|\alpha\|_0 = m, \, \alpha_i \in \{0\} \cup \{ 3, 4, \dots \} \text{ for all } i \in [n] \right\} .
\end{equation}
Then we have 
$\left| \mathcal{A}(k, m) \right| \leq n^m k^k$.
\end{lemma}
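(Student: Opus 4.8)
The plan is a direct counting argument that decomposes the choice of $\alpha \in \mathcal{A}(k,m)$ into two independent choices: (i) selecting which $m$ of the $n$ coordinates are nonzero, and (ii) selecting the values placed on those coordinates. The claimed bound $n^m k^k$ is quite loose, so I would not try to track the sum constraint $|\alpha| = k$ carefully — just discard it where convenient.

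First I would bound the number of admissible supports. A support is a set $S \subseteq [n]$ with $|S| = m$, and there are $\binom{n}{m} \le n^m$ such sets. Next, conditioned on the support $S$, every nonzero entry satisfies $\alpha_i \in \{3,4,\dots\}$, and since all entries are nonnegative with total sum $k$, we also have $\alpha_i \le k$; hence each of the $m$ entries of $\alpha$ restricted to $S$ takes a value in the finite set $\{3,4,\dots,k\}$, giving at most $(k-2)^m \le k^m$ choices (dropping the constraint $\sum_{i\in S}\alpha_i = k$ only increases the count). Multiplying the two bounds yields $|\mathcal{A}(k,m)| \le n^m k^m$.

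To finish, I would observe that $m = \|\alpha\|_0 \le |\alpha| = k$ (each nonzero entry is at least $1$ — in fact at least $3$), so $k^m \le k^k$ and therefore $|\mathcal{A}(k,m)| \le n^m k^k$, as required. The only points needing a word of care are the degenerate cases: if $\mathcal{A}(k,m) = \varnothing$ (for instance when $m > k/3$, or when $k \in \{1,2\}$ and $m \ge 1$) the inequality is trivial, and the case $m = 0$ forces $k = 0$, where the set is a singleton and the bound holds with the usual convention $0^0 = 1$. I expect no genuine obstacle here; the statement is purely combinatorial and the bound has substantial slack, so the argument above should suffice essentially verbatim (this mirrors the proof of the analogous Lemma~6.7 in \citet{mao2022optimal}).
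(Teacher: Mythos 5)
Your argument is correct and follows essentially the same route as the paper's (one-line) proof, which bounds $|\mathcal{A}(k,m)|$ by a binomial coefficient for the support times $k^k$ for the values; your version is slightly more explicit, obtains the tighter intermediate bound $\binom{n}{m}k^m$ before relaxing to $n^mk^k$ via $m\le k$, and handles the degenerate cases. (Note the paper's displayed bound $\binom{n}{k}k^k$ appears to be a typo for $\binom{n}{m}k^k$, which is what your argument correctly produces.)
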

\begin{proof}
$\left| \mathcal{A}(k, m) \right| \leq \binom{n}{k} k^k$.
\end{proof} \begin{lemma}
\label{herm_ic}
Have the Imbalanced Clusters RV $\bX \sim \Data_\nu(p)$
\begin{center}
    $\expectation[\hat{h}_0(\bX)] = 1$,
    $\expectation[\hat{h}_1(\bX)] = 0$,
    $\expectation[\hat{h}_2(\bX)] = 0$
\end{center}
And given $p < 0.5$ have for $k \geq 3$ 
$$|\expectation[\hat{h}_k(\bX)]| \leq k^{k/2} p^{1 - k/2}.$$
\end{lemma}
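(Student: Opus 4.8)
My plan is to peel off the trivial low-order cases and then control $\hat h_k$ pointwise by a crude bound that is nonetheless tight enough at the two atoms of $\mathcal B(p)$. Writing $\mu_1=\sqrt{(1-p)/p}$ and $\mu_2=-\sqrt{p/(1-p)}$ for the cluster centers, the first three identities are immediate: $\hat h_0\equiv 1$, $\hat h_1(x)=x$, $\hat h_2(x)=(x^2-1)/\sqrt2$, and $\bX$ has mean $p\mu_1+(1-p)\mu_2=\sqrt{p(1-p)}-\sqrt{p(1-p)}=0$ and second moment $p\mu_1^2+(1-p)\mu_2^2=(1-p)+p=1$, so $\expectation[\hat h_0(\bX)]=1$ and $\expectation[\hat h_1(\bX)]=\expectation[\hat h_2(\bX)]=0$. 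This is just the statement that $\mathcal B(p)$ is centered with unit variance, so the first half of the lemma needs no real work.

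For $k\ge 3$ I would start from the explicit expansion $\hat h_k=\mathrm{He}_k/\sqrt{k!}$ with $\mathrm{He}_k(x)=\sum_{m=0}^{\lfloor k/2\rfloor}\frac{(-1)^m k!}{m!\,(k-2m)!\,2^m}\,x^{k-2m}$, and note that the sum of the absolute values of its coefficients, $\sum_{m=0}^{\lfloor k/2\rfloor}\frac{k!}{m!(k-2m)!2^m}$, equals the number of involutions in the symmetric group $S_k$ and is therefore at most $k!$. Since $|x|^{k-2m}\le\max(1,|x|)^k$ for every admissible $m$, this yields the pointwise bound $|\hat h_k(x)|\le\sqrt{k!}\,\max(1,|x|)^k$. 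Now $p<\tfrac12$ forces $|\mu_1|=\sqrt{(1-p)/p}\ge1$ and $|\mu_2|=\sqrt{p/(1-p)}\le1$, so $\max(1,|\mu_1|)^k=\mu_1^k=((1-p)/p)^{k/2}\le p^{-k/2}$ while $\max(1,|\mu_2|)^k=1$; plugging into $\expectation[\hat h_k(\bX)]=p\,\hat h_k(\mu_1)+(1-p)\,\hat h_k(\mu_2)$ gives $|\expectation[\hat h_k(\bX)]|\le\sqrt{k!}\,(p\cdot p^{-k/2}+1)=\sqrt{k!}\,(p^{1-k/2}+1)$. Because $k\ge3$ makes $1-\tfrac k2\le0$ and $p\in(0,1)$, we have $p^{1-k/2}\ge1$, so $p^{1-k/2}+1\le2\,p^{1-k/2}$, and it remains to verify $2\sqrt{k!}\le k^{k/2}$, i.e. $4\,k!\le k^k$. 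The ratio $k^k/k!$ is nondecreasing (the factor gained passing from $k$ to $k+1$ is $(1+1/k)^k\ge1$) and equals $27/6>4$ at $k=3$, so $4\,k!\le k^k$ for all $k\ge3$, which closes the bound $|\expectation[\hat h_k(\bX)]|\le k^{k/2}p^{1-k/2}$.

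The only place where any care is needed — and the main obstacle — is the pointwise Hermite bound: one must sum the coefficients of $\mathrm{He}_k$ (obtaining the involution count, at most $k!$) rather than bound each of the $\lfloor k/2\rfloor+1$ coefficients separately and multiply by the number of terms, since the latter would lose a factor of order $k$ and already violate the claimed bound at $k=3$. The steps $p^{1-k/2}+1\le 2p^{1-k/2}$ and $2\sqrt{k!}\le k^{k/2}$ both use $k\ge3$, which is precisely the hypothesis, so there is essentially no slack; everything else is routine bookkeeping.
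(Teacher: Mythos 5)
Your proof is correct and follows essentially the same route as the paper's: both control $|\expectation[\hat h_k(\bX)]|$ via the bound $\sum_i|c_i|\le k!$ on the Hermite coefficients together with the crude moment/evaluation bound $p\mu_1^k+(1-p)|\mu_2|^k\le 2p^{1-k/2}$, and then absorb the factor $2\sqrt{k!}$ into $k^{k/2}$ using $k\ge 3$. You are somewhat more explicit than the paper in spelling out the last step $4k!\le k^k$ and in identifying the coefficient sum as the involution number, but there is no substantive difference in approach.
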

\begin{proof}
    For $p < 0.5$ have $\expectation[\bX^k] \leq 2 p^{1 - k/2}$.
    Thus, we can bound 
    $$|\expectation[h_k(\bX)]| = \frac{1}{\sqrt{k!}} \left( \sum\limits_{i=0}^k c_i \expectation[\bX^i]\right) \leq |\expectation[\bX^i]| \sqrt{k!}.$$ 
    With the last inequality following from $ \sum\limits_{i=0}^k |c_i| \leq k!$.
\end{proof} \begin{lemma}[\citet{mao2022optimal}]
\label{formula}
Consider the distribution $\cP$ in \autoref{testing} and suppose the first $D$ moments of $\nu$ are finite. For $\alpha \in \cN^N$, let $|\alpha| := \sum_{i=1}^N \alpha_i$. Then
\begin{equation}\label{eq:ld-formula}
\ldlr^2 = \sum_{d=0}^D \expectation[\langle \bu, \bu' \rangle^d] \sum_{\substack{\alpha \in \cN^N \\ |\alpha| = d}} \prod_{i=1}^N \left(\expectation_{x \sim \nu}[h_{\alpha_i}(x)]\right)^2
\end{equation}
where $\bu$ and $\bu'$ are drawn independently from $\mathcal{U}$.
\end{lemma}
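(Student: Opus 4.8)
The plan is to diagonalize the problem in the Hermite basis of $L^2(\cQ)$, evaluate each coefficient as a moment of $\cP$, and then reorganize the resulting sum; here each of the $N=n$ coordinates of $\alpha$ corresponds to one sample. Throughout, $h_m$ is the degree-$m$ Hermite polynomial normalized so that $\expectation_{Z\sim\cN(0,1)}[h_m(Z)^2]=1$, and for a multi-index $\beta=(\beta_i)_{i\in[n]}\in(\Natural^d)^n$, with one block $\beta_i\in\Natural^d$ per sample, I write $h_\beta(\bX):=\prod_{i=1}^n\prod_{j=1}^d h_{\beta_{ij}}(X_{ij})$ and $|\beta|:=\sum_{i,j}\beta_{ij}$. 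Since $\{h_\beta\}$ is an orthonormal basis of $L^2(\cQ)$ with $\deg h_\beta=|\beta|$, and the coefficient of $h_\beta$ in the projection of the likelihood ratio onto degree at most $D$ equals $\expectation_{\cQ}[L\,h_\beta(\bX)]=\expectation_{\cP}[h_\beta(\bX)]$, we obtain
$$\ldlr^2=\sum_{|\beta|\leq D}\bigl(\expectation_{\cP}[h_\beta(\bX)]\bigr)^2 .$$
This is finite as soon as the first $D$ moments of $\nu$ are finite, and it is the correct object even though $\cP$ need not be absolutely continuous with respect to $\cQ$ (under $\cP$ the conditional law of $\bX$ is a degenerate Gaussian, so $L$ itself may fail to exist pointwise).

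The core step is to evaluate $\expectation_{\cP}[h_\beta(\bX)]$. Conditioning on $\bu^*=\bu$ makes the rows independent, so $\expectation_{\cP}[h_\beta(\bX)]=\expectation_{\bu\sim\cU}\bigl[\prod_{i=1}^n\expectation[h_{\beta_i}(\bX_i)\mid\bu^*=\bu]\bigr]$, where $h_{\beta_i}(\bX_i)=\prod_j h_{(\beta_i)_j}(X_{ij})$ and, given $\bu^*=\bu$, $\bX_i\sim\cN(\nu_i\bu,\bI_d-\bu\bu^\top)$ with $\nu_i\sim\cF$ independent. I would then establish the single-row identity: for a unit vector $\bu$ and $\gamma\in\Natural^d$,
$$\expectation_{\bX\sim\cN(\nu\bu,\,\bI_d-\bu\bu^\top)}[h_\gamma(\bX)]=\sqrt{\tfrac{|\gamma|!}{\gamma!}}\;h_{|\gamma|}(\nu)\;\bu^\gamma , \qquad \gamma!:=\prod_{j}\gamma_j!,\quad \bu^\gamma:=\prod_{j} u_j^{\gamma_j}.$$
Its proof uses the Hermite generating function $\sum_{\ell}h_\ell(x)\,t^\ell/\sqrt{\ell!}=e^{tx-t^2/2}$: applied coordinatewise, $\sum_\gamma\expectation[h_\gamma(\bX)]\,\mathbf{t}^\gamma/\sqrt{\gamma!}$ equals the shifted moment generating function $\expectation\bigl[\exp(\langle\mathbf{t},\bX\rangle-\tfrac12\|\mathbf{t}\|^2)\bigr]$, which for mean $\nu\bu$ and covariance $\bI_d-\bu\bu^\top$ collapses to $\exp(\nu\langle\mathbf{t},\bu\rangle-\tfrac12\langle\mathbf{t},\bu\rangle^2)$ — the single degenerate direction of the covariance exactly cancels the isotropic term $-\tfrac12\|\mathbf{t}\|^2$ away from $\bu$. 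Recognizing this as the univariate generating function at $x=\nu$ in the scalar $\langle\mathbf{t},\bu\rangle$ and expanding $\langle\mathbf{t},\bu\rangle^\ell$ by the multinomial theorem, matching the coefficient of $\mathbf{t}^\gamma$ gives the identity; averaging over $\nu\sim\cF$ then replaces $h_{|\gamma|}(\nu)$ by $m_{|\gamma|}:=\expectation_{x\sim\nu}[h_{|\gamma|}(x)]$. (The interchange of sum and expectation is justified since a Gaussian, degenerate or not, has finite exponential moments.)

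Substituting this back with $\alpha_i:=|\beta_i|$, so that $\alpha:=(\alpha_1,\dots,\alpha_n)\in\Natural^n$, $|\alpha|=|\beta|$, and $\bar\beta:=\sum_{i=1}^n\beta_i\in\Natural^d$, yields $\expectation_{\cP}[h_\beta(\bX)]=\bigl(\prod_{i=1}^n\sqrt{\alpha_i!/\beta_i!}\;m_{\alpha_i}\bigr)\,\expectation_{\bu\sim\cU}[\bu^{\bar\beta}]$. Squaring, summing over all $\beta$ with $|\beta|\leq D$, and grouping first by $\alpha$ (note $|\beta|=|\alpha|$) gives
$$\ldlr^2=\sum_{\substack{\alpha\in\Natural^n\\|\alpha|\leq D}}\Bigl(\prod_{i=1}^n m_{\alpha_i}^2\Bigr)\sum_{\substack{\beta\,:\,|\beta_i|=\alpha_i\ \forall i}}\Bigl(\prod_{i=1}^n\frac{\alpha_i!}{\beta_i!}\Bigr)\bigl(\expectation_{\bu\sim\cU}[\bu^{\bar\beta}]\bigr)^2 .$$
To finish, I would identify the inner sum with $\expectation_{\bu,\bu'\sim\cU}[\langle\bu,\bu'\rangle^{|\alpha|}]$: since $|\alpha|=\sum_i\alpha_i$, write $\langle\bu,\bu'\rangle^{|\alpha|}=\prod_{i=1}^n\langle\bu,\bu'\rangle^{\alpha_i}$, expand each factor as $\langle\bu,\bu'\rangle^{\alpha_i}=\sum_{|\beta_i|=\alpha_i}\frac{\alpha_i!}{\beta_i!}\bu^{\beta_i}(\bu')^{\beta_i}$, multiply out, and take the expectation over the two independent copies; the result is exactly the inner sum. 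Re-indexing by $k=|\alpha|$ and recalling $m_{\alpha_i}=\expectation_{x\sim\nu}[h_{\alpha_i}(x)]$ gives \eqref{eq:ld-formula}.

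The main obstacle — the one nonroutine step — is the single-row identity with the rank-deficient covariance $\bI_d-\bu\bu^\top$: the generating-function computation must be arranged so that the single degenerate direction precisely cancels the isotropic Gaussian part away from $\bu$, leaving a result that depends on $\gamma$ only through its total degree $|\gamma|$ and the monomial $\bu^\gamma$. Intuitively this reflects that under $\cP$ the projection $\langle\bX_i,\bu^*\rangle=\nu_i$ is deterministic given $\nu_i$ while everything transverse to $\bu^*$ is exactly standard Gaussian, so all the non-Gaussian content of a row lives in the one-dimensional span of $\bu^*$ and is captured by the single Hermite factor $h_{|\gamma|}(\nu)$. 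The rest — the orthonormal-basis expansion and the final blockwise multinomial identity — is routine bookkeeping.
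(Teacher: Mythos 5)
The paper does not reprove this lemma; it cites it verbatim from Mao and Wein, so there is no internal proof to compare against. Your proof is correct and follows the standard (and essentially unavoidable) route: expand the degree-$\le D$ likelihood ratio in the tensor Hermite basis of $L^2(\cQ)$, so that $\ldlr^2=\sum_{|\beta|\le D}(\expectation_\cP[h_\beta(\bX)])^2$; compute each coefficient by conditioning on $\bu^*$ so the rows factor; and then re-index. I checked the key single-row identity carefully: with $\Sigma=\bI_d-\bu\bu^\top$ the shifted moment generating function $\expectation[\exp(\langle\mathbf{t},\bX\rangle-\tfrac12\|\mathbf{t}\|^2)]$ collapses to $\exp(\nu\langle\mathbf{t},\bu\rangle-\tfrac12\langle\mathbf{t},\bu\rangle^2)$ exactly because the degenerate direction cancels the isotropic $-\tfrac12\|\mathbf{t}\|^2$ term, and matching coefficients of $\mathbf{t}^\gamma$ against the multinomial expansion of $\langle\mathbf{t},\bu\rangle^\ell$ gives $\expectation[h_\gamma(\bX)]=\sqrt{|\gamma|!/\gamma!}\,h_{|\gamma|}(\nu)\,\bu^\gamma$ as you state. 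The subsequent substitution, squaring, and blockwise multinomial identity recovering $\expectation_{\bu,\bu'}[\langle\bu,\bu'\rangle^{|\alpha|}]$ are all correct, and your remark that the formula is the right object even when $\cP$ may fail to be absolutely continuous with respect to $\cQ$ (so that one works directly with the projection coefficients $\expectation_\cP[h_\beta]$ rather than with $L$ itself) is the appropriate caveat. This is the same Hermite-coefficient argument used in the cited source and throughout the low-degree literature; no gaps.
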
 \begin{lemma}[\citet{mao2022optimal}]
\label{inner-product}
Let $\bu$ and $\bu'$ be independent uniform random vectors on the unit sphere in $\mathbb{R}^n$. For $k \in \mathbb{N}$, if $k$ is odd, then $\expectation[\langle \bu, \bu' \rangle^d] = 0$, and if $k$ is even, then 
$$ 
\expectation[\langle \bu, \bu' \rangle^k] \le (k/n)^{k/2}
$$
\end{lemma}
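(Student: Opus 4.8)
The plan is to reduce the two-vector moment to a one-dimensional computation by rotational invariance, dispose of the odd case by a sign symmetry, and bound the even moments through the Gaussian representation of the uniform measure on the sphere.

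First I would invoke the orthogonal invariance of the uniform distribution on the unit sphere: conditionally on $\bu'$, applying the rotation that maps $\bu'$ to the first coordinate vector $\be_1$ keeps $\bu$ uniform, and $\inner{\bu,\bu'}=\inner{R\bu,\be_1}=(R\bu)_1$, so $\inner{\bu,\bu'}$ has the same distribution as the first coordinate $u_1$ of a single uniform random vector $\bu$ in $\Real^n$. Hence it suffices to bound $\expectation[u_1^k]$. When $k$ is odd, the reflection $(u_1,u_2,\dots,u_n)\mapsto(-u_1,u_2,\dots,u_n)$ preserves the uniform measure and negates $u_1$, so $\expectation[u_1^k]=0$, settling the odd case.

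For even $k=2m$ I would write $\bu=\bg/\mynorm{\bg}$ with $\bg\sim\cN(0,\bI_n)$ and use that the angular part $\bg/\mynorm{\bg}$ is independent of the radial part $\mynorm{\bg}$; therefore $\expectation[g_1^{2m}]=\expectation[u_1^{2m}]\,\expectation[\mynorm{\bg}^{2m}]$, i.e. $\expectation[u_1^{2m}]=\expectation[g_1^{2m}]/\expectation[\mynorm{\bg}^{2m}]$. The numerator is the standard Gaussian moment $\expectation[g_1^{2m}]=(2m-1)!!$, a product of $m$ factors each at most $2m$, so $\expectation[g_1^{2m}]\le (2m)^m$. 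For the denominator, convexity of $t\mapsto t^m$ on $[0,\infty)$ together with Jensen's inequality gives $\expectation[\mynorm{\bg}^{2m}]=\expectation[(\mynorm{\bg}^2)^m]\ge(\expectation[\mynorm{\bg}^2])^m=n^m$. Combining the two bounds,
$$\expectation[\inner{\bu,\bu'}^k]=\expectation[u_1^{2m}]=\frac{\expectation[g_1^{2m}]}{\expectation[\mynorm{\bg}^{2m}]}\le\frac{(2m)^m}{n^m}=\left(\frac{2m}{n}\right)^m=\left(\frac{k}{n}\right)^{k/2}.$$

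I do not expect a genuine obstacle: the only real idea is the reduction by rotational invariance combined with the independence of the radial and angular parts of a Gaussian vector, and everything else is elementary bookkeeping with factorials. As a sanity check one may note that this independence amounts to $u_1^2\sim\mathrm{Beta}(1/2,(n-1)/2)$, whose $m$-th moment equals $\prod_{j=1}^{m}\frac{2j-1}{n+2j-2}$; bounding each factor by $\frac{2m}{n}$ (using $2j-1\le 2m$ and $n+2j-2\ge n$) reproduces the same estimate without passing through Jensen, and in fact shows the bound is never tight for $m\ge 2$.
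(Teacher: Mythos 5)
Your proof is correct, and it follows the standard route (the paper itself states this lemma without proof, citing \citet{mao2022optimal}, where essentially the same argument is used): reduce $\inner{\bu,\bu'}$ to the first coordinate of a single uniform vector by rotational invariance, kill the odd moments by sign symmetry, and bound the even moments via the independence of the radial and angular parts of a standard Gaussian together with $(2m-1)!!\le(2m)^m$ and Jensen's inequality for $\expectation[\mynorm{\bg}^{2m}]\ge n^m$. All steps check out, including the Beta$(1/2,(n-1)/2)$ sanity check.
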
 \subsection{Proof of \autoref{thmLDPLB}}
The proof of \autoref{thmLDPLB} closely follows the proof of Theorem 4.5 in \citet{mao2022optimal}.

\begin{proof}
    We will use $\mathcal{A}(k, m)$ as defined in \autoref{akm} and note that 
    for $\alpha \in \mathcal{A}(k, m)$, we obtain that $\alpha \geq 3$ and thus $m \leq \lfloor k/3 \rfloor$ using \autoref{herm_ic}
    \begin{align*}
    \sum_{\substack{\alpha \in \Natural^n \\ |\alpha| = k}} \prod_{i=1}^n \left(\expectation[h_{\alpha_i}(x)]\right)^2 
    = \sum_{m=1}^{\lfloor k/3 \rfloor} \sum_{\substack{\alpha \in \mathcal{A}(k, m)}} \prod_{i=1}^n \left(\expectation[h_{\alpha_i}(x)]\right)^2
    \le \sum_{m=1}^{\lfloor k/3 \rfloor} \left| \mathcal{A}(k, m) \right| \prod_{i \in [n], \, \alpha_i \ne 0} \alpha_i^{2 \alpha_i} p^{2-\alpha_i}
    \le \sum_{m=1}^{\lfloor k/3 \rfloor} n^m k^{3k} p^{2 m - k}.
    \end{align*}
    By applying the closed form of the geometric series, we obtain
    \begin{align*}
    \sum_{\substack{\alpha \in \Natural^n \\ |\alpha| = k}} \prod_{i=1}^n \left(\expectation[h_{\alpha_i}(x)]\right)^2 
    \le k^{3k} n p^{2-k} \frac{(n p^2)^{\lfloor k/3 \rfloor} - 1}{n p^2 - 1}
    \le k^{3k} n p^{2-k} \frac{(n p^2)^{k/3}}{\frac{1}{2} n p^2}
    = 2 k^{3k} n^{k/3} p^{-k/3} .
    \end{align*}
    This combined with \autoref{inner-product} gives 
    \begin{equation*} 
    \expectation[\langle u,u' \rangle^k] \sum_{\substack{\alpha \in \Natural^n \\ |\alpha| = k}} \prod_{i=1}^n \left(\expectation[h_{\alpha_i}(x)]\right)^2 
    \le (k/d)^{k/2} \cdot 2 k^{3k} n^{k/3} p^{-k/3}
    = 2 \left( \frac{ k^{10.5} n }{ d^{3/2} p } \right)^{k/3} . 
    \end{equation*} 
    Finally, combining this with \autoref{formula}, we obtain  
    \begin{equation*} 
    \ldlr^2 = \sum_{k=0}^D \expectation[\langle u,u' \rangle^k] \sum_{\substack{\alpha \in \Natural^n \\ |\alpha| = k}} \prod_{i=1}^n \left(\expectation[h_{\alpha_i}(x)]\right)^2 
    \le 1 + 2 \sum_{k=3}^D \left( \frac{ k^{10.5} n }{ d^{3/2} p } \right)^{k/3} . 
    \end{equation*}
    This is true if $d^{1.5} p > n D^{c_{2}}$ for a sufficiently large constant $c_{2} > 0$ such that $\frac{k^{10.5} n }{ d^{3/2} p } < 1/4$. 
\end{proof}
\section{Additional Proofs}
\begin{lemma}
    \label{gradientNormRewrite}
    Choosing the ortho-normal basis $\bE = (\bu^*, \be_2, ... , \be_d) \in \mathbb{R}^{d \times d}$ such that $\bu = \ainner \bu^* + \sqrt{1 - \ainner^2} \be_2$.
    \begin{flalign*}
        \mynorm{\frac{\sumin g_\bu(\bX_i)}{n}} =
        \sqrt{
            \inner{ \frac{\sumin g_\bu(\bX_i)}{n}, \bu^* }^2
            \left(1 + \frac{\ainner^2}{1 - \ainner^2}\right)
            + \mynorm{\frac{\phi'(\bX \bu)^\top \bX \correction}{n}}^2
       }.
    \end{flalign*}
\end{lemma}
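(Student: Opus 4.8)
The plan is to expand the averaged Riemannian gradient in the orthonormal basis $\bE = (\bu^*, \be_2, \dots, \be_d)$ and split its squared norm into three groups of coordinates: the $\bu^*$-direction, the $\be_2$-direction, and the remaining directions $\be_3, \dots, \be_d$. Write $\bv := \frac{1}{n}\sumin \phi'(\inner{\bX_i, \bu})\bX_i$, so that $\frac{1}{n}\sumin g_\bu(\bX_i) = (\bI_d - \bu\bu^\top)\bv$ and $\bv^\top = \frac{\phi'(\bX\bu)^\top\bX}{n}$. Since $\bE$ is orthonormal,
$$\mynorm{\tfrac{1}{n}\sumin g_\bu(\bX_i)}^2 = \inner{(\bI_d - \bu\bu^\top)\bv, \bu^*}^2 + \inner{(\bI_d - \bu\bu^\top)\bv, \be_2}^2 + \sum_{j=3}^d \inner{(\bI_d - \bu\bu^\top)\bv, \be_j}^2 .$$

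First I would handle the tail coordinates $j \geq 3$. Since $\bu = \ainner\bu^* + \sqrt{1-\ainner^2}\,\be_2$ lies in $\mathrm{span}(\bu^*, \be_2)$, we have $\inner{\bu, \be_j} = 0$ for $j \geq 3$, hence $\inner{(\bI_d - \bu\bu^\top)\bv, \be_j} = \inner{\bv, \be_j}$. As $\correction$ is precisely the orthogonal projector onto $\mathrm{span}(\be_3, \dots, \be_d)$, summing the squares of these coordinates gives $\sum_{j=3}^d\inner{\bv, \be_j}^2 = \mynorm{\bv^\top\correction}^2 = \mynorm{\frac{\phi'(\bX\bu)^\top\bX\correction}{n}}^2$, which is exactly the last term in the claimed identity.

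Next I would handle the first two coordinates. Set $\bg := (\bI_d - \bu\bu^\top)\bv$; by construction $\inner{\bg, \bu} = 0$, and expanding $\bu$ in the basis yields $0 = \ainner\inner{\bg, \bu^*} + \sqrt{1-\ainner^2}\,\inner{\bg, \be_2}$, so $\inner{\bg, \be_2} = -\frac{\ainner}{\sqrt{1-\ainner^2}}\inner{\bg, \bu^*}$ (here we use $\ainner < 1$, which is the only regime in which the lemma is invoked, so that $\be_2$ is well-defined). Therefore $\inner{\bg, \bu^*}^2 + \inner{\bg, \be_2}^2 = \inner{\bg, \bu^*}^2\bigl(1 + \frac{\ainner^2}{1 - \ainner^2}\bigr)$. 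Substituting the two computations into the coordinate expansion and taking square roots gives the stated identity. There is no genuine obstacle here — the argument is pure bookkeeping — and the only points requiring care are keeping straight the distinction between $\bv$ (the raw averaged gradient) and $\bg$ (its Riemannian projection), and the observation that $\correction$ annihilates precisely the $\bu^*$ and $\be_2$ directions so that the tail sum collapses to the stated expression.
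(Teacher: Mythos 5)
Your proof is correct and follows essentially the same route as the paper: expand the Riemannian gradient in the orthonormal basis $(\bu^*, \be_2, \dots, \be_d)$, use $\inner{g_\bu, \bu} = 0$ to eliminate the $\be_2$-coordinate in favor of the $\bu^*$-coordinate, and identify the tail coordinates with the norm of the projected row vector. In fact you are slightly more complete than the paper's own proof, which stops at $\sum_{j\geq 3}\inner{\tfrac{1}{n}\sum_i g_\bu(\bX_i), \be_j}^2$ and leaves the identification with $\mynorm{\tfrac{1}{n}\phi'(\bX\bu)^\top\bX\correction}^2$ implicit; your observation that $\correction$ annihilates exactly the $\bu^*$- and $\be_2$-directions, together with the fact that $(\bI_d - \bu\bu^\top)$ acts as the identity on $\mathrm{span}(\be_3,\dots,\be_d)$, supplies precisely that missing step.
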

\begin{proof}
    $$
        \mynorm{\frac{\sumin g_\bu(\bX_i)}{n}} = 
        \sqrt{
            \inner{ \frac{\sumin g_\bu(\bX_i)}{n}, \bu^* }^2 + \inner{ \frac{\sumin g_\bu(\bX_i)}{n}, \be_2 }^2 + \sum_{i=3}^d \inner{ \frac{\sumin g_\bu(\bX_i)}{n}, \be_i }^2
        }
    $$
    By $\ainner \inner{ \frac{\sumin g_\bu(\bX_i)}{n}, \bu^* } + \sqrt{1-\ainner^2} \inner{ \frac{\sumin g_\bu(\bX_i)}{n}, \be_2 } = 0$ we can expand to
    \begin{flalign*}
        \mynorm{\frac{\sumin g_\bu(\bX_i)}{n} } & =
        \sqrt{
            \inner{ \frac{\sumin g_\bu(\bX_i)}{n}, \bu^* }^2
            \left(1 + \frac{\ainner^2}{1 - \ainner^2}\right)
            + \sum_{i=3}^d \inner{ \frac{\sumin g_\bu(\bX_i)}{n}, e_i }^2
       }.
    \end{flalign*}
\end{proof} \begin{lemma}
    \label{orthogonalGradientNorm}
    Choose a ortho-normal basis $\bE = (\bu^*, \be_2, ... , \be_d) \in \mathbb{R}^{d \times d}$ such that $\bu = \ainner \bu^* + \sqrt{1 - \ainner^2} \be_2$.
    Thus
    $$
        \mynorm{\frac{\phi'(\bX \bu)^\top \bX \correction}{n}} 
        =
        \bigO{
            \sqrt{d}
            \mynorm{\frac{\phi'(\bX \bu)}{n}}
        }.
    $$
    with a probability of at least 
    $1 - \bigO{\frac{1}{s}}$.
\end{lemma}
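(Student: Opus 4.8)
The plan is to exploit that, in the basis $\bE$, the coordinates of $\bX_i$ along $\be_3,\dots,\be_d$ form an honest standard Gaussian vector that is independent of the scalar $\phi'(\inner{\bX_i,\bu})$, and then to close with Gaussian concentration of the Euclidean norm.

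First I would write $w_i := \phi'(\inner{\bX_i,\bu})$, so that the vector in question equals $\sum_{i=1}^n \frac{w_i}{n}\,\correction\bX_i$. Because the mini-batch $\bX$ is drawn fresh and independently of the current iterate, $\bu$ and the basis $\bE$ may be treated as fixed. Decomposing $\bX_i \sim \cN(\nu_i\bu^*,\,\bI_d - \bu^*{\bu^*}^\top)$ in the basis $\bE$: the $\bu^*$-coordinate equals $\nu_i$ deterministically, while the coordinates along $\be_2,\dots,\be_d$ are i.i.d.\ $\cN(0,1)$ (the degenerate covariance restricts to the identity on $\mathrm{span}(\bu^*)^\perp$). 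Hence $\correction\bX_i$ is a mean-zero standard Gaussian vector supported on $\mathrm{span}(\be_3,\dots,\be_d)$ — mean zero because $\correction$ annihilates $\bu^*$ — and it is a function only of coordinates disjoint from $\nu_i$ and $\inner{\bX_i,\be_2}$. Since $\inner{\bX_i,\bu} = \ainner\,\nu_i + \sqrt{1-\ainner^2}\,\inner{\bX_i,\be_2}$, each $w_i$ is a function of $(\nu_i,\inner{\bX_i,\be_2})$, so $\correction\bX_i$ is independent of $w_i$; using independence of the rows, $(\correction\bX_i)_i$ is independent of $(w_i)_i$.

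Next I would condition on $\cF := \sigma\big(\nu_1,\dots,\nu_n,\inner{\bX_1,\be_2},\dots,\inner{\bX_n,\be_2}\big)$, which makes every $w_i$ a constant. Conditionally, $Y := \sum_i \frac{w_i}{n}\correction\bX_i$ is a centered Gaussian on the $(d-2)$-dimensional space $\mathrm{span}(\be_3,\dots,\be_d)$ with covariance $\sigma^2\bI_{d-2}$, where $\sigma^2 = \frac{1}{n^2}\sum_i w_i^2 = \mynorm{\phi'(\bX\bu)/n}^2$. The map $g \mapsto \sigma\mynorm{g}$ is $\sigma$-Lipschitz on $\bbR^{d-2}$ and $\bbE[\mynorm{Y}\mid\cF] \le \sigma\sqrt{d-2}$, so Gaussian Lipschitz concentration gives $\mynorm{Y} \le \sigma\big(\sqrt{d-2}+t\big)$ with conditional probability at least $1 - \exp(-t^2/2)$ for every $t>0$. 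Choosing $t = \bigTheta{\sqrt{\log s}}$ makes the failure probability $\bigO{1/s}$; since $s$ is at most polynomial in $d$ we have $\sqrt{\log s} = \smallO{\sqrt d}$, hence $\sqrt{d-2}+t = \bigO{\sqrt d}$. As this bound is uniform over realizations of $\cF$, it holds unconditionally with the same probability, and substituting $\sigma = \mynorm{\phi'(\bX\bu)/n}$ yields the claim.

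I expect the only genuinely delicate step to be the independence argument in the second paragraph: one must verify carefully that $\bu \in \mathrm{span}(\bu^*,\be_2)$ and that the degenerate covariance $\bI_d - \bu^*{\bu^*}^\top$ makes the "orthogonal" block of $\bX_i$ a true standard Gaussian jointly independent of the one-dimensional statistic that drives $\phi'(\inner{\bX_i,\bu})$. Everything after that reduction is a routine $\chi$-type tail bound with the threshold tuned to beat the $1/s$ budget, and it parallels (and is consistent with) the intermediate bound $\mynorm{\phi'(\bX\bu)^\top\bX\correction/n} \le \bigTheta{\sqrt{d/n}}$ used in the proof of \autoref{relu2_sc}.
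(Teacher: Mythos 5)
Your proof is correct and follows the same essential route as the paper: identify that $\correction\bX_i$ is a standard Gaussian supported on $\mathrm{span}(\be_3,\dots,\be_d)$ and independent of the scalar weights $w_i = \phi'(\inner{\bX_i,\bu})$, then apply Gaussian norm concentration with $t = \Theta(\sqrt{\log s})$. The paper's version is terser --- it asserts the tail bound for a ``fixed'' vector $\bv$ and then silently substitutes $\bv = \phi'(\bX\bu)/n$ --- so your explicit conditioning on $\sigma(\nu_1,\dots,\nu_n,\inner{\bX_1,\be_2},\dots,\inner{\bX_n,\be_2})$ is a faithful and slightly more careful rendering of the same argument, making visible the independence step the paper leaves implicit.
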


\begin{proof}
First notice that $\bX \correction \sim \cN(0,\correction)^n$ and thus that for some vector $\bv$
\begin{align*}
\bbP
\left[
    \mynorm{\sumin \frac{\bv_i}{\mynorm{\bv}} \X_i \correction}
    \geq 
    \sqrt{d} + t
\right]
&
\leq
2 \exp\left( - \bigTheta{t^2} \right).
\label{bound_norm2}
\end{align*}
The lemma follows by using $t = \sqrt{\log(s)}$ and choosing $\bv = \frac{\phi'(\bX \bu)}{n}$
\end{proof} \begin{lemma}
    \label{renormalization_helper1}
    For any $c_3 \in (0,1)$ such that 
    if 
        $
        \mynorm{\frac{\sumin g_\bu(\bX_i)}{n}} \leq 
        (1 - c_3)
        \frac{\inner{ \frac{\sumin g_\bu(\bX_i)}{n}, \bu^* }}{\ainner}$ 
    and 
        $\frac{\eta \inner{ \frac{\sumin g_\bu(\bX_i)}{n}, \bu^* }}{\ainner} \geq 1$
    we have
    \begin{align*}
        \frac{\langle \bu^*, \bu + \eta \inner{\sumin \frac{g_\bu(\bX_i)}{n}, \bu^*}}{\mynorm{\bu + \inner{\sumin \frac{g_\bu(\bX_i)}{n}, \bu^*}}}
        \geq
        \ainner \left(1 + \frac{c_6}{2}\right).
    \end{align*}
\end{lemma}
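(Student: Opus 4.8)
The plan is to reduce the claim to an elementary one-variable inequality. Write $G := \frac{\sumin g_\bu(\bX_i)}{n}$ for the averaged Riemannian gradient appearing in the update $\ubar = \bu + \eta G$. The first observation is that $G$ is orthogonal to $\bu$: since $g_\bu(\bx) = (\bI_d - \bu\bu^\top)\,\phi'(\inner{\bx,\bu})\,\bx$ and $\mynorm{\bu}=1$, every summand satisfies $\inner{\bu, g_\bu(\bx)} = 0$, hence so does $G$. Consequently the Pythagorean identity gives $\mynorm{\bu + \eta G}^2 = 1 + \eta^2\mynorm{G}^2$, while the numerator expands as $\inner{\bu^*, \bu + \eta G} = \ainner + \eta\inner{G,\bu^*}$, so the renormalized inner product equals $\bigl(\ainner + \eta\inner{G,\bu^*}\bigr)\big/\sqrt{1 + \eta^2\mynorm{G}^2}$.

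Next I would introduce the normalized gain $r := \eta\inner{G,\bu^*}/\ainner$ (well-defined since $\ainner>0$ in the regime where the lemma is used). The second hypothesis says exactly $r \geq 1$, so the numerator equals $\ainner(1+r)$; the first hypothesis $\mynorm{G} \leq (1-c_3)\inner{G,\bu^*}/\ainner$ says exactly $\eta\mynorm{G} \leq (1-c_3)\,r$, so the denominator is at most $\sqrt{1 + (1-c_3)^2 r^2}$. Hence it suffices to prove, for some constant $c_6 = c_6(c_3) > 0$, that \[ \frac{1+r}{\sqrt{1 + (1-c_3)^2 r^2}} \;\geq\; 1 + \frac{c_6}{2}\qquad\text{for every } r \geq 1. \]

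To establish this I would set $q := (1-c_3)^2 \in (0,1)$ and study $f(r) := (1+r)^2/(1+qr^2)$. A short differentiation gives $f'(r) = 2(1+r)(1-qr)/(1+qr^2)^2$, so $f$ is increasing on $[1,1/q]$ and decreasing on $[1/q,\infty)$; therefore $\min_{r\geq 1} f(r) = \min\{f(1),\,\lim_{r\to\infty}f(r)\} = \min\{\tfrac{4}{1+q},\tfrac1q\}$. Both quantities exceed $1$ because $q<1$, so $\sqrt{\min\{4/(1+q),\,1/q\}} > 1$, and one may simply take $1 + \tfrac{c_6}{2} := \sqrt{\min\{4/(1+q),\,1/q\}}$. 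This $c_6$ is then exactly the multiplicative factor fed into \autoref{assGradient} of \autoref{mainTheorem} in the proofs of \autoref{relu2_sc} and \autoref{kurtosis_sc}.

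There is no substantial obstacle here; the computation is elementary. The two places that warrant care are (i) the exact orthogonality $G \perp \bu$, which is what makes the squared norm collapse to $1+\eta^2\mynorm{G}^2$ with no cross term, and (ii) the fact that $f$ is \emph{not} monotone on $[1,\infty)$, so its minimum must be located via the unimodal shape rather than by evaluating at a single endpoint. It is also worth noting that the first hypothesis is only consistent with $\ainner \le 1-c_3$ (by Cauchy--Schwarz $\inner{G,\bu^*}\le\mynorm{G}$), which is precisely the large--learning--rate regime in which the lemma is invoked.
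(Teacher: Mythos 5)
Your argument is correct and follows the same reduction as the paper: the orthogonality $G \perp \bu$ collapses the renormalized inner product to $\bigl(\ainner + \eta\inner{G,\bu^*}\bigr)/\sqrt{1+\eta^2\mynorm{G}^2}$, and the two hypotheses turn the claim into the scalar inequality $\frac{1+r}{\sqrt{1+(1-c_3)^2 r^2}} \geq 1 + \tfrac{c_6}{2}$ for $r \ge 1$ (the $c_6$ in the statement is a typo for $c_3$). Where you diverge is in how you finish that scalar inequality: you differentiate $f(r)=(1+r)^2/(1+qr^2)$, identify it as unimodal with peak at $1/q$, and take the minimum of the endpoint values $4/(1+q)$ and $1/q$. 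The paper instead inserts one more algebraic bound, $\sqrt{1+(1-c_3)^2 r^2} \le 1+(1-c_3)r$ (the elementary $\sqrt{1+x^2}\le 1+x$ for $x\ge 0$), after which $\frac{1+r}{1+(1-c_3)r}$ is monotone increasing in $r$ with minimum $\frac{2}{2-c_3}\ge 1+\tfrac{c_3}{2}$ at $r=1$. That shortcut eliminates the calculus and produces the clean explicit constant $c_6=c_3$, whereas your route yields a valid but less explicit $c_6(c_3)$. Both arguments are sound; the paper's extra line simply trades a unimodality analysis for a monotone one.
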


\begin{proof}
\begin{flalign*}  
\inner{ 
\bu^*, 
\frac{
    \bu + \eta \sumin \frac{g_\bu(\bX_i)}{n} 
}{
    \mynorm{\bu + \eta \frac{\sumin g_\bu(\bX_i)}{n}}
}
}
& \geq
\ainner
\frac{
    1 + \eta \ainner^{-1}\inner{\sumin \frac{g_\bu(\bX_i)}{n}, \bu^*}
}{
    \sqrt{1 + \eta^2 \mynorm{\frac{\sumin g_\bu(\bX_i)}{n}}^2}
}
\\ &
\geq
\ainner
\frac{
1 + \eta \ainner^{-1} \inner{\sumin \frac{ g_\bu(\bX_i)}{n}, \bu^*}
}{
1 + (1 - c_3) \eta \ainner^{-1} \inner{\sumin \frac{ g_\bu(\bX_i)}{n}, \bu^*}
}
\geq
\ainner \left(1 + \frac{c_3}{2}\right).
\end{flalign*}
\end{proof}
 \begin{lemma}
    \label{renormalization_helper2}
    If $\eta \ainner^{-1}\inner{\sumin \frac{g_\bu(\bX_i)}{n}, \bu^*} \geq \mynorm{\eta \frac{\sumin g_\bu(\bX_i)}{n}}^2$
    \begin{align*}
        \frac{
            \langle \bu^*, \bu + \eta \inner{\sumin \frac{g_\bu(\bX_i)}{n}, \bu^*}
        }{
            \mynorm{\bu + \eta \sumin \frac{g_\bu(\bX_i)}{n}}
        }
        \geq
        \ainner
        \min \left\{
            1 + \eta \inner{\sumin \frac{g_\bu(\bX_i)}{n}, \bu^*}
            , 2
        \right\}.
    \end{align*}
\end{lemma}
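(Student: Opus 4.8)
The plan is to collapse the claim to a one-variable inequality using the fact that $\bg := \tfrac1n\sum_{i=1}^n g_\bu(\bX_i)$ is the \emph{Riemannian} gradient and hence lies in the tangent space at $\bu$. Indeed $g_\bu(\bx) = (\bI_d - \bu\bu^\top)\phi'(\inner{\bx,\bu})\bx$, so $\inner{\bu,\bg}=0$, and therefore, abbreviating $a := \ainner$ and $p := \inner{\bg,\bu^*}$,
\[
  \mynorm{\bu + \eta\bg}^2 = 1 + \eta^2\mynorm{\bg}^2,
  \qquad
  \inner{\bu^*,\bu + \eta\bg} = a + \eta p .
\]
Thus the left-hand side of the statement is exactly $\dfrac{a+\eta p}{\sqrt{1+\eta^2\mynorm{\bg}^2}}$. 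Note that the hypothesis $\eta a^{-1}p \ge \eta^2\mynorm{\bg}^2$, together with $a>0$ and $\eta>0$, forces $p \ge 0$, so all quantities below are non-negative.

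Next I would substitute the hypothesis into the denominator. Since $t \mapsto \sqrt{1+t}$ is increasing and $\eta^2\mynorm{\bg}^2 \le \eta p/a$,
\[
  \frac{a+\eta p}{\sqrt{1+\eta^2\mynorm{\bg}^2}}
  \;\ge\;
  \frac{a+\eta p}{\sqrt{1+\eta p/a}}
  \;=\;
  \sqrt{a(a+\eta p)}
  \;=\;
  a\sqrt{1+\eta p/a},
\]
so it remains to verify $\sqrt{1+\eta p/a}\ \ge\ \min\{1+\eta p,\,2\}$.

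The final step is a short case split that leans on $a \le 1$, which is precisely what amplifies the effective step size from $\eta p$ to $\eta p/a$. If $\eta p$ is large enough that $1+\eta p \ge 2$, then $\eta p/a \ge \eta p$ is large and $\sqrt{1+\eta p/a} \ge 2$ outright. If instead $1+\eta p \le 2$, a routine concavity estimate for $\sqrt{1+\cdot}$ combined with $\eta p/a \ge \eta p$ gives a bound of the form $a\bigl(1+\Theta(\eta p)\bigr)$ — exactly the shape needed to feed into \autoref{assGradient} in the applications, and established just as in the proof of \autoref{renormalization_helper1}. I expect this last elementary inequality (pinning down the case boundary and the constant multiplying $\eta p$) to be the only place needing care; everything upstream is forced by the orthogonality $\inner{\bu,\bg}=0$ of the Riemannian gradient to the current iterate.
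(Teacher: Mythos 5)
Your chain of inequalities is exactly the one the paper uses: the orthogonality $\inner{\bu,\bg}=0$ of the Riemannian gradient turns the left-hand side into $\dfrac{a+\eta p}{\sqrt{1+\eta^2\mynorm{\bg}^2}}$ (with $a=\ainner$, $p=\inner{\bg,\bu^*}$, $\bg=\tfrac1n\sum_i g_\bu(\bX_i)$), the hypothesis lets you replace $\eta^2\mynorm{\bg}^2$ by $\eta p/a$ in the denominator, and you land on $a\sqrt{1+\eta p/a}$. The paper then asserts $\sqrt{1+\eta p/a}\ge\min\{1+\eta p,2\}$ with no justification, and you correctly flag that step as the only one needing care.

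The trouble is that the case split you sketch does not close it, and in fact the asserted inequality is false as stated. In your Case 1, $\eta p\ge 1$ only gives $\eta p/a\ge 1$, hence $\sqrt{1+\eta p/a}\ge\sqrt{2}$, not $\ge 2$; one would need $\eta p/a\ge 3$. In Case 2, $\sqrt{1+\eta p/a}\ge 1+\eta p$ squares to $\eta p/a\ge 2\eta p+(\eta p)^2$, i.e.\ $a\le 1/(2+\eta p)<\tfrac12$, so the bound fails whenever $\ainner\ge \tfrac12$. A concrete counterexample in $\Real^2$: take $\bu^*=(1,0)$, $\bu=(0.8,\,0.6)$, $\bg=(0.6,\,-0.8)$ (so $\inner{\bu,\bg}=0$, $p=0.6$, $\mynorm{\bg}=1$) and $\eta=0.75$; the hypothesis holds with equality ($\eta a^{-1}p=0.5625=\eta^2\mynorm{\bg}^2$), yet the left-hand side equals $1.25/1.25=1$ while $\ainner\min\{1+\eta p,2\}=0.8\cdot 1.45=1.16$. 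So the gap here is not in your strategy but in the target: the lemma as stated (and as ``proved'' in the appendix) is too strong once $\ainner$ is not small. Your closing remark is in fact the correct repair — what the application to \autoref{assGradient} actually needs, and what $a\sqrt{1+\eta p/a}$ does deliver when $\eta p$ is bounded, is a multiplicative improvement $\ainner(1+c_0)$ for some $c_0>0$, not the literal form $\ainner\min\{1+\eta p,2\}$; it would be worth restating the lemma in that weaker form rather than trying to salvage the case split.
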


\begin{proof}
\begin{flalign*}  
\inner{ 
\bu^*, 
\frac{
    \bu + \eta \sumin \frac{g_\bu(\bX_i)}{n} 
}{
    \mynorm{\bu + \eta \frac{\sumin g_\bu(\bX_i)}{n}}
}
}
& \geq
\ainner
\frac{
    1 + \eta \ainner^{-1}\inner{\sumin \frac{g_\bu(\bX_i)}{n}, \bu^*}
}{
    \sqrt{
        1 + \mynorm{\eta \frac{\sumin g_\bu(\bX_i)}{n}}^2
    }
}
\\ &
\geq
\ainner
\sqrt{
    1 + \eta \ainner^{-1}\inner{\sumin \frac{g_\bu(\bX_i)}{n}, \bu^*}
}
\geq
\ainner 
\min
\left\{
    1 + \eta \inner{\sumin \frac{g_\bu(\bX_i)}{n}, \bu^*}
    , 2
\right\}.
\end{flalign*}
\end{proof} \begin{lemma}
    \label{lemProjectedGrad}
    For any $\phi(\cdot)$ and $\bx, \bu, \bu^* \in \mathbb{R}^d$ have
    $$
        \inner{ g_\bu(\bx), \bu^* }
        =
        \frac{\partial \phi(\inner{ \bx, \bu })}{\partial \inner{ \bx, \bu }} 
        (\inner{ \bx, \bu^* } - \inner{ \bx, \bu } \inner{ \bu, \bu^* }).
    $$
\end{lemma}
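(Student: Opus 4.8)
The plan is to unwind the definition of $g_{\bu}$ and apply the chain rule directly; this is essentially a one-line computation once the notation is in place.

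First I would recall that, by definition stated just before \autoref{mainTheorem}, $g_{\bu}(\bx) = \correctiontwo\frac{\partial \phi(\inner{\bx,\bu})}{\partial \bu}$ — more precisely the projector is $(\bI_d - \bu\bu^\top)$. The key step is to compute the Euclidean gradient: treating $\inner{\bx,\bu}$ as a scalar function of $\bu$ with gradient $\bx$, the chain rule gives $\frac{\partial \phi(\inner{\bx,\bu})}{\partial \bu} = \phi'(\inner{\bx,\bu})\,\bx$, where $\phi'$ denotes the ordinary derivative of the scalar function $\phi$ evaluated at the point $\inner{\bx,\bu}$ (this is exactly what the notation $\frac{\partial \phi(\inner{\bx,\bu})}{\partial \inner{\bx,\bu}}$ abbreviates in the statement).

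Next I would apply the projection: $(\bI_d - \bu\bu^\top)\bx = \bx - \inner{\bx,\bu}\,\bu$, so that $g_{\bu}(\bx) = \phi'(\inner{\bx,\bu})\bigl(\bx - \inner{\bx,\bu}\,\bu\bigr)$. Taking the inner product with $\bu^*$ and using bilinearity, $\inner{g_{\bu}(\bx),\bu^*} = \phi'(\inner{\bx,\bu})\bigl(\inner{\bx,\bu^*} - \inner{\bx,\bu}\inner{\bu,\bu^*}\bigr)$, which is precisely the claimed identity.

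There is no real obstacle here: the only point requiring a word of care is the convention that $\partial/\partial\bu$ denotes the Euclidean gradient (column vector) so that $\partial \inner{\bx,\bu}/\partial \bu = \bx$, and that the outer derivative $\phi'$ is a scalar that pulls out of the inner product. Everything else is linearity of the inner product and idempotence-type manipulation of the rank-one correction $\bu\bu^\top$.
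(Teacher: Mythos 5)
Your proof is correct and takes essentially the same route as the paper's: apply the chain rule to get $\frac{\partial \phi(\inner{\bx,\bu})}{\partial\bu}=\phi'(\inner{\bx,\bu})\bx$, apply the projector $(\bI_d-\bu\bu^\top)$, and use bilinearity of the inner product. Your version is in fact slightly more streamlined — the paper takes an unnecessary detour through an orthonormal basis $(\bu^*,\be_2,\dots,\be_d)$ and then collapses the sum $\sum_{i\ge 2}\inner{\bu,\be_i}\inner{\bx,\be_i}$ back to $\inner{\bx,\bu}-\inner{\bu,\bu^*}\inner{\bx,\bu^*}$, arriving at the same identity that you obtain directly from $(\bI_d-\bu\bu^\top)\bx=\bx-\inner{\bx,\bu}\bu$.
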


\begin{proof}
First recall the definition $g_\bu(\bx) = (\bI_d - \bu \bu^T) \frac{\partial \phi(\inner{\bu, \bx}}{\partial u}$.
Let us choose a new ortho-normal basis $\bE = (\bu^*, \be_2, ... , \be_d) \in \mathbb{R}^{d \times d}$.
By $\inner{ \bx, \bu } = \inner{ \bx, \bu^* } \inner{ \bu, \bu^* } + \left( \sum_{i = 2}^d \inner{ \bx, e_i } \inner{ \bu, e_i } \right)$

\begin{align*}
\inner{ g_\bu(\bx), \bu^* }
& = \inner{ (I - \bu\bu^T) \frac{\partial \phi(\inner{ \bx, \bu })}{\partial \bu}, \bu^* }
= \frac{\partial \phi(\inner{ \bx, \bu })}{\partial \inner{ \bx, \bu }} \left(
\left(1 - \inner{ \bu, \bu^* }^2\right) \inner{ \bx, \bu^* }
- \inner{ \bu, \bu^* } \left( \sum_{i = 2}^d \inner{ \bu, \be_i } \inner{ \bx, \be_i } \right)
\right)
\\ & =
\frac{\partial \phi(\inner{ \bx, \bu })}{\partial \inner{ \bx, \bu }} 
\left(
(1 - \inner{ \bu, \bu^* }^2) \inner{ \bx, \bu^* }
- \inner{ \bu, \bu^* }
(\inner{ \bx, \bu } - \inner{ \bu, \bu^* } \inner{ \bx, \bu^* })
\right)
=
\frac{\partial \phi(\inner{ \bx, \bu })}{\partial \inner{ \bx, \bu }} 
(\inner{ \bx, \bu^* } - \inner{ \bx, \bu } \inner{ \bu, \bu^* }).
\end{align*}

\end{proof}

\section{Supplement}
\label{supplement}
\subsection{Synthetic Benchmark with Additional Methods}
In \autoref{fig:spectral_methods_comparison}, we repeat the experiment of Fig. 2 with a lower dimension $d = 100$ to include the PPcovMCD method introduced in \citet{POKOJOVY2022107475}, which is more computationally intensive and would otherwise be intractable for a benchmark with $d = 300$.
Here, we can observe that PPcovMCD recovers the signal direction well in the setting with an Imbalanced Clusters planted vector.

\begin{figure}[H]
    \centering
    \begin{subfigure}[b]{0.49\textwidth}
        \includegraphics[scale = 0.5]{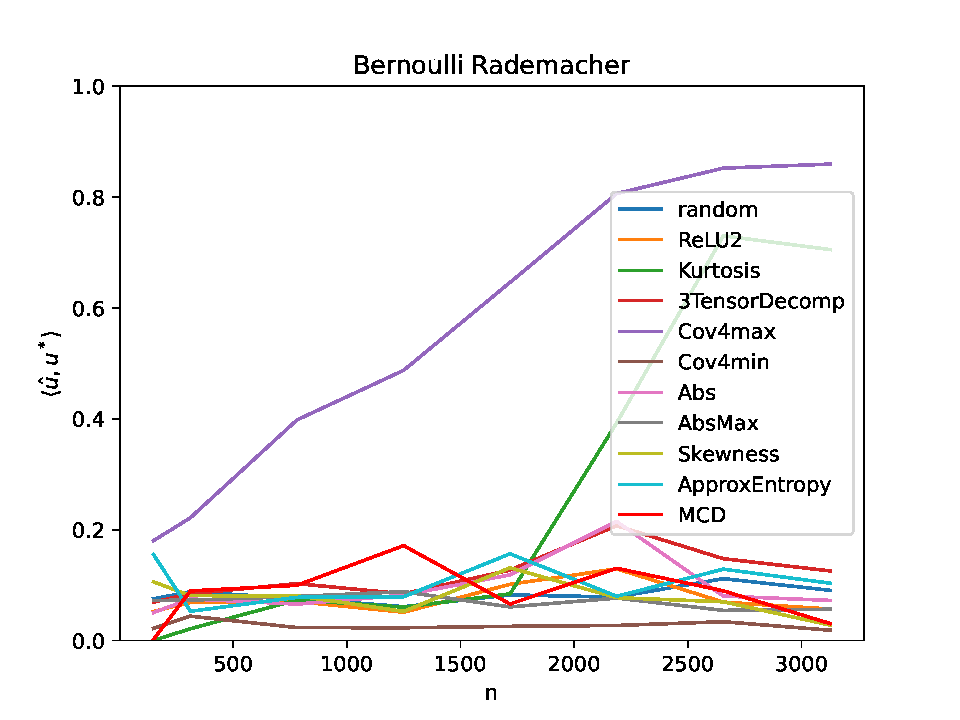}   
        \caption{Bernoulli-Rademacher}
        \label{br_comp}
    \end{subfigure}
    \hfill
    \begin{subfigure}[b]{0.49\textwidth}
        \includegraphics[scale = 0.5]{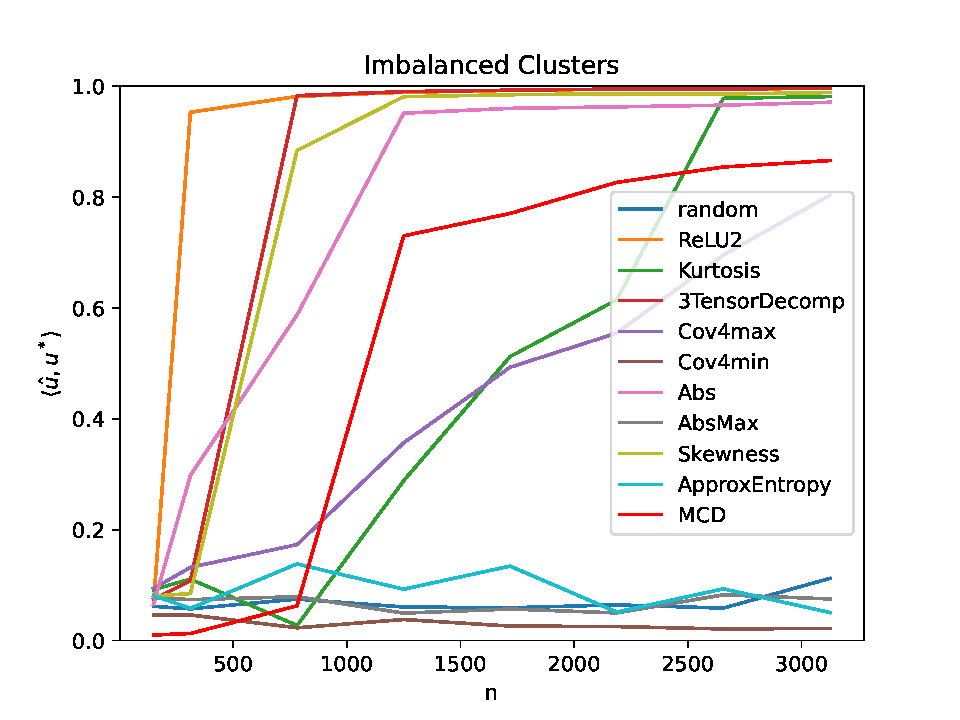}
        \caption{Imbalanced Clusters}
        \label{ic_comp}
    \end{subfigure}
    \caption{
        Comparison of different methods in the planted vector setting.
        We plot the average inner product between the signal direction and the recovered direction by each algorithm over 30 datasets.
    }
    \label{fig:spectral_methods_comparison}
\end{figure}

\subsection{Differing Cluster Variance}
A setting not covered by the planted vector setting is the recovery of a Gaussian mixture with differing variances.
We will follow the experiments described in \citet{Cabana2021}.
We sample from $(1-p) \Normal(0, \bbI_d) + p \Normal(\delta \ustar, \lambda \bbI_d)$ with $p = 0.3$ and $d = 100$.
We will use our gradient-based projection pursuit algorithm to recover $\ustar$ from the \emph{whitened} data.
The results are plotted in \autoref{cabana_experiment} where we show the average of $|\inner{\uhat, \ustar}|$ over 30 datasets.
Here, we can observe a distinct behavior with $\lambda = 1$, which resembles the Imbalanced Clusters planted vector as $\delta \to \infty$.
In contrast, for any sufficiently large $|\lambda - 1|$, samples belonging to the cluster with a lower variance will have a lower norm.
It can be observed that for our algorithm, recovery is easier if $\lambda > 1$.
We hypothesize that a specialized algorithm can exploit this fact to outperform our algorithm in sample complexity in this setting.

\begin{figure}
    \centering
        \begin{subfigure}[b]{0.4\textwidth}
            \centering
            \includegraphics[width=\textwidth]{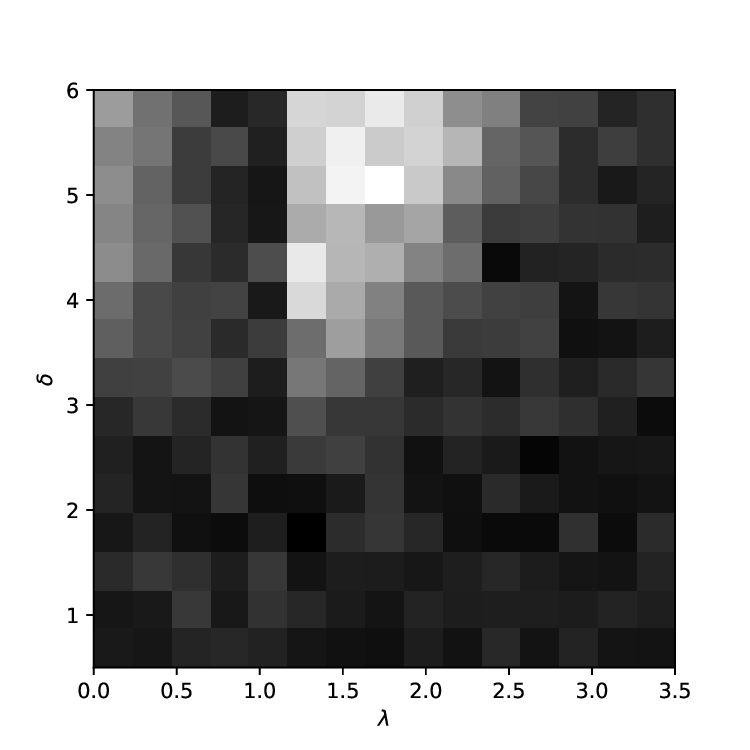}
            \caption{$n = 400$}
        \end{subfigure}
        \begin{subfigure}[b]{0.4\textwidth}
            \centering
            \includegraphics[width=\textwidth]{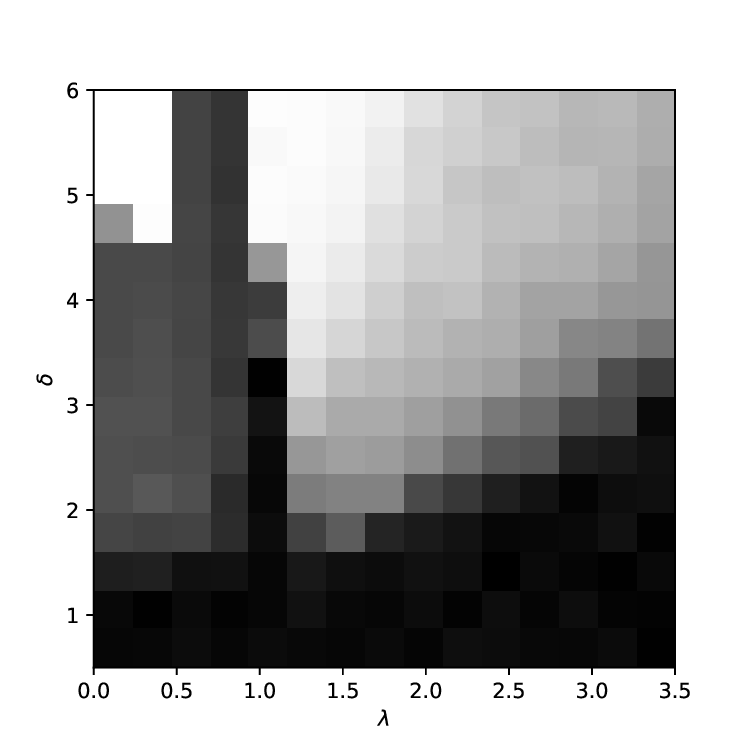}
            \caption{$n = 4000$}
        \end{subfigure}
    \caption{
        A plot of the average value of $|\inner{\uhat, \ustar}|$ of the predicted direction $\uhat$ and the signal direction $\ustar$.
        The experiment is repeated with $n = 400$ and $n = 4000$ samples.
        White pixels denote values close to 1, and black pixels denote values close to 0.
    }
    \label{cabana_experiment}
\end{figure}

\subsection{Anisotropic Gaussian Mixture}
We also demonstrate the efficacy of our approach by benchmarking on a Gaussian mixture where the Gaussians share a center, but one has an anisotropic variance.
For this, we choose a rank perturbation to the covariance matrix.
We obtain the distribution $(1-p) \Normal(0, \bbI_d) + p \Normal(0, \bbI_d + \ustar {\ustar}^\top (\lambda - 1))$ from which we sample $n = 700, 5000$ samples with $d = 200$, $p \in {0.01, ..., 0.4}$ and $\lambda = {0, ..., 10}$
We plot the results of this experiment in \autoref{anisotropic_experiment} in which we plot the average value of $|\inner{\uhat, \ustar}|$ over 10 runs of the gradient-based projection pursuit algorithm.
Here, we can observe that with smaller values for $p$ and larger values for $\lambda$, the problem becomes easier to recover the signal direction.

\begin{figure}[H]
    \centering
        \begin{subfigure}[b]{0.4\textwidth}
            \centering
            \includegraphics[width=\textwidth]{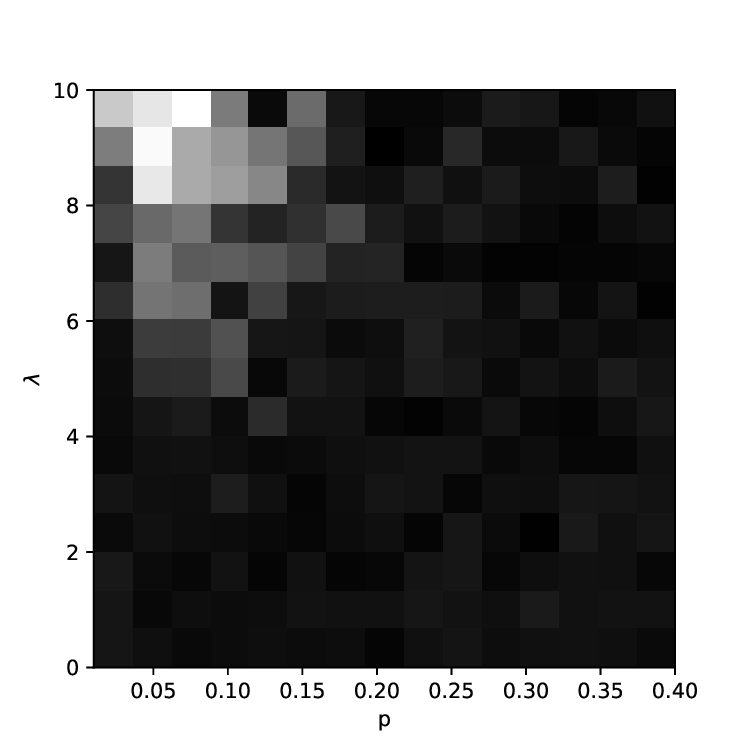}
            \caption{$n = 700$}
        \end{subfigure}
        \begin{subfigure}[b]{0.4\textwidth}
            \centering
            \includegraphics[width=\textwidth]{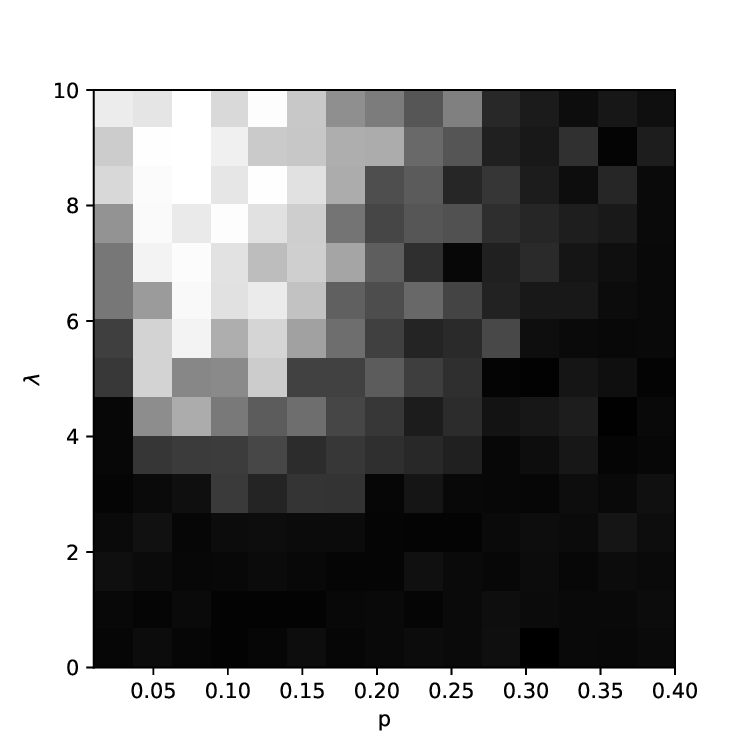}
            \caption{$n = 5000$}
        \end{subfigure}
    \caption{
        Plots of $|\inner{\uhat, \ustar}|$ where $\uhat$ was generated by the gradient-based algorithm with the \textbf{ReLU2} projection index. 
        White pixels denote values close to 1, and black pixels denote values close to 0.
    }
    \label{anisotropic_experiment}
\end{figure}

\subsection{Planted Vector in other Distributions than Gaussians}
The planted vector setting assumes that the orthogonal space to the signal direction is Gaussian.
Here, we will consider alternative distributions ($\inner{\bX, \ustar}$) for the signal and orthogonal directions.
Thus we will choose a random orthonormal basis $(\ustar, \be_2, ..., \be_d)$
For $\inner{X, \ustar}$ we will choose these distributions:
\begin{center}
\begin{tabular}{ c|c } 
\textbf{Abbreviation} & \textbf{Distribution} \\ 
\hline
GM & $0.9 \Normal(0, 1) + 0.1 \Normal(5, 1)$ \\ 
\hline
IC & Imbalanced Clusters with $p = 0.1$ \\ 
\hline
BR & Bernoulli Rademacher with $p = 0.1$ \\ 
\end{tabular}
\end{center}

For $\inner{X, \be_i}$ with $i \in \{ 2, ..., d\}$ we will choose the following distributions.

\begin{center}
\begin{tabular}{ c|c } 
\textbf{Abbreviation} & \textbf{Distribution} \\ 
\hline
Normal & $\Normal(0, 1)$ \\ 
\hline
Rademacher & Rademacher distribution\\ 
\hline
Heavy-Tailed & Student-t distribution with $\nu = 2$ degrees of freedom\\ 
\hline
Skewed & Skew normal distribution with $\alpha = 3$
\end{tabular}
\end{center}

The results of the experiment are plotted in \autoref{other_ambient} where we run the experiment for dimension $d = 100$ with different numbers of samples.
The data is whitened before applying the projection pursuit algorithm.
We test 2 different methods of choosing the best projection. 
First, we choose the algorithm as proposed in Algorithm 1.
Secondly, we just run gradient ascent once and choose the direction maximizing $\myabs{\inner{\uhat, \ustar}}$.
This is not implementable in practice but demonstrates that the gradient ascent algorithm still converges to the signal direction with a heavy-tailed distribution as the distribution orthogonal to the signal.
We believe this could be fixed using a more robust function for $\psi(\cdot)$.

We can observe that using Rademacher distributions instead of a Gaussian does not seem to change how easily the signal direction can be recovered.
It can also be observed that the heavy-tailed Gaussian Mixture makes recovery significantly more difficult.
We especially observe that our choice of projection index to detect converged projections fails for heavy-tailed distributions.

\begin{figure}
    \centering
    \begin{subfigure}[b]{0.49\textwidth}
        \includegraphics[width=\linewidth]{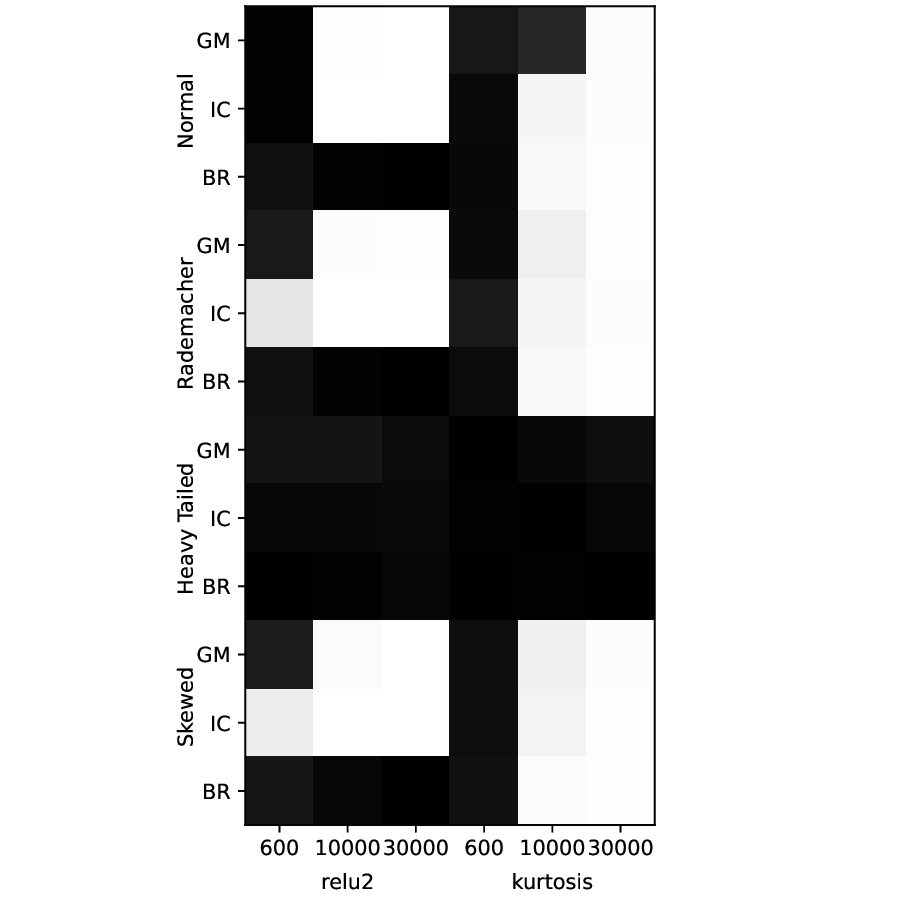}
        \caption{Proposed algorithm}
    \end{subfigure}
    \begin{subfigure}[b]{0.49\textwidth}
        \includegraphics[width=\linewidth]{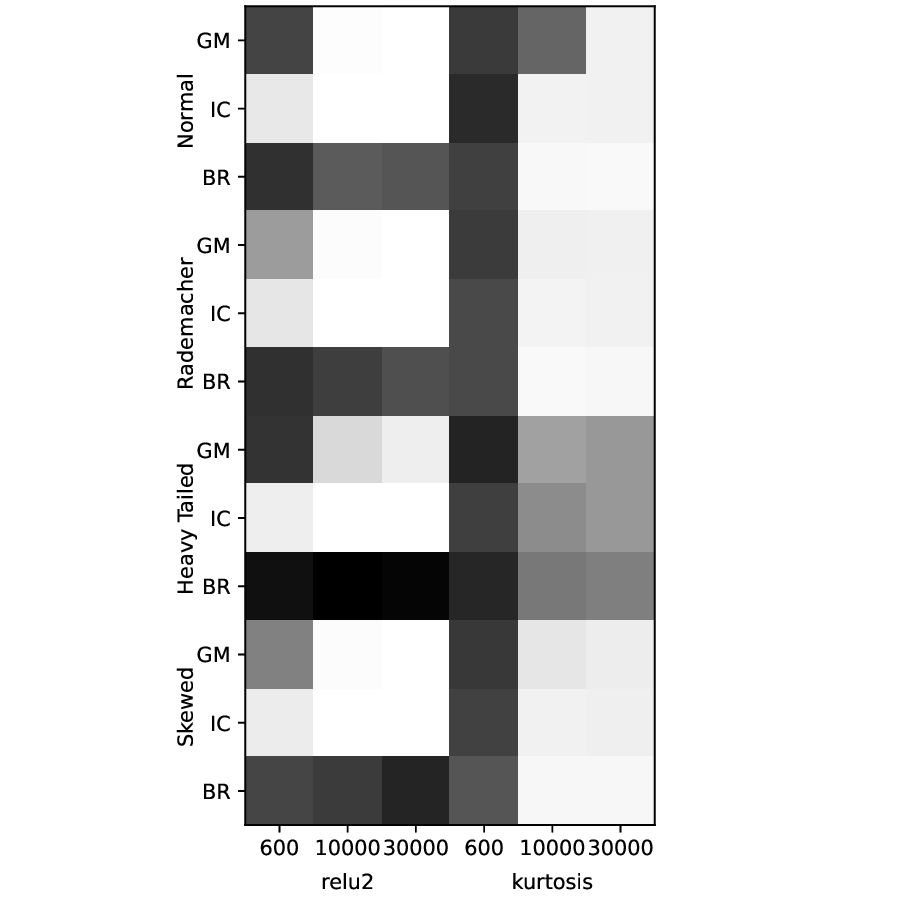}
        \caption{Best direction found}
    \end{subfigure}
    
    \caption{
        On the x-axis: number of samples $n = 600, 10000, 30000$ and projection index \{relu2, kurtosis\}.\\
        On the y-axis: distribution in the signal direction \{ Gaussian Mixture(GM), Imbalanced Clusters(IC), Bernoulli Rademacher(BR) \} and the distribution of the orthogonal space \{ Normal, Rademacher, Heavy Tailed, Skewed\}.\\
        We plot the inner product $\inner{\uhat, \ustar}$ of the recovered direction $\uhat$.
        White pixels denote values close to 1, and black pixels denote values close to 0.
    }
    \label{other_ambient}
\end{figure}

\end{document}